\setlist{leftmargin=5mm}
\newcommand{\bbeta}{{\boldsymbol \beta}}
\newcommand{\bxi}{{\boldsymbol \xi}}
\newcommand{\bW}{{\boldsymbol W}}
\newcommand{\btheta}{{\boldsymbol{\theta}}}
\newcommand{\hP}{\hat\dbP}
\newcommand{\ea}{\end{array}}
\newcommand{\ee}{\end{equation}}
\newcommand{\bea}{\begin{eqnarray}}
\newcommand{\eea}{\end{eqnarray}}
\newcommand{\beaa}{\begin{eqnarray*}}
\newcommand{\eeaa}{\end{eqnarray*}}
\def\hE{\mathbb{E}}
\def\E{\mathbb{E}}
\def\hE{\mathbb{E}}
\def\hN{\mathbb{N}}
\def\hP{\mathbb{P}}
\def\hR{\mathbb{R}}
\newcommand{\basa}{\begin{assumption}}
\newcommand{\easa}{\end{assumption}}
\newcommand{\bas}{\begin{assum}}
\newcommand{\eas}{\end{assum}}
\def\limP2{\,\mathop{\buildrel \Pi_2\over\longrightarrow\,}}
\def\1{{\bf 1}}
\def\:{\!:\!}
\def\hE{\mathbb{E}}
\def\hE{\mathbb{E}}
\def\hE{\mathbb{E}}
\newtheorem{remark}{Remark}
\newtheorem{theorem}{Theorem}
\newtheorem{corollary}{Corollary}
\newtheorem{lemma}{Lemma}
\newtheorem{assumption}{Assumption}
\setlist{leftmargin=5mm}
\def\hE{\mathbb{E}}
\def\E{\mathbb{E}}
\def\hE{\mathbb{E}}
\def\hN{\mathbb{N}}
\def\hP{\mathbb{P}}
\def\hR{\mathbb{R}}
\def\limP2{\,\mathop{\buildrel \Pi_2\over\longrightarrow\,}}
\def\1{{\bf 1}}
\def\:{\!:\!}
\def\hE{\mathbb{E}}
\def\hE{\mathbb{E}}
\def\hE{\mathbb{E}}
\title{Non-reversible Parallel Tempering for Deep Posterior Approximation}
\author{
    Wei Deng\equalcontrib \textsuperscript{\rm 1} \textsuperscript{\rm 2},
    Qian Zhang\equalcontrib \textsuperscript{\rm 1},
    Qi Feng\equalcontrib \textsuperscript{\rm 3},
    Faming Liang \textsuperscript{\rm 1},
    Guang Lin \textsuperscript{\rm 1}
}
\begin{document}

\maketitle

\begin{abstract}
Parallel tempering (PT), also known as replica exchange, is the go-to workhorse for simulations of multi-modal distributions. The key to the success of PT is to adopt efficient swap schemes. The popular deterministic even-odd (DEO) scheme exploits the non-reversibility property and has successfully reduced the communication cost from $O(P^2)$ to $O(P)$ given sufficiently many $P$ chains. However, such an innovation largely disappears in big data due to the limited chains and few bias-corrected swaps. To handle this issue, we generalize the DEO scheme to promote non-reversibility and propose a few solutions to tackle the underlying bias caused by the geometric stopping time. Notably, in big data scenarios, we obtain an appealing communication cost $O(P\log P)$ based on the optimal window size. In addition, we also adopt stochastic gradient descent (SGD) with large and constant learning rates as exploration kernels. Such a user-friendly nature enables us to conduct approximation tasks for complex posteriors without much tuning costs.
\end{abstract}

\section{Introduction}
Langevin diffusion is a standard sampling algorithm that follows a stochastic differential equation
\begin{equation*}
\label{sde}
\begin{split}
    d \bbeta_t &= - \nabla U(\bbeta_t) dt+\sqrt{2\tau} d\bW_t,
\end{split}
\end{equation*}
where $\bbeta_t\in\mathbb{R}^d$, $U(\cdot)$ is the energy function $U(\cdot)$, $\bW_t\in\mathbb{R}^d$ is a Brownian motion, and $\tau$ is the temperature. The diffusion process converges to a stationary distribution $\pi(\bbeta)\propto e^{-\frac{U(\bbeta)}{\tau}}$ and setting $\tau=1$ yields a Bayesian posterior. 
A convex $U(\cdot)$ leads to a rapid convergence \citep{dm+16, dk17}; however, a non-convex $U(\cdot)$ inevitably slows down the mixing rate \citep{Maxim17, SVGD, icsgld, awsgld}. To accelerate simulations, replica exchange Langevin diffusion (reLD) proposes to include a high-temperature particle $\bbeta_t^{(P)}$, where $P\in \mathbb{N}^{+} \setminus \{1\}$, for \emph{exploration}. Meanwhile, a low-temperature particle $\bbeta_t^{(1)}$ is presented for \emph{exploitation}:
\begin{equation}
\label{sde_2_couple}
\begin{split}
    d \bbeta_t^{(P)} &= - \nabla U(\bbeta_t^{(P)}) dt+\sqrt{2\tau^{(P)}} d\bW_t^{(P)}\\ d \bbeta_t^{(1)} &= - \nabla U(\bbeta_t^{(1)}) dt+\sqrt{2\tau^{(1)}} d\bW_t^{(1)},
\end{split}
\end{equation}
where $\tau^{(P)}>\tau^{(1)}$ and $\bW_t^{(P)}$ is independent of $\bW_t^{(1)}$. To promote more explorations for the low-temperature particle, the particles at the position $(\beta^{(1)}, \beta^{(P)})\in\mathbb{R}^{2d}$ swap with a probability $aS(\beta^{(1)}, \beta^{(P)})$, where 
\begin{equation}
\footnotesize
\label{swap_function}
    S(\beta^{(1)}, \beta^{(P)})= 1\wedge e^{ \big(\frac{1}{\tau^{(1)}}-\frac{1}{\tau^{(P)}}\big)\big( U(\beta^{(1)})- U(\beta^{(P)})\big)},
\end{equation}
and $a\in(0,\infty)$ is the swap intensity. To be specific, the conditional swap rate at time $t$ follows that
\begin{equation*}
\footnotesize
\begin{split}
    \mathbb{P}(\bbeta_{t+dt}&=(\beta^{(P)}, \beta^{(1)})|\bbeta_t=(\beta^{(1)}, \beta^{(P)}))\\
    &=a S(\beta^{(1)}, \beta^{(P)}) d t.
\end{split}
\end{equation*}
In the longtime limit, the Markov jump process converges to the joint distribution $\pi(\bbeta^{(1)}, \bbeta^{(P)})\propto e^{-\frac{U(\bbeta^{(1)})}{\tau^{(1)}}-\frac{U(\bbeta^{(P)})}{\tau^{(P)}}}$, where the marginals are denote by $\pi^{(1)}(\bbeta)\propto e^{-\frac{U(\bbeta)}{\tau^{(1)}}}$ and $\pi^{(P)}(\bbeta)\propto e^{-\frac{U(\bbeta)}{\tau^{(P)}}}$.


\begin{figure*}[!ht]
  \centering
  \subfigure[Reversibility]
  {\label{reversible_indx}\includegraphics[scale=0.27]{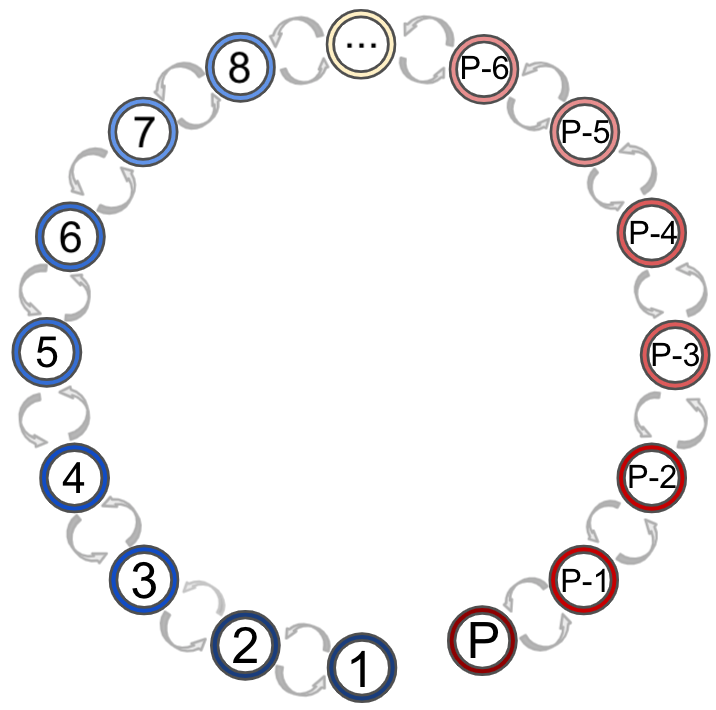}}\qquad
  \subfigure[Non-reversibility]{\label{nonreversible_indx}\includegraphics[scale=0.265]{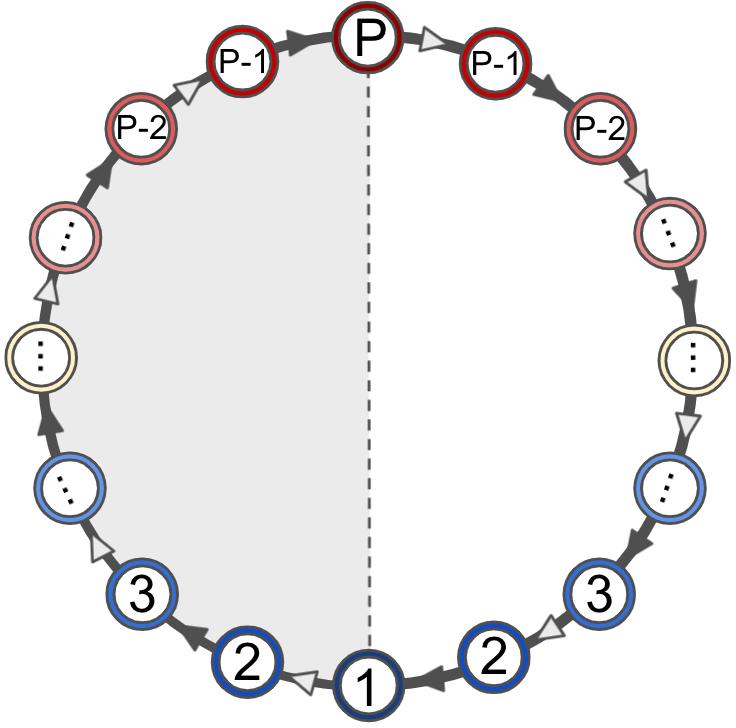}}\qquad
    \subfigure[Non-reversible chains]{\label{nonreversible_indx2}\includegraphics[scale=0.17]{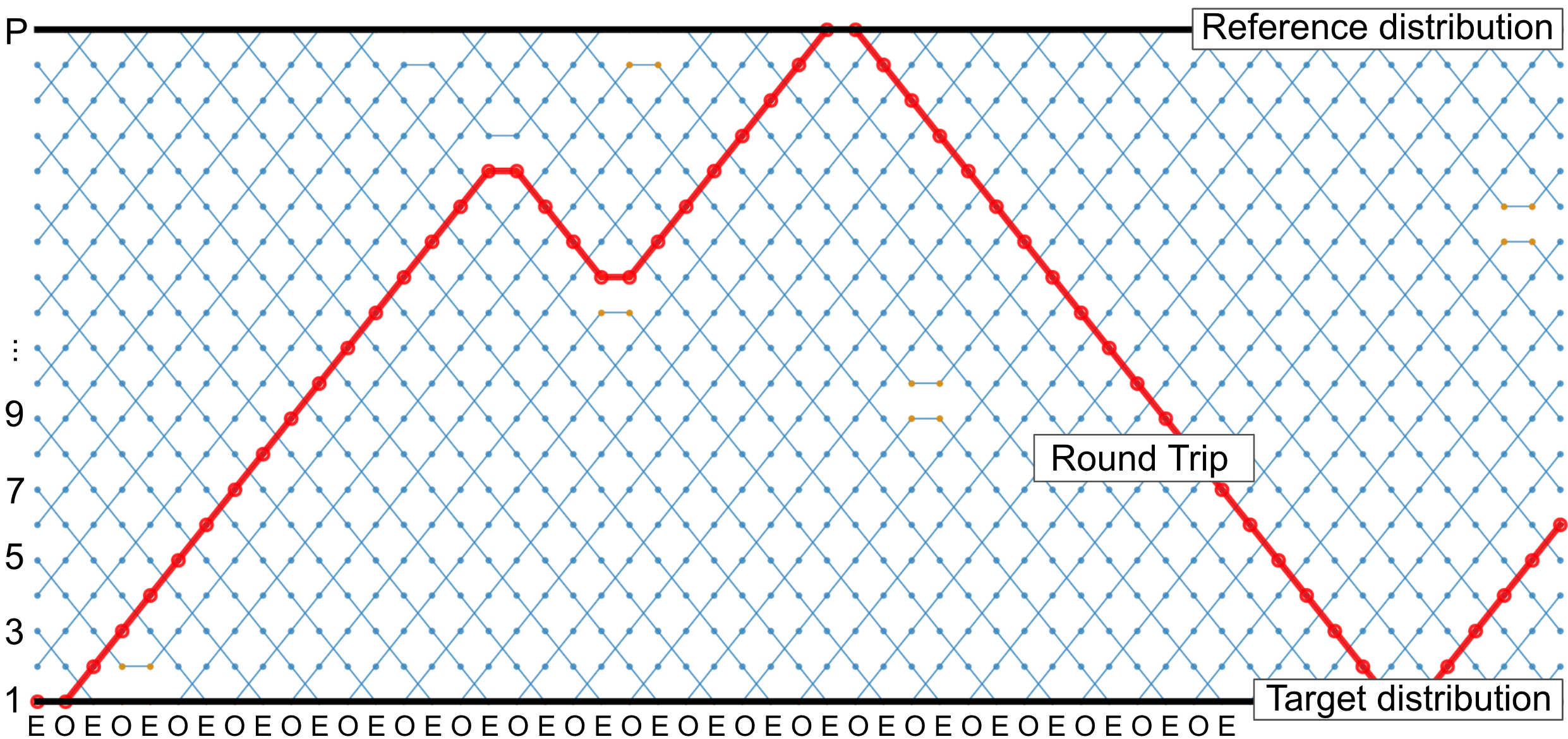}}\enskip
  \caption{(Non-)Reversibility. (a): a reversible index that takes $O(P^2)$ time to communicate; (b): a linear non-reversible index moves along a periodic orbit; 
  (c): non-reversible chains via DEO schemes.}
  \label{illustration}
\end{figure*}
\section{Preliminaries}
Achieving sufficient explorations requires a large $\tau^{(P)}$, which leads to limited accelerations due to a \emph{small overlap} between $\pi^{(1)}$ and $\pi^{(P)}$. To tackle this issue, one can bring in multiple particles with temperatures $(\tau^{(2)}, \cdots, \tau^{(P-1)})$, where $\tau^{(1)}<\tau^{(2)}<\cdots <\tau^{(P)}$, to hollow out ``tunnels''. To  maintain feasibility, numerous schemes are presented to select candidate pairs to attempt the swaps. 
\paragraph{APE} The all-pairs exchange (APE) attempts to swap arbitrary pair of chains \citep{Brenner07, Martin09}, 
however, such a method requires a swap time (see definition in section A.5 (appendix) 
) of $O(P^3)$  and may not be user-friendly in practice.

\paragraph{ADJ} In addition to swap arbitrary pairs, one can also swap \emph{adjacent} (ADJ) pairs iteratively from $(1,2)$, $(2,3)$, to $(P-1,P)$ under the Metropolis rule. Despite the convenience, the \emph{sequential nature} requires to wait for exchange information from previous exchanges, which only works well with a small number of chains and has greatly limited its extension to a multi-core or distributed context.
\paragraph{SEO} The stochastic even-odd (SEO) scheme first divides the adjacent pairs $\{(p-1, p)|p=2,\cdots, P\}$ into $E$ and $O$, where $E$ and $O$ denote even and odd pairs of forms $(2p-1, 2p)$ and $(2p, 2p+1)$, respectively. Then, SEO randomly picks $E$ or $O$ pairs with an equal chance in each iteration to attempt the swaps. Notably, it can be conducted \emph{simultaneously} without waiting from other chains. The scheme yields a reversible process (see Figure \ref{reversible_indx}), however, the gains in overcoming the sequential obstacle don't offset the \emph{$O(P^2)$ round trip time} and SEO is still not effective enough.

\begin{figure*}[!ht]
  \centering
  \subfigure[DEO]{\label{DEO_demo}\includegraphics[width=2.4cm, height=1.8cm]{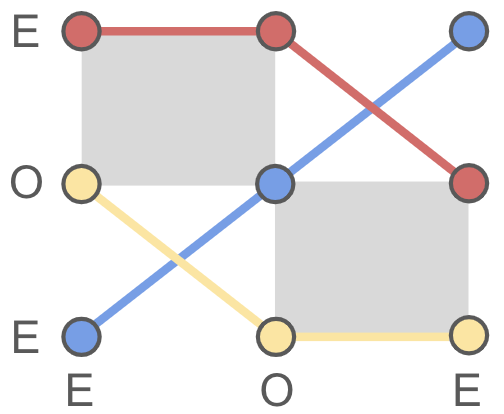}}\qquad
  \subfigure[Bad case 1]{\includegraphics[width=2.4cm, height=1.8cm]{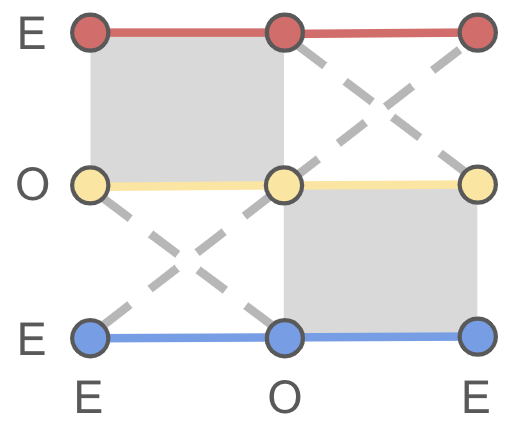}}\qquad
  \subfigure[Bad case 2]{\includegraphics[width=2.4cm, height=1.8cm]{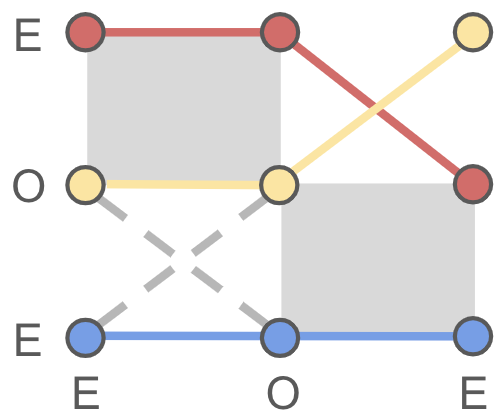}}\qquad
  \subfigure[Bad case 3]{\includegraphics[width=2.4cm, height=1.8cm]{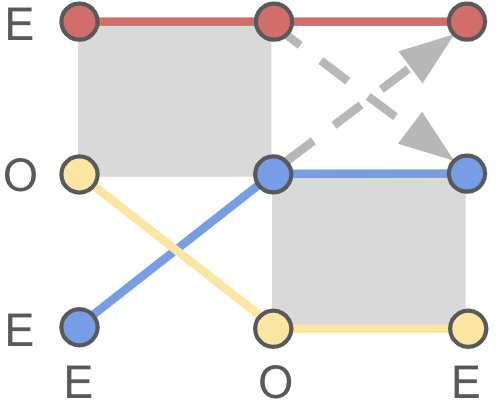}}\qquad
  \subfigure[$\text{DEO}_2$]{\label{DEO2_demo}\includegraphics[width=3.7cm, height=1.8cm]{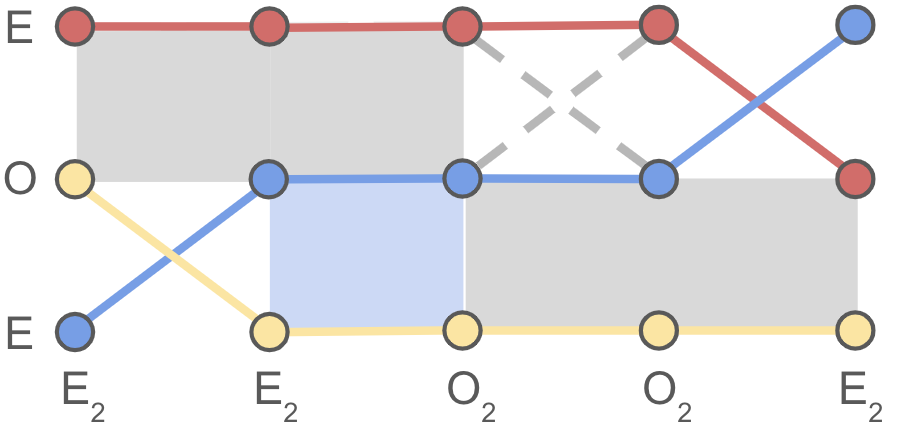}}\qquad
  \caption{Illustration of DEO and DEO$_2$. (a): an ideal DEO scheme; (b-d): failed DEO swaps given a large $r$; (e): how DEO$_2$ tackles the issue. The x-axis and y-axis denote (generalized) $E$ (or $O$) iterations and $E$ (or $O$) pairs, respectively. The dashed line denotes no swap; the gray areas are frozen to refuse swapping odd pairs at even iterations (or vice versa); the blue area freezes swap attempts.}
  \label{illustration_DEO}
\end{figure*}
\paragraph{DEO} The deterministic even-odd (DEO) scheme instead attempts to swap even ($E$) pairs at even ($E$) iterations and odd ($O$) pairs at odd ($O$) iterations alternatingly\footnote[2]{{$E$ ($O$) shown in iterations means even (odd) iterations and denotes even (odd) pairs for chain indexes.}} \citep{DEO}. The asymmetric manner was later interpreted as a non-reversible PT \citep{Syed_jrssb} and an ideal index process follows a periodic orbit, as shown in Figure \ref{nonreversible_indx}. With a large swap rate, Figure \ref{nonreversible_indx2} shows how the scheme yields an almost straight path and a linear round trip time can be expected.
\paragraph{Equi-acceptance} The power of PT hinges on maximizing the number of round trips, which is equivalent to minimizing $\sum_{p=1}^{P-1} \frac{1}{1-r_p}$ \citep{Nadler07_V2}, where $r_p$ denotes the rejection rate for the chain pair $(p, p+1)$. Moreover, $\sum_{p=1}^{P-1} r_p$ converges to a fixed barrier $\Lambda$ as $P\rightarrow \infty$ \citep{Predescu04, Syed_jrssb}. Applying Lagrange multiplies to the constrained optimization problem leads to $r_1=r_2=\cdots=r_{P-1}:=r$, where $r$ is the \emph{equi-rejection rate}. In general, a quadratic round trip time is required for ADJ and SEO due to the reversible indexes. By contrast, DEO only yields a \emph{linear round trip} time in terms of $P$ as $P\rightarrow \infty$ \citep{Syed_jrssb}.

\section{Optimal non-reversible scheme for PT}
The linear round trip time is appealing for maximizing the algorithmic potential, however, such an advance only occurs given sufficiently many chains. In \emph{non-asymptotic settings} with limited chains, a pearl of wisdom is to avoid frequent swaps \citep{Paul12} and to keep the average acceptance rate from 20\% to 40\% \citep{Kone2005, Martin09, Yves10}. Most importantly, the acceptance rates are severely reduced in big data due to the bias-corrected swaps associated with stochastic energies \citep{deng2020}, see details in section A.1 (appendix) 
. As such, maintaining low rejection rates in big data becomes quite challenging and the \emph{issue of quadratic costs} still exists. 

\subsection{Generalized DEO scheme}
Continuing the equi-acceptance settings, we see in Figure \ref{DEO_demo} that the probability for the blue particle to move upward 2 steps to maintain the same momentum after a pair of even and odd iterations is $(1-r)^2$. As such, with a large equi-rejection rate $r$, the blue particle often makes little progress (Figure \ref{illustration_DEO}(b-d)). To handle this issue, the key is to propose small enough rejection rates to track the periodic orbit in Figure \ref{nonreversible_indx}. Instead of pursuing excessive amount of chains, \emph{we resort to a different solution by introducing the generalized even and odd iterations} $E_W$ and $O_W$, where $W\in\mathbb{N}^{+}$, $E_W=\{\lfloor \frac{k}{W}\rfloor \text{ mod } 2 =0|k=1,2,\cdots, \infty\}$ and $O_W=\{\lfloor \frac{k}{W}\rfloor \text{ mod } 2 =1|k=1,2,\cdots, \infty\}$. Now, we present the generalized DEO scheme with a window size $W$ as follows and refer to it as DEO$_W$: \footnote[4]{{The generalized DEO with the optimal window size is denoted by DEO$_{\star}$ and will be studied in the section of Analysis of optimal window size and round trip
time.}}
\begin{equation*}
\begin{split}
    & \footnotesize{\circ \text{ Attempt to swap } E \text{ (or } O\text{) pairs at }E_W\text{ (or }O_W\text{) iterations.}} \qquad\qquad\qquad\qquad\ \ \\
    & \footnotesize{\circ \text{ Allow }\emph{at most one} \text{ swap at }  E_W \text{ (or }O_W\text{) iterations.}}
\end{split}
\end{equation*}
As shown in Figure \ref{DEO2_demo}, the blue particle has a larger chance of $(1-r^2)^2$ to move upward 2 steps given $W=2$ instead of $(1-r)^2$ when $W=1$, although the window number is also halved. Such a trade-off inspires us to analyze the expected round trip time based on the window of size $W$.

\paragraph{How to alleviate the bias} Although allowing at most one swap introduces the geometric stopping of swaps and affects the target distribution (see section 2 of \citep{geo_stop_random_walk}), the bias can be much alleviated empirically by introducing a window-wise correction term. Moreover, it becomes rather mild when the energy estimators have a large variance. Check section C.2 (appendix) 
for the details. For tasks without high-accuracy demands, we propose to ignore the correction term in practice following \citet{Li16} to facilitate the round trip analysis and promote more tractable explorations.



\subsection{Analysis of round trip time}
\label{main_round_trip_analysis}

To bring sufficient interactions between the reference distribution $\pi^{(P)}$ and the target distribution $\pi^{(1)}$, we expect to minimize the expected round trip time $T$ (defined in section A.5 (appendix) 
) to ensure both efficient exploitation and explorations. The non-Markovian nature of the index process makes the analysis challenging. To facilitate the analysis, we treat swap indicators as independent Bernoulli variables following \citet{Syed_jrssb}. Combining the Markov property, we estimate the expected round trip time $\hE[T]$ as follows:
\begin{lemma}
\label{thm:round_trip_time}
Under the stationary and weak independence assumptions B1 and B2 in section B (appendix) 
, for $P$ ($P\ge 2$) chains with window size $W$ ($W\ge 1$) and rejection rates $\{r_p\}_{p=1}^{P-1}$, we have
\begin{align} \label{eq:RTT_main}
{\hE[T]=2WP+\underbrace{2WP\sum_{p=1}^{P-1}\frac{r_p^W}{1-r_p^W}}_{\text{Quadratic term in }P}}.
\end{align}
\end{lemma}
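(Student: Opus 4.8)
The plan is to lift the (non-Markovian) index process to a Markov chain that tracks, for a single temperature label, both its rung $p\in\{1,\dots,P\}$ and its momentum $\epsilon\in\{+1,-1\}$, and then to read off $\hE[T]$ from a first-passage analysis of this chain. First I would use B1 and B2 to compress one window of $W$ iterations into a single transition. Within an $E_W$ (resp.\ $O_W$) window the same family of pairs is attempted $W$ times but at most one swap is executed; treating the $W$ swap indicators for pair $p$ as independent $\mathrm{Bernoulli}(1-r_p)$ variables (B2) and invoking stationarity (B1), the label advances one rung in its current direction with probability $1-r_p^{W}$ (at least one acceptance) and, with probability $r_p^{W}$ (all $W$ rejections), stays put and reverses momentum, because the even/odd schedule then offers it only the opposite-facing pair at the next window. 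At the two extreme rungs the label reverses deterministically, consuming exactly one window. This yields a time-homogeneous chain on $\{1,\dots,P\}\times\{\pm1\}$ in which one step equals $W$ iterations.

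Next I would realize the round trip as the sum of an ascending first passage (from $(1,+1)$ to the first hitting of rung $P$) and a descending first passage (from the arrival state $(P,+1)$ back to $(1,+1)$). Writing $E_p^{\pm}$ for the expected number of windows to completion from state $(p,\pm1)$, first-step analysis gives the coupled system
\begin{align*}
E_p^{+} &= 1+(1-r_p^{W})\,E_{p+1}^{+}+r_p^{W}\,E_p^{-}, & 1\le p\le P-1,\\
E_p^{-} &= 1+(1-r_{p-1}^{W})\,E_{p-1}^{-}+r_{p-1}^{W}\,E_p^{+}, & 2\le p\le P,
\end{align*}
together with the absorbing condition at rung $P$ and the reflecting identity $E_1^{-}=1+E_1^{+}$ at the bottom (and the mirror-image system for the descending leg). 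I would then eliminate the $E_p^{-}$ variables to obtain a second-order recursion for the increments $E_p^{+}-E_{p+1}^{+}$, solve it by telescoping from the absorbing boundary, and repeat for the descending leg.

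The hard part will be the final bookkeeping. Each leg on its own produces position-dependent weights on the $\tfrac{r_p^{W}}{1-r_p^{W}}$ terms: for $P=3$, for instance, the ascending leg weights pair $1$ by $2$ and pair $2$ by $4$, while the descending leg does the reverse, because a rejection deep inside a leg spawns an excursion that recrosses the lower pairs and also triggers extra boundary reflections. I expect the crux to be proving that these asymmetric per-leg weights always sum to the uniform value $2P$, so that adding the two legs collapses the answer to $2P\bigl(1+\sum_{p=1}^{P-1}\tfrac{r_p^{W}}{1-r_p^{W}}\bigr)$ windows. An induction on $P$ should deliver this cancellation.

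An alternative route to the same constant, which may be cleaner to justify the $2P$ factor, is a flow argument: over one round trip let $N_p$ and $F_p$ be the numbers of successful crossings and failed attempts at pair $p$, so that $\mathbb{E}[F_p]=\tfrac{r_p^{W}}{1-r_p^{W}}\,\mathbb{E}[N_p]$ by the geometric structure of attempts, and then account separately for the boundary reflections, noting that each failure near an endpoint induces one reflection. Either way, once the window count $2P\bigl(1+\sum_{p=1}^{P-1}\tfrac{r_p^{W}}{1-r_p^{W}}\bigr)$ is established, multiplying by the $W$ iterations per window yields \eqref{eq:RTT_main}.
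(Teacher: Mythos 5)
Your plan is correct and follows essentially the same route as the paper's proof: lifting the index process to a Markov chain on (rung, direction) with one step per window and per-window rejection probability $r_p^W$, writing exactly the same first-step equations for the two legs, solving by telescoping the increments, and observing that the ascending weights $2p$ and descending weights $2(P-p)$ sum to the uniform $2P$. The only cosmetic difference is that the paper resolves the "bookkeeping" by introducing the quantities $a_p=(1-r_{p-1}^W)(u_{p,1}-u_{p-1,-1})$, which satisfy the clean recursion $a_{p+1}-a_p=-2$, rather than an induction on $P$.
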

The proof in section B.1 
shows that $\E[T]$ increases as we adopt larger number of chains $P$ and rejection rates $\{r_p\}_{p=1}^{P-1}$. In such a case, the round trip rate $\frac{P}{\hE[T]}$ is also maximized by the key renewal theorem. In particular, applying $W=1$ recovers the vanilla DEO scheme.

\subsection{Analysis of optimal window size and round trip time}
\label{windiw_size_round_trip}
By Lemma \ref{thm:round_trip_time}, we observe a potential to remove the quadratic term given an appropriate $W$. Such a fact motivates us to study the optimal $W$ to achieve the best efficiency. Under the equi-acceptance settings, by treating $W$ as a continuous variable and taking the derivative with respect to $W$, we have
\begin{equation} \label{eq:dWRTT_main}
\begin{split}
\frac{\partial}{\partial W}\hE[T]&=\frac{2P}{(1-r^W)^2}\bigg\{(1-r^W)^2 \\
&+(P-1)r^W(1-r^W+W\log r)\bigg\},
\end{split}
\end{equation}
where $r$ is the equi-rejection rate for adjacent chains. Define $x:=r^W\in(0, 1)$, where $W=\log_r(x)=\frac{\log x}{\log r}$. The following analysis hinges on the study of the solution of $g(x)=(1-x)^2+(P-1)x(1-x+\log(x))=0$. By analyzing the growth of derivatives and boundary values, we can identify the \emph{uniqueness} of the solution. Then, we proceed to verify that $\frac{1}{P\log P}$ yields an asymptotic approximation such that $g(\frac{1}{P\log P})=-\frac{\log(\log P)}{\log P}+O\left(\frac{1}{\log P}\right)\rightarrow 0$ as $P\rightarrow \infty$. In the end, we have

\begin{theorem} \label{col:W_approx}
Under Assumptions B1 (Stationarity) and B2 (Weak independence) based on equi-acceptance settings, if $P=2,3$, the minimal round trip time is achieved when $W=1$. If $P\ge 4$, with the optimal window size $W_{\star}\approx\left\lceil\frac{\log P+\log\log P}{-\log r}\right\rceil$, where $\lceil\cdot\rceil$ is the ceiling function. The round trip time follows $O(\frac{P\log P}{-\log r})$. 
\end{theorem}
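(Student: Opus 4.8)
The plan is to reduce the integer optimization of $\hE[T]$ over $W$ to a sign analysis of a single scalar function on the unit interval. Under equi-acceptance Lemma~\ref{thm:round_trip_time} reads $\hE[T]=2WP\big(1+(P-1)\tfrac{r^W}{1-r^W}\big)$, and the derivative in \eqref{eq:dWRTT_main} exhibits the strictly positive factor $\tfrac{2P}{(1-r^W)^2}$ multiplying $g(r^W)$. Since $r\in(0,1)$, the map $W\mapsto x:=r^W$ is a decreasing bijection from $(0,\infty)$ onto $(0,1)$, so the sign of $\partial_W\hE[T]$ at $W$ equals the sign of $g(x)$ at $x=r^W$, and the monotonicity of $\hE[T]$ in $W$ is entirely dictated by where $g$ is positive or negative on $(0,1)$. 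First I would record the two boundary values $g(0^+)=1>0$ and $g(1)=0$.

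For $P=2,3$ the function collapses to elementary forms, namely $g(x)=1-x+x\log x$ and $g(x)=1-x^2+2x\log x$ respectively; differentiating and invoking $\log x\le x-1$ shows $g'<0$ on $(0,1)$, so $g$ decreases strictly from $1$ to $0$ and hence $g>0$ throughout. This makes $\partial_W\hE[T]>0$ for every $W>0$, so $\hE[T]$ is increasing and its minimum over $W\in\mathbb{N}^{+}$ is attained at $W=1$, giving the first claim.

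The heart of the argument is the case $P\ge 4$, where I expect the \emph{uniqueness} of the interior root of $g$ to be the main obstacle. Here I would study $g''(x)=\tfrac{P-1}{x}-2(P-2)$, which changes sign exactly once at $x_0=\tfrac{P-1}{2(P-2)}\in(\tfrac12,1)$, so $g$ is convex then concave and $g'$ is unimodal with $g'(1)=0$; this forces $g'$ to have a single zero $x_1$, making $g$ decreasing then increasing with a unique minimum. A Taylor expansion about $x=1$ gives $g(1-\epsilon)=\tfrac{3-P}{2}\epsilon^2+O(\epsilon^3)<0$ for $P\ge4$, so $g$ is negative just left of $1$; combined with $g(0^+)=1>0$ this yields a unique root $x_\star\in(0,x_1)$ with $g>0$ on $(0,x_\star)$ and $g<0$ on $(x_\star,1)$. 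Translating back through $x=r^W$, the derivative is negative for $W<W_\star$ and positive for $W>W_\star$, so the continuous minimizer is $W_\star=\tfrac{-\log x_\star}{-\log r}$.

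It remains to locate $x_\star$ and read off the cost. From $g(x_\star)=0$ and the smallness of $x_\star$ I would extract $(P-1)x_\star(-\log x_\star-1)\approx 1$; writing $y=-\log x_\star$ this becomes $(y-1)e^{-y}\approx 1/P$, whose asymptotic inversion gives $y=\log P+\log\log P+o(1)$, equivalently the verification that $g\big(\tfrac{1}{P\log P}\big)=-\tfrac{\log\log P}{\log P}+O\big(\tfrac{1}{\log P}\big)\to 0$. Hence $W_\star\approx\big\lceil\tfrac{\log P+\log\log P}{-\log r}\big\rceil$, the ceiling accounting for integrality and changing $W$ by less than one, which does not affect the order. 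Finally $(P-1)x_\star\to 0$ forces the bracket $1+(P-1)\tfrac{x_\star}{1-x_\star}\to 1$, so $\hE[T]\approx 2W_\star P=O\big(\tfrac{P\log P}{-\log r}\big)$, completing the theorem. The two delicate points are the convex–concave and unimodality bookkeeping that pins down a single root, and the asymptotic inversion of $(y-1)e^{-y}\approx 1/P$.
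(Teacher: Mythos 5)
Your proposal is correct and follows essentially the same route as the paper's proof in Section B.2: reduce to the sign of $g(x)=(1-x)^2+(P-1)x(1-x+\log x)$ via the substitution $x=r^W$, handle $P=2,3$ by showing $g>0$ on $(0,1)$ using $\log x\le x-1$, establish the unique sign change for $P\ge 4$ through the convex--concave structure of $g$ (the paper's Lemma on uniqueness of the solution), and verify $x_\star\approx\frac{1}{P\log P}$ to read off $W_\star$ and the $O\bigl(\frac{P\log P}{-\log r}\bigr)$ cost. Your only cosmetic departures are deducing $g<0$ near $x=1$ from the Taylor expansion $g(1-\epsilon)=\frac{3-P}{2}\epsilon^2+O(\epsilon^3)$ (the paper instead uses monotonicity of $g$ on $(x_0,1)$ together with $g(1)=0$) and framing the root location as an asymptotic inversion of $(y-1)e^{-y}\approx 1/P$ rather than direct substitution; both yield the same conclusion.
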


The above result yields a remarkable round trip time of $O(P\log P)$ by setting the optimal window size $W_{\star}$. By contrast, the vanilla DEO scheme only leads to a longer time of $O(P^2)$ \footnote[4]{By Taylor expansion, given a large rejection rate $r$, $-\log(r)=1-r$, which means $\frac{1}{-\log(r)}=O(\frac{r}{1-r})$.}. Denoting by $\text{DEO}_{\star}$ the generalized DEO scheme with the optimal window size $W_{\star}$, we summarize the popular swap schemes in Table \ref{round_trip_time_cost}, where the DEO$_{\star}$ scheme performs the best among all the three criteria. We acknowledge that the assumptions are inevitably strong to simplify the analysis \citep{Syed_jrssb} due to the intractable index process. Empirically, assumption B1 approximately holds after a sufficient burn-in period; more in-depth discussions on the robustness of assumption B2 have also been evaluated in section 7.2 of \citet{Syed_jrssb}.

\begin{table*}[ht]
\begin{sc}
\caption[Table caption text]{{Round trip time and swap time for different schemes. Notably, non-asymptotic refers to cases with large rejection rates due to a limited number of chains; asymptotic occurs given sufficiently many chains such that rejection rates are close to 0. The APE scheme requires an expensive swap time of $O(P^3)$ and is not compared. }} \label{round_trip_time_cost}
\begin{center} 
\begin{tabular}{c|cccc}
\hline
 & \textbf{\footnotesize{Round trip time} \scriptsize{(non-asymptotic)}} & \footnotesize{Round trip time} \scriptsize{(asymptotic)} & \footnotesize{Swap time} \\
\hline
ADJ & $O(P^2)$ \scriptsize{\citep{Nadler07}} & $O(P^2)$ \scriptsize{\citep{Nadler07}} & $O(P)$ \\
SEO & $O(P^2)$ \scriptsize{\citep{Syed_jrssb}} & $O(P^2)$ \scriptsize{\citep{Syed_jrssb}} & \bm{$O(1)$} \\
DEO & $O(P^2)$ \scriptsize{\citep{Syed_jrssb}} & \bm{$O(P)$} \scriptsize{\citep{Syed_jrssb}}  & \bm{$O(1)$} \\
\hline
${\text{DEO}_{\star}}$  & $\bm{O(P\log P)}$ & \bm{$O(P)$}  & \bm{$O(1)$} \\
\hline
\end{tabular}
\end{center}
\end{sc}
\end{table*}


\subsection{Discussions on the optimal number of chains}
\label{main_optimal_chain}

Note that in practice given $P$ parallel chains, a large $P$ leads to a smaller equi-rejection rate $r$. As such, we can further obtain a crude estimate of the optimal $P$ to minimize the round trip time.
\begin{corollary} \label{col:P_approx}
Under Assumptions B1-B4 and C1 under equi-acceptance settings with the optimal window size, the optimal chains follow that $P_{\star}> \min_p \frac{\sigma_p}{3\tau^{(p)}} \log(\frac{\tau^{(P)}}{\tau^{(1)}})$, where $\sigma_p$ is defined in Eq.(14) (appendix) 
.
\end{corollary}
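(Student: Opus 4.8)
The plan is to turn the round trip time of Theorem~\ref{col:W_approx} into a function of $P$ alone by letting the equi-rejection rate $r$ depend on $P$, and then minimize over $P$. First I would invoke Theorem~\ref{col:W_approx}: under the equi-acceptance setting with the optimal window size $W_\star$, the expected round trip time satisfies $\hE[T]=O\!\left(\frac{P\log P}{-\log r}\right)$. The whole corollary then hinges on expressing the equi-rejection rate $r$ as a function of $P$, so that the trade-off between the explicit growth $P\log P$ and the implicit decay of $r$ (hence growth of $-\log r$) can be optimized.

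The crux is the second step, where I would use Assumptions B3--B4 and C1 together with Eq.~(14) to relate $r$ to $P$. Placing the temperatures on a near-geometric ladder over $[\tau^{(1)},\tau^{(P)}]$, the inverse-temperature gap obeys $\frac{1}{\tau^{(p)}}-\frac{1}{\tau^{(p+1)}}\approx\frac{1}{\tau^{(p)}}\cdot\frac{\log(\tau^{(P)}/\tau^{(1)})}{P}$ for large $P$. Under the Gaussian/CLT approximation of the stochastic swap exponent, whose standard deviation is controlled by the energy-estimator noise $\sigma_p$ of Eq.~(14), the adjacent rejection rate is, to leading order, $r_p\approx\frac{1}{\sqrt{2\pi}}\big(\tfrac{1}{\tau^{(p)}}-\tfrac{1}{\tau^{(p+1)}}\big)\sigma_p\approx\frac{C_p}{P}$ with $C_p:=\frac{1}{\sqrt{2\pi}}\,\frac{\sigma_p}{\tau^{(p)}}\log(\tau^{(P)}/\tau^{(1)})$. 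Imposing equi-acceptance forces a common $r\approx C/P$ with $C$ comparable to the $C_p$.

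Third, I would substitute $r\approx C/P$, so that $-\log r\approx\log P-\log C$ and $\hE[T]\asymp\frac{P\log P}{\log P-\log C}$. Treating $P$ as continuous and differentiating, the stationarity condition reduces to $(\log P)^2=\log C\,(\log P+1)$, whose asymptotic solution is $\log P_\star\approx\log C$, i.e. $P_\star\approx C$. Finally, since $C$ is known only up to the constant $\frac{1}{\sqrt{2\pi}}$ and the per-pair constants $C_p$ differ across $p$, I would bound crudely by replacing $\frac{1}{\sqrt{2\pi}}\approx0.399$ with $\frac13$ and retaining the worst (smallest) pair, yielding the stated lower bound $P_\star>\min_p\frac{\sigma_p}{3\tau^{(p)}}\log\!\big(\frac{\tau^{(P)}}{\tau^{(1)}}\big)$.

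The main obstacle is the second step: rigorously tying the equi-rejection rate to $1/P$ with the correct constant $\frac{\sigma_p}{\tau^{(p)}}\log(\tau^{(P)}/\tau^{(1)})$ is delicate, since it rests on both the Gaussian approximation of the stochastic swap exponent and the geometric-schedule expansion of the inverse-temperature gaps; once this scaling is in hand, the final minimization is routine and the coarseness of the constants is exactly why the conclusion is stated as a strict lower bound with a $\min_p$ rather than an equality.
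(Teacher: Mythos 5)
There is a genuine gap, and it sits exactly where you flagged it: your step relating the equi-rejection rate to $P$. You posit $r_p\approx\frac{1}{\sqrt{2\pi}}\bigl(\frac{1}{\tau^{(p)}}-\frac{1}{\tau^{(p+1)}}\bigr)\sigma_p\approx C_p/P$, attributing the $1/P$ decay of the rejection rate to the mini-batch energy noise $\sigma_p$ of Eq.~(14). This conflates two distinct quantities. In the paper's argument the full-batch rejection rate is $\tilde r=\frac{\Lambda}{P}+O(P^{-3})$ (from \citet{Syed_jrssb}), where $\Lambda$ is the communication barrier determined by the overlap of $\pi^{(1)}$ and $\pi^{(P)}$ --- it does not involve $\sigma_p$ at all. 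The estimator noise $\sigma_p$ enters separately, as a \emph{multiplicative penalty on the acceptance rate}: the mini-batch swap rate is $O\bigl(\tilde s\exp\bigl(-\partial T_p^2\sigma_p^2/8\bigr)\bigr)$ with $\partial T_p=\frac{1}{\tau^{(p)}}-\frac{1}{\tau^{(p+1)}}$, so $\sigma_p$ appears \emph{quadratically inside an exponential}, not linearly as a prefactor. Under the geometric spacing assumption B4 this gives $r=1-O\bigl((1-\Lambda/P)e^{-\mu_p^2/P^2}\bigr)$ with $\mu_p=\frac{\sigma_p\log(\tau^{(P)}/\tau^{(1)})}{2\sqrt{2}\,\tau^{(p+1)}}$, i.e.\ a rejection rate that is \emph{large} for moderate $P$ --- which is the entire premise of the big-data regime the corollary addresses --- rather than your $r\approx C/P\to 0$.

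This changes the function being minimized. The paper obtains $\hE[T]=O\bigl(P\log P\,e^{\mu_p^2/P^2}\bigr)$ and balances the polynomial growth against the decaying exponential, yielding $P_\star^2(1+1/\log P_\star)=2\mu_p^2$ and hence $P_\star>1.1\min_p\mu_p>\min_p\frac{\sigma_p}{3\tau^{(p+1)}}\log(\tau^{(P)}/\tau^{(1)})$ (the constant $1.1/(2\sqrt 2)\approx 0.389>1/3$). Your objective $\frac{P\log P}{\log P-\log C}$ has a completely different shape (a singularity at $P=C$ and eventual linear growth), and your stationarity condition $(\log P)^2=\log C(\log P+1)$ actually gives $P_\star\approx eC$, not $P_\star\approx C$. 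That your constant $\frac{1}{\sqrt{2\pi}}\approx 0.399$ happens to land near the paper's $0.389$ and both exceed $\frac13$ is a numerical coincidence, not evidence that the scaling $r\asymp\sigma_p\,\partial T_p$ is the right model; to repair the argument you need both ingredients --- the barrier expansion for $\tilde r$ and the exponential mini-batch correction --- assembled as in Eqs.~(24)--(35) of the appendix.
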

The assumptions and proof are postponed in section B.3 (appendix) 
. In mini-batch settings, insufficient chains may lead to few effective swaps for accelerations; by contrast, introducing too many chains may be too costly in terms of the round trip time. This is different from the conclusion in full-batch settings, where \citet{Syed_jrssb} suggested running the vanilla DEO scheme with as many chains as possible to yield a small enough equi-rejection rate $r$ to maintain the non-reversibility. 

\paragraph{Cutoff phenomenon} On the one hand, when we only afford at most $P$ chains, where $P< P_{\star}$, a large equi-rejection rate $r$ is inevitable and DEO$_{\star}$ is preferred over DEO; on the other hand, the rejection rate $r$ goes to $0$ when $P\gg P_{\star}$ and DEO$_{\star}$ recovers the DEO scheme. 

In section B.4 (appendix) 
, we show that $P_{\star}$ is in the \emph{order of thousands} for the CIFAR100 example, which is hard to achieve due to the limited computational budget and further motivates us to adopt finite chains with a target swap rate $\mathbb{S}$ to balance between acceleration and accuracy.

\begin{algorithm}[tb]
  \small
  \caption{Non-reversible parallel tempering with SGD-based exploration kernels (DEO$_{\star}$-SGD).}
  \label{alg}
\begin{algorithmic}
\STATE{\textbf{Input} Number of chains $P\geq 3$, boundary learning rates $\eta^{(1)}$ and $\eta^{(P)}$, target swap rate $\mathbb{S}$.}
\STATE{\textbf{Input} Optimal window size $W:=\left\lceil \frac{\log P + \log\log P}{-\log(1-\mathbb{S})}\right\rceil$, total iterations $K$, and step sizes $\{\gamma_{k}\}_{k=0}^K$.}

\FOR{ $k=0$ \text{ to } $K$ } 
    \STATE{$\bbeta_{k+1}\sim \mathcal{T}_{\eta}(\bbeta_{k})$ following Eq.(\ref{pt_sgd}) 
     \quad$\triangleright$ \textcolor{blue}{Sampling phase}
    }
    
    \STATE{$\mathcal{P}=\big\{\forall p\in\{1, 2,\cdots, P\}: p \text{ mod } 2 = \lfloor \frac{k}{W}\rfloor \text{ mod } 2\big\}$.}

    \FOR{ $p=1,2$ \text{ to } $P-1$ }
        \STATE{$\mathcal{A}^{(p)}:=1_{\widetilde U( \bbeta_{k+1}^{(p+1)})+\mathbb{C}_k<\widetilde U( \bbeta_{k+1}^{(p)})}$}

        \IF{k\text{ mod } W=0}
        \STATE{\emph{Open: }$\mathcal{G}^{(p)}=1$. \quad $\triangleright$ 
        \textcolor{blue}{Open the gate to allow swaps}}
        \ENDIF
        
        \IF{$p\in \mathcal{P}$ \text{ and } $\mathcal{G}^{(p)}$ \text{ and } $\mathcal{A}^{(p)}$}

        \STATE{\emph{Swap: } $ \bbeta_{k+1}^{(p)}$ and $ \bbeta_{k+1}^{(p+1)}$. \ \ $\triangleright$ \textcolor{blue}{Communication phase}}
        \STATE{ \emph{Freeze: } $\mathcal{G}^{(p)} = 0.$   $\triangleright$ \textcolor{blue}{Close the gate to refuse swaps}}
        \ENDIF
        \IF{$p>1$}
        \STATE{\emph{Update learning rate (temperature) following Eq.(\ref{double_SA})}}
        \ENDIF
    \ENDFOR
    
    \STATE{\emph{Correction: } $\mathbb{C}_{k+1}=\mathbb{C}_k+\gamma_{k}\left(\frac{1}{P-1}\sum_{p=1}^{P-1} \mathcal{A}^{(p)}-\mathbb{S}\right).$}
\ENDFOR
\STATE{\textbf{Output } Target models $\{\bbeta_{k}^{(1)}\}_{k=1}^{K}$.}
\end{algorithmic}
\end{algorithm}

\section{User-friendly approximate explorations in big Data}

Despite the asymptotic correctness, stochastic gradient Langevin dynamics \citep{Welling11} (SGLD) only works well given \emph{small enough learning rates} and fails in explorative purposes \citep{Ahn12}. A large learning rate, however, leads to excessive stochastic gradient noise and ends up with a crude approximation. As such, similarly to \citet{SWA1, ruqi2020}, we only adopt SGLD for exploitations.

Efficient explorations not only require a high temperature but also prefer a large learning rate. Such a demand inspires us to consider SGD with a constant learning rate $\eta$ as the exploration component
\begin{equation}
\begin{split}\label{sgd_approx}
\footnotesize
    \bbeta_{k+1} &=\bbeta_{k} - \eta \nabla \widetilde U(\bbeta_k)=\bbeta_{k} - \eta(\nabla U(\bbeta_k)+\varepsilon(\bbeta_k))\\
    &=\bbeta_{k} - \eta\nabla U(\bbeta_k)-\sqrt{2\eta \big(\frac{\eta}{2}\big)}\varepsilon(\bbeta_k),
\end{split}
\end{equation}
where $\widetilde U(\cdot)$ is the unbiased energy estimate of $U(\cdot)$ and  $\varepsilon(\bbeta_k)\in\mathbb{R}^d$ is the stochastic gradient noise with mean 0. Under mild normality assumptions on $\varepsilon$ \citep{Mandt, SGD_AOS}, $\bbeta_{k}$ converges approximately to an invariant distribution, where the underlying \emph{temperature linearly depends on the learning rate $\eta$}. Motivated by this fact, we propose an approximate transition kernel $\mathcal{T}_{\eta}$ with $P$ parallel \emph{SGD runs} based on different learning rates
\begin{equation}  
\begin{split}
\footnotesize
\label{pt_sgd}
\textbf{Exploration: }\left\{  
             \begin{array}{lr}  
             \footnotesize{\small{\bbeta_{k+1}^{(P)}=\bbeta_{k}^{(P)} - \eta^{(P)}\nabla \widetilde U(\bbeta_k^{(P)})}}, \\  
              \\
              \footnotesize{\cdots}  \\
             \footnotesize{\bbeta_{k+1}^{(2)} =\bbeta_{k}^{(2)} - \eta^{(2)}\nabla \widetilde U(\bbeta_k^{(2)})},\\
             \end{array}
\right. \qquad\qquad\qquad\qquad\  & \\
\textbf{Exploitation: } \ \quad\footnotesize{\bbeta_{k+1}^{(1)}=\bbeta_{k}^{(1)} - \eta^{(1)}\nabla \widetilde U(\bbeta_k^{(1)})\ \ +\overbrace{\Xi_k}^{\text{optional}},\qquad\qquad\quad\quad}\ &\\
\end{split}
\end{equation}
where $\eta^{(1)}<\eta^{(2)}<\cdots<\eta^{(P)}$, $\Xi_k\sim \mathcal{N}(0, 2\eta^{(1)}\tau^{(1)})$, and $\tau^{(1)}$ is the target temperature.


Since there exists an optimal learning rate for SGD to estimate the desired distribution through Laplace approximation \citep{Mandt}, the exploitation kernel can be also replaced with SGD based on constant learning rates if the accuracy demand is not high. Regarding the validity of adopting different learning rates for parallel tempering, we leave discussions to section A.2 (appendix) 
.

\subsection{Approximation analysis}
Moreover, the stochastic gradient noise exploits the Fisher information \citep{Ahn12, zhanxing_anisotropic, Chaudhari17} and yields convergence potential to wide optima with good generalizations \citep{Berthier, difan_2021}. Despite the implementation convenience, the inclusion of SGDs has made the temperature variable inaccessible, rendering a difficulty in implementing the Metropolis rule Eq.(\ref{swap_function}). To tackle this issue, we utilize the randomness in stochastic energies and propose a \emph{deterministic swap condition} for the approximate kernel $\mathcal{T}_{\eta}$ in Eq.(\ref{pt_sgd}) 
\begin{equation}
\small
\begin{split}
    \label{deterministic_swap}
    &\textbf{Deterministic swap condition: } \text{ if } \widetilde U(\bbeta^{(p+1)})+\mathbb{C}<\widetilde U(\bbeta^{(p)})\\
    &\quad\qquad\qquad\footnotesize{(\bbeta^{(p)}, \bbeta^{(p+1)})\rightarrow (\bbeta^{(p+1)}, \bbeta^{(p)}) },
\end{split}
\end{equation}
where $p\in\{1,2,\cdots, P-1\}$, $\mathbb{C}>0$ is a correction buffer to approximate the Metropolis rule Eq.(\ref{swap_function}).

\begin{lemma}\label{thres_approx} Assume the energy normality assumption (C1), then for any fixed $\partial U_p:=U(\bbeta^{(p)})- U(\bbeta^{(p+1)})$, there exists an optimal $\mathbb{C}_{\star}\in (0, (\frac{1}{\tau^{(p)}}-\frac{1}{\tau^{(p+1)}})\sigma_p^2]$  that perfectly approximates the random event $\widetilde S(\bbeta^{(p)}, \bbeta^{(p+1)})>u$, where $\sigma_p$ defined in Eq.(14) (appendix) 
and $u\sim\text{Unif}\ [0, 1]$.
\end{lemma}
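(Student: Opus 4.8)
The plan is to reduce the claim to a one–dimensional root problem for the buffer $\mathbb{C}$ and then trap the root between $0$ and the right endpoint $\lambda_p\sigma_p^2$, where I abbreviate $\lambda_p:=\tfrac{1}{\tau^{(p)}}-\tfrac{1}{\tau^{(p+1)}}>0$. Under the energy–normality assumption (C1) the stochastic energy gap $\wt{\partial U}_p:=\wt U(\bbeta^{(p)})-\wt U(\bbeta^{(p+1)})$ is Gaussian with mean $\partial U_p$ and variance $\sigma_p^2$, so (with $\Phi$ the standard normal c.d.f.) the probability that the deterministic condition $\wt U(\bbeta^{(p+1)})+\mathbb{C}<\wt U(\bbeta^{(p)})$ fires is
\[
F(\mathbb{C}):=\mathbb{P}\!\left(\wt{\partial U}_p>\mathbb{C}\right)=\Phi\!\left(\tfrac{\partial U_p-\mathbb{C}}{\sigma_p}\right),
\]
a continuous, strictly decreasing bijection from $\mathbb{R}$ onto $(0,1)$. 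The quantity to be matched is the Metropolis acceptance probability of Eq.~\eqref{swap_function} evaluated at the true energies, namely $S:=1\wedge e^{\lambda_p\partial U_p}$, which is what the realization of the random event $\wt S(\bbeta^{(p)},\bbeta^{(p+1)})>u$ is meant to reproduce through the energy noise; on the nontrivial range $\partial U_p<0$ one has $S=e^{\lambda_p\partial U_p}\in(0,1)$. Since $F$ is a decreasing bijection onto $(0,1)$ and $S\in(0,1)$, the intermediate value theorem immediately furnishes a unique $\mathbb{C}_\star$ solving $F(\mathbb{C}_\star)=S$, giving existence and uniqueness.

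It then remains to localize $\mathbb{C}_\star$, which by monotonicity of $F$ is equivalent to the sandwich $F(\lambda_p\sigma_p^2)\le S\le F(0)$. For the right endpoint I would prove $F(\lambda_p\sigma_p^2)\le S$ directly: writing $F(\lambda_p\sigma_p^2)=\Phi(-t)$ with $t=\lambda_p\sigma_p-\partial U_p/\sigma_p>0$ and invoking the Gaussian tail bound $\Phi(-t)\le\tfrac12 e^{-t^2/2}$, completing the square gives
\[
F(\lambda_p\sigma_p^2)\le\tfrac12\,e^{\lambda_p\partial U_p}\,e^{-\frac12\lambda_p^2\sigma_p^2-\frac{\partial U_p^2}{2\sigma_p^2}}\le\tfrac12\,e^{\lambda_p\partial U_p}\le S,
\]
so $\mathbb{C}_\star\le\lambda_p\sigma_p^2$ for every $\partial U_p$. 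The shift $\lambda_p\sigma_p^2$ appearing here is precisely the exponential–tilting (Girsanov) shift of the Gaussian, which is why it is the natural right endpoint; this same tilting identity, $\E[e^{\lambda_p\wt{\partial U}_p}\mathbf{1}_{\{\wt{\partial U}_p<0\}}]=e^{\lambda_p\partial U_p+\frac12\lambda_p^2\sigma_p^2}\Phi(\tfrac{-\partial U_p-\lambda_p\sigma_p^2}{\sigma_p})$, is the computational workhorse for the closed forms.

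I expect the genuinely delicate step to be the left inequality $S\le F(0)=\Phi(\partial U_p/\sigma_p)$, i.e.\ positivity of the correction $\mathbb{C}_\star$. This pits an exponential weight $e^{\lambda_p\partial U_p}$ against a Gaussian c.d.f., and whether it holds hinges on the precise calibration of $\sigma_p$ (Eq.~(14)) against $\lambda_p$: it is exactly the equi-acceptance / low-overlap regime targeted by the paper (large energy variance, where the text notes the bias ``becomes rather mild'') that makes $\Phi(\partial U_p/\sigma_p)$ dominate $e^{\lambda_p\partial U_p}$ and forces $\mathbb{C}_\star>0$. I would therefore establish positivity by substituting the Eq.~(14) expression for $\sigma_p$ into this inequality and verifying it on the operating range of $\partial U_p$, and I would treat the saturating case $\partial U_p\ge 0$ (where $S=1$ and the match is understood as a boundary/limiting statement) separately. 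Pinning down this lower endpoint under the stated assumptions is the main obstacle; once both endpoint inequalities are in hand, monotonicity of $F$ closes the argument and the remainder is routine Gaussian–integral bookkeeping.
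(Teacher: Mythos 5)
There is a genuine gap, and it originates in the choice of target quantity. The lemma asks you to match the probability of the random event $\widetilde S(\bbeta^{(p)},\bbeta^{(p+1)})>u$, where $\widetilde S$ is the \emph{bias-corrected stochastic} swap function of Eq.~(15) in the appendix, i.e.\ $\widetilde S = 1\wedge e^{\partial T_p(\widetilde U(\bbeta^{(p)})-\widetilde U(\bbeta^{(p+1)}))-\partial T_p^2\sigma_p^2}$ with $\partial T_p=\frac{1}{\tau^{(p)}}-\frac{1}{\tau^{(p+1)}}$; the relevant probability $\mathbb{P}(\widetilde S>u\mid \partial U_p)$ averages over \emph{both} the uniform $u$ and the Gaussian energy noise. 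You instead calibrate $F(\mathbb{C})$ against the noiseless full-batch Metropolis probability $S=1\wedge e^{\partial T_p\,\partial U_p}$, arguing that the noisy event is ``meant to reproduce'' it. These are different numbers --- the paper itself notes $\mathbb{E}[\widetilde S]=O\big(S\,e^{-\partial T_p^2\sigma_p^2/8}\big)$, which is much smaller than $S$ when $\sigma_p$ is large --- so your $\mathbb{C}_\star$ solves a different equation. The damage shows up exactly where you flag it: with your target, the left-endpoint inequality $S\le F(0)$ (i.e.\ $\mathbb{C}_\star>0$) genuinely fails for $\partial U_p\ge 0$ (where $S=1>\Phi(\partial U_p/\sigma_p)$) and is not guaranteed for moderate negative $\partial U_p$, whereas the lemma claims the interval $(0,\partial T_p\sigma_p^2]$ for \emph{any} fixed $\partial U_p$. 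You cannot close this by ``substituting Eq.~(14) and checking the operating range''; the obstruction is structural. (A secondary slip: by Eq.~(14) the energy-difference estimator has variance $2\sigma_p^2$, so the Gaussian should be scaled by $\sqrt{2}\sigma_p$, not $\sigma_p$.)

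The paper's proof avoids all of this by computing $\mathbb{P}(\widetilde S>\mu\mid\partial U_p)=\int_0^1\mathbb{P}\big(\xi>\frac{-\partial U_p+\partial T_p\sigma_p^2+\frac{\log u}{\partial T_p}}{\sqrt 2\,\sigma_p}\big)\,du$ and comparing it with the deterministic-swap probability $\mathbb{P}\big(\xi>\frac{-\partial U_p+\mathbb{C}}{\sqrt 2\,\sigma_p}\big)$. The difference $\Delta(\mathbb{C})$ of these two is split at $u_0=e^{\partial T_p\mathbb{C}-\partial T_p^2\sigma_p^2}$, the value of $u$ at which the two thresholds coincide, so the integrand is $+D(u,\mathbb{C})$ below $u_0$ and $-D(u,\mathbb{C})$ above. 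At $\mathbb{C}=\partial T_p\sigma_p^2$ one gets $u_0=1$ and hence $\Delta\ge 0$; as $\mathbb{C}\to-\infty$ one gets $u_0\to 0$ and $\Delta\le 0$; continuity and the intermediate value theorem place a root in $(-\infty,\partial T_p\sigma_p^2]$ for every $\partial U_p$, with the strict positivity of $\mathbb{C}_\star$ then argued (admittedly informally) from the construction. If you want to salvage your write-up, replace your target $S$ by $\mathbb{P}(\widetilde S>u\mid\partial U_p)$ and rerun your monotone-bijection/IVT argument; your observation that $\partial T_p\sigma_p^2$ is the exponential-tilting shift is exactly why that endpoint is the natural right boundary, and the tilting identity you quote is the right tool for evaluating the $u$-integral in closed form.
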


The proof is postponed in section C.1 (appendix) 
, which paves the way for the guarantee that a \emph{deterministic swap condition} may replace the Metropolis rule Eq.(\ref{swap_function}) for approximations. In addition, the normality assumption can be naturally extended to the asymptotic normality assumption \citep{Matias19, deng_VR} given large enough batch sizes.  

Admittedly, the approximation error still exists for different $\partial U_p$. By the mean-value theorem, there exists a tunable $\mathbb{C}$ to optimize the overall approximation. Further invoking the central limit theorem such that $\varepsilon(\cdot)$ in Eq.(\ref{sgd_approx}) approximates a Gaussian distribution with a fixed covariance, we can expect a bounded approximation error for the SGD-based exploration kernels \citep{Mandt}.

\begin{theorem}\label{bound_approx}
Consider the exact transition kernel $\mathcal{T}$ and the proposed approximate kernel $\mathcal{T}_{\eta}$, 
which yield stationary distributions $\pi$ and $\pi_{\eta}$, respectively. Under smoothness (C2) and dissipativity assumptions (C3) \citep{mattingly02, Maxim17, Xu18}, $\mathcal{T}$ satisfies the geometric ergodicity such that there is a contraction constant $\rho\in [0, 1)$ for any distribution $\mu$:
\begin{equation*}
    \|\mu\mathcal{T}-\pi\|_{\text{TV}}\leq \rho \|\mu-\pi\|_{\text{TV}},
\end{equation*}
where $\|\cdot\|_{\text{TV}}$ is the total variation (TV) distance. Moreover, assume that $\varepsilon(\cdot)\sim\mathcal{N}(0, \mathcal{M})$ for some positive definite matrix $\mathcal{M}$ (C4) \citep{Mandt}, then there is a uniform upper bound of the one step error between $\mathcal{T}$ and $\mathcal{T}_{\eta}$ such that 
\begin{equation*}
    \|\mu\mathcal{T}-\mu\mathcal{T}_{\eta}\|_{\text{TV}}\leq \Delta_{\max}, \forall \mu,
\end{equation*}
where $\Delta_{\max}\geq 0$ is a constant. Eventually, the TV distance between $\pi$ and $\pi_{\eta}$ is bounded by
\begin{equation*}
    \|\pi-\pi_{\eta}\|_{\text{TV}}\leq \frac{\Delta_{\max}}{1-\rho}.
\end{equation*}
\end{theorem}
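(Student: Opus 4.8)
The plan is to treat the two displayed inequalities in the statement as the substantive inputs and derive the final TV bound by a standard Markov-chain perturbation (contraction) argument. The geometric ergodicity of $\mathcal{T}$ is the consequence of the smoothness and dissipativity assumptions (C2), (C3) together with the cited drift/minorization results, and the uniform one-step error bound $\|\mu\mathcal{T}-\mu\mathcal{T}_{\eta}\|_{\text{TV}}\leq \Delta_{\max}$ follows from the Gaussian-noise assumption (C4) via Lemma \ref{thres_approx} controlling the swap discrepancy plus a comparison of the SGD and SGLD exploration updates. I would take both of these as established and concentrate the argument on assembling them into the bound on $\|\pi-\pi_{\eta}\|_{\text{TV}}$.

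The crux is to exploit the stationarity of both measures, namely $\pi=\pi\mathcal{T}$ and $\pi_{\eta}=\pi_{\eta}\mathcal{T}_{\eta}$. Starting from $\|\pi-\pi_{\eta}\|_{\text{TV}}=\|\pi-\pi_{\eta}\mathcal{T}_{\eta}\|_{\text{TV}}$, I would insert the intermediate term $\pi_{\eta}\mathcal{T}$ and apply the triangle inequality to get
\begin{equation*}
\|\pi-\pi_{\eta}\|_{\text{TV}}\leq \|\pi-\pi_{\eta}\mathcal{T}\|_{\text{TV}}+\|\pi_{\eta}\mathcal{T}-\pi_{\eta}\mathcal{T}_{\eta}\|_{\text{TV}}.
\end{equation*}
The second term is bounded directly by $\Delta_{\max}$ upon specializing the uniform one-step error bound to $\mu=\pi_{\eta}$. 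For the first term, using $\pi=\pi\mathcal{T}$ and then applying geometric ergodicity with $\mu=\pi_{\eta}$ yields $\|\pi-\pi_{\eta}\mathcal{T}\|_{\text{TV}}=\|\pi_{\eta}\mathcal{T}-\pi\|_{\text{TV}}\leq \rho\|\pi_{\eta}-\pi\|_{\text{TV}}$.

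Combining the two estimates produces the self-referential inequality $\|\pi-\pi_{\eta}\|_{\text{TV}}\leq \rho\|\pi-\pi_{\eta}\|_{\text{TV}}+\Delta_{\max}$, and since the TV distance is finite (bounded by $1$) the term $\rho\|\pi-\pi_{\eta}\|_{\text{TV}}$ can be moved to the left, giving $(1-\rho)\|\pi-\pi_{\eta}\|_{\text{TV}}\leq \Delta_{\max}$ and hence the claimed bound $\|\pi-\pi_{\eta}\|_{\text{TV}}\leq \Delta_{\max}/(1-\rho)$, which is well defined because $\rho\in[0,1)$. I do not expect the final assembly to be the difficult part; it is a short contraction argument. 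The genuine obstacles are the two inputs: verifying that the coupled SGD/SGLD kernel $\mathcal{T}$ inherits a single contraction constant $\rho$ (the dissipativity-based ergodicity must survive the swap moves of the index process), and establishing the \emph{uniform-in-$\mu$} one-step bound $\Delta_{\max}$, where the care lies in controlling both the discretization/stochastic-gradient bias of each replica and the deterministic-swap approximation error uniformly over all initial distributions.
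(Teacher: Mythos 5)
Your final assembly step is correct and is, in substance, identical to what the paper does: the paper invokes Lemma~\ref{approx_bias} (a restated perturbation lemma from the literature) as a black box, whereas you derive it directly via the decomposition $\|\pi-\pi_{\eta}\|_{\text{TV}}\leq\|\pi-\pi_{\eta}\mathcal{T}\|_{\text{TV}}+\|\pi_{\eta}\mathcal{T}-\pi_{\eta}\mathcal{T}_{\eta}\|_{\text{TV}}$, the stationarity identities $\pi=\pi\mathcal{T}$ and $\pi_{\eta}=\pi_{\eta}\mathcal{T}_{\eta}$, and the rearrangement permitted by finiteness of the TV distance and $\rho\in[0,1)$. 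That two-line argument is sound and arguably preferable to a citation.

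The gap is that the two displayed inequalities you ``take as established'' are not hypotheses of the theorem but part of its conclusion, and the paper's proof spends essentially all of its length on them. For the contraction, the paper combines the geometric ergodicity of the continuous-time replica exchange Langevin diffusion under (C2)--(C3) with the exponential decay of the KL divergence and Pinsker's inequality to extract a single $\rho$ that accounts for the swap mechanism. For the uniform one-step bound $\Delta_{\max}$, the paper decomposes the error into four stages --- discretization of reLD by reSGLD, the bias introduced by restricting to at most one swap per window (the geometric stopping time), the replacement of the Metropolis rule by the deterministic swap condition via Lemma~\ref{thres_approx}, and the Laplace approximation of SGLD by SGD under (C4) --- and then integrates the resulting pointwise kernel discrepancy against an arbitrary $\mu$ to get a bound independent of $\mu$. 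You correctly identify these as the genuine obstacles, but a complete proof of Theorem~\ref{bound_approx} as stated must carry out (or at least cite and assemble) each of these four estimates; in particular the uniformity in $\mu$ of the one-step bound, which your final step consumes at $\mu=\pi_{\eta}$, is exactly what the integration argument in section~\ref{approx_upper_bound} is there to supply.
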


The proof is postponed to section C.2 (appendix) 
. The SGD-based exploration kernels \emph{no longer require to fine-tune the temperatures} directly and naturally inherits the empirical successes of SGD in large-scale deep learning tasks. The inaccessible Metropolis rule Eq.(\ref{swap_function}) is approximated via the \emph{deterministic swap condition} Eq.(\ref{deterministic_swap}) and leads to robust approximations by \emph{tuning $\bm{\eta}=(\eta^{(1)},\cdots, \eta^{(P)})$ and $\mathbb{C}$}. 

In addition, our proposed algorithm for posterior approximation also relates to non-convex optimization. For detailed discussions, we refer interested readers to section A.4 (appendix) 
.

\subsection{Equi-acceptance parallel tempering on optimized paths}

Stochastic approximation (SA) is a standard method to achieve equi-acceptance \citep{Yves10, Miasojedow_2013}, however, implementing this idea with fixed $\eta^{(1)}$ and $\eta^{(P)}$ is rather non-trivial. Motivated by the linear relation between learning rate and temperature, we propose to adaptively \emph{optimize the learning rates} to achieve equi-acceptance in a user-friendly manner. Further by the geometric temperature spacing commonly adopted by practitioners \citep{Kofke02, parallel_tempering05, Syed_jrssb}, we adopt the following scheme on a \emph{logarithmic scale}
\begin{equation}
\label{sa_exp}
    \partial \log(\upsilon_t^{(p)})=h^{(p)}(\upsilon_t^{(p)}),
\end{equation}
where $p\in\{1,2,\cdots, P-1\}$, $\upsilon_t^{(p)}=\eta_t^{(p+1)}-\eta_t^{(p)}$, $h^{(p)}(\upsilon_t^{(p)}):=\int H^{(p)}(\upsilon_k^{(p)}, \bbeta) \pi^{(p, p+1)}(d\bbeta)$ is the mean-field function, $\pi^{(p, p+1)}$ is the joint invariant distribution for the $p$-th and $p+1$-th processes.  In particular, $H^{(p)}(\upsilon_k^{(p)}, \bbeta)=1_{\widetilde U( \bbeta^{(p+1)})+\mathbb{C}<\widetilde U( \bbeta^{(p)})}-\mathbb{S}$ is the random-field function to approximate $h^{(p)}(\upsilon_k^{(p)})$ with limited perturbations, $\upsilon_k^{(p)}$ \footnote[2]{$\upsilon_t^{(p)}$ denotes a continuous-time diffusion at time $t$ and $\upsilon_k^{(p)}$ is a discrete approximation at iteration $k$.} implicitly affects the distribution of the indicator function, and $\mathbb{S}$ is the target swap rate.  Now consider stochastic approximation of Eq.(\ref{sa_exp}), we have
\begin{equation}
\label{sa_exp_euler}
    \log(\upsilon_{k+1}^{(p)})=\log(\upsilon_{k}^{(p)}) + \gamma_k H^{(p)}(\upsilon_k^{(p)}, \bbeta_k),
\end{equation}
where $\gamma_k$ is the step size. Reformulating Eq.(\ref{sa_exp_euler}), we have
\begin{equation*}
\label{sa_exp_taylor}
    \upsilon_{k+1}^{(p)} = \max(0, \upsilon_{k}^{(p)})e^{\gamma_k H(\upsilon_k^{(p)})},
\end{equation*}
where the $\max$ operator is conducted explicitly to ensure the sequence of learning rates is non-decreasing. This means given fixed boundary learning rates (temperatures) $\eta_k^{(p-1)}$ and $\eta_k^{(p+1)}$, applying $\eta^{(p)}=\eta^{(p-1)}+\upsilon^{(p)}$ and $\eta^{(p)}=\eta^{(p+1)}-\upsilon^{(p+1)}$ for $p\in\{2,3,\cdots, P-1\}$ lead to
\begin{equation}
\begin{split}
    \label{double_SA_forward_backward}
    \eta_{k+1}^{(p)}&=\underbrace{\eta_k^{(p-1)}+ \max(0, \upsilon_{k}^{(p)})e^{\gamma_k H(\upsilon_k^{(p)})}}_{\text{forward sequence}}\\
    &=\underbrace{\eta_k^{(p+1)}- \max(0, \upsilon_{k}^{(p+1)})e^{\gamma_k H(\upsilon_k^{(p+1)})}}_{\text{backward sequence}}.
\end{split}
\end{equation}

\paragraph{Adaptive learning rates (temperatures)} Now given a fixed $\eta^{(1)}$, the sequence $\eta^{(2)}$, $\eta^{(3)}$, $\cdots$, $\eta^{(P)}$ can be approximated iteratively via the forward sequence of (\ref{double_SA_forward_backward}); conversely, given a fixed $\eta^{(P)}$, the backward sequence $\eta^{(P-1)}$, $\eta^{(P-2)}$, $\cdots$, $\eta^{(1)}$ can be decided reversely as well. Combining the forward and backward sequences, $\eta_{k+1}^{(p)}$ can be approximated via 
\begin{equation}
\small
\begin{split}
    \label{double_SA}
    &\eta_{k+1}^{(p)}:=\frac{\eta_k^{(p-1)}+\eta_k^{(p+1)}}{2}+\\
    &\frac{\max(0, \upsilon_{k}^{(p)})e^{\gamma_k H(\upsilon_k^{(p)})} - \max(0, \upsilon_{k}^{(p+1)})e^{\gamma_k H(\upsilon_k^{(p+1)})}}{2},
\end{split}
\end{equation}
which resembles the \emph{binary search} in the SA framework. In particular, the first term is the middle point given boundary learning rates and the second term continues to penalize learning rates that violates the equi-acceptance between pairs $(p-1, p)$ and $(p, p+1)$ until an equilibrium is achieved. 

This is the first attempt to achieve equi-acceptance given two fixed boundary values to our best knowledge. By contrast, \citet{Syed_jrssb} proposed to estimate the barrier $\Lambda$ to determine the temperatures and it easily fails in big data given a finite number of chains and bias-corrected swaps.  


\paragraph{Adaptive correction buffers} In addition, equi-acceptance does not guarantee a convergence to the desired  acceptance rate $\mathbb{S}$. To avoid this issue, we propose to adaptively optimize $\mathbb{C}$ as follows
\begin{equation}\label{unknown_threhold}\small
    \mathbb{C}_{k+1}=\mathbb{C}_k+\gamma_k\left(\frac{1}{P-1}\sum_{p=1}^{P-1} 1_{\widetilde U( \bbeta_{k+1}^{(p+1)})+\mathbb{C}_k-\widetilde U( \bbeta_{k+1}^{(p)})<0}-\mathbb{S}\right).
\end{equation}

As $k\rightarrow \infty$, the threshold and the adaptive learning rates converge to the desired fixed points. Note that setting a uniform $\mathbb{C}$ greatly simplifies the algorithm.
Now we refer to the approximate non-reversible parallel tempering algorithm with the DEO$_{\star}$ scheme and SGD-based exploration kernels as DEO$_{\star}$-SGD and formally formulate our algorithm in Algorithm \ref{alg}. Extensions of SGD with a preconditioner \citep{Li16} or momentum \citep{Chen14} to further improve the approximation and efficiency are both straightforward \citep{Mandt} and are denoted as DEO$_{\star}$-pSGD and DEO$_{\star}$-mSGD, respectively.


\section{Experiments}
\subsection{Simulations of multi-modal distributions}

We first simulate the proposed algorithm on a distribution $\pi(\bbeta)\propto \exp(-U(\bbeta))$, where $\bbeta=(\beta_1, \beta_2)$, $U(\bbeta)=0.2(\beta_1^2+\beta_2^2)-2(\cos(2\pi \beta_1)+\cos(2\pi \beta_2))$. The heat map is shown in Figure \ref{truth_density} with 25 modes of different volumes. To mimic big data scenarios, we can only access stochastic gradient $\nabla\widetilde U(\bbeta)=\nabla U(\bbeta)+2\mathcal{N}(0, \bm{I}_{2\times2})$ and stochastic energy $\widetilde U(\bbeta)=U(\bbeta)+2\mathcal{N}(0, I)$.


\begin{figure}[!ht]
  \centering
  \vskip -0.05in
  \subfigure[\scriptsize{Truth} ]{\label{truth_density}\includegraphics[width=1.7cm, height=1.7cm]{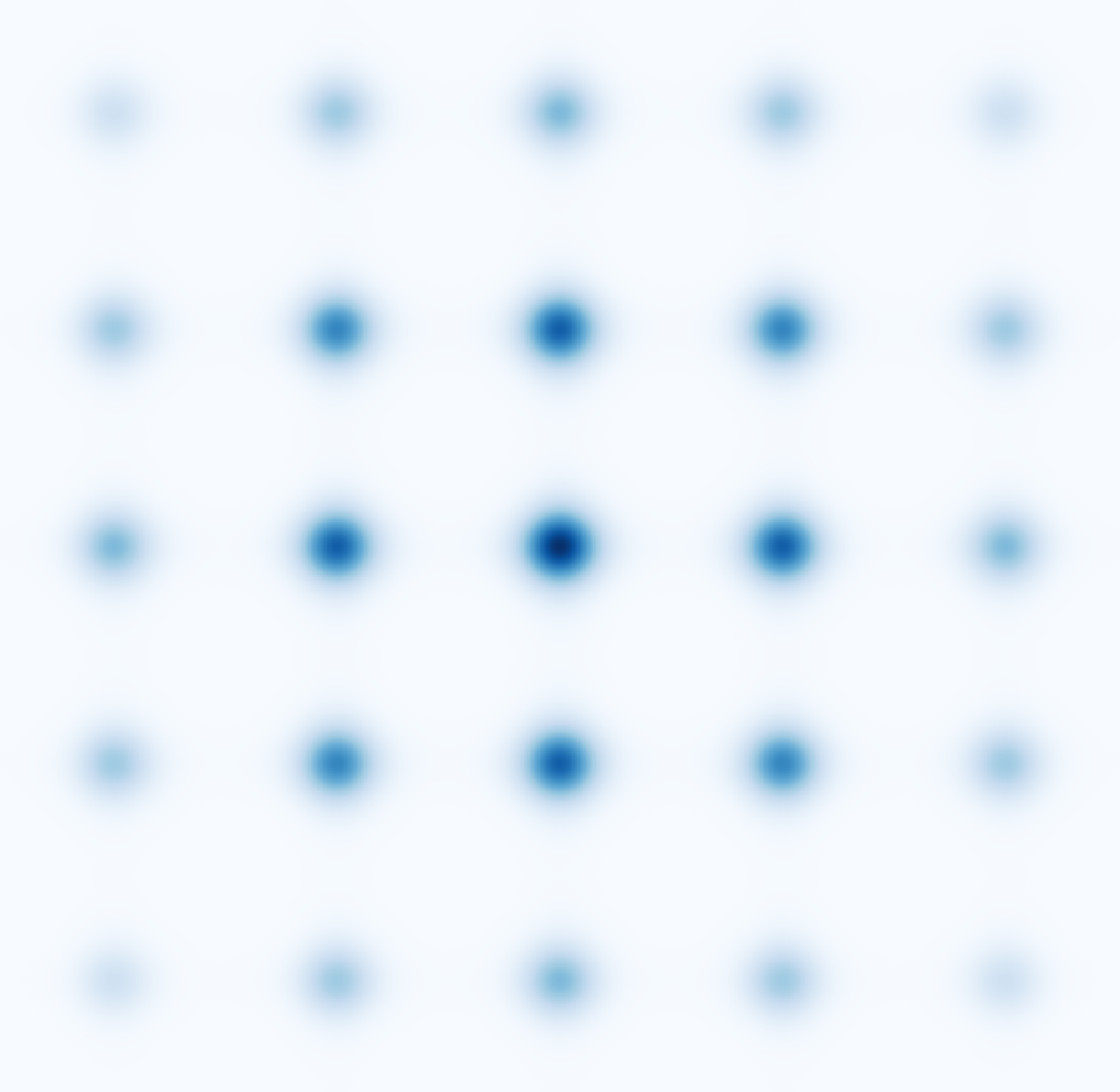}}\ \ 
  \subfigure[$\mathbb{S}=0.2$]{\includegraphics[width=1.7cm, height=1.7cm]{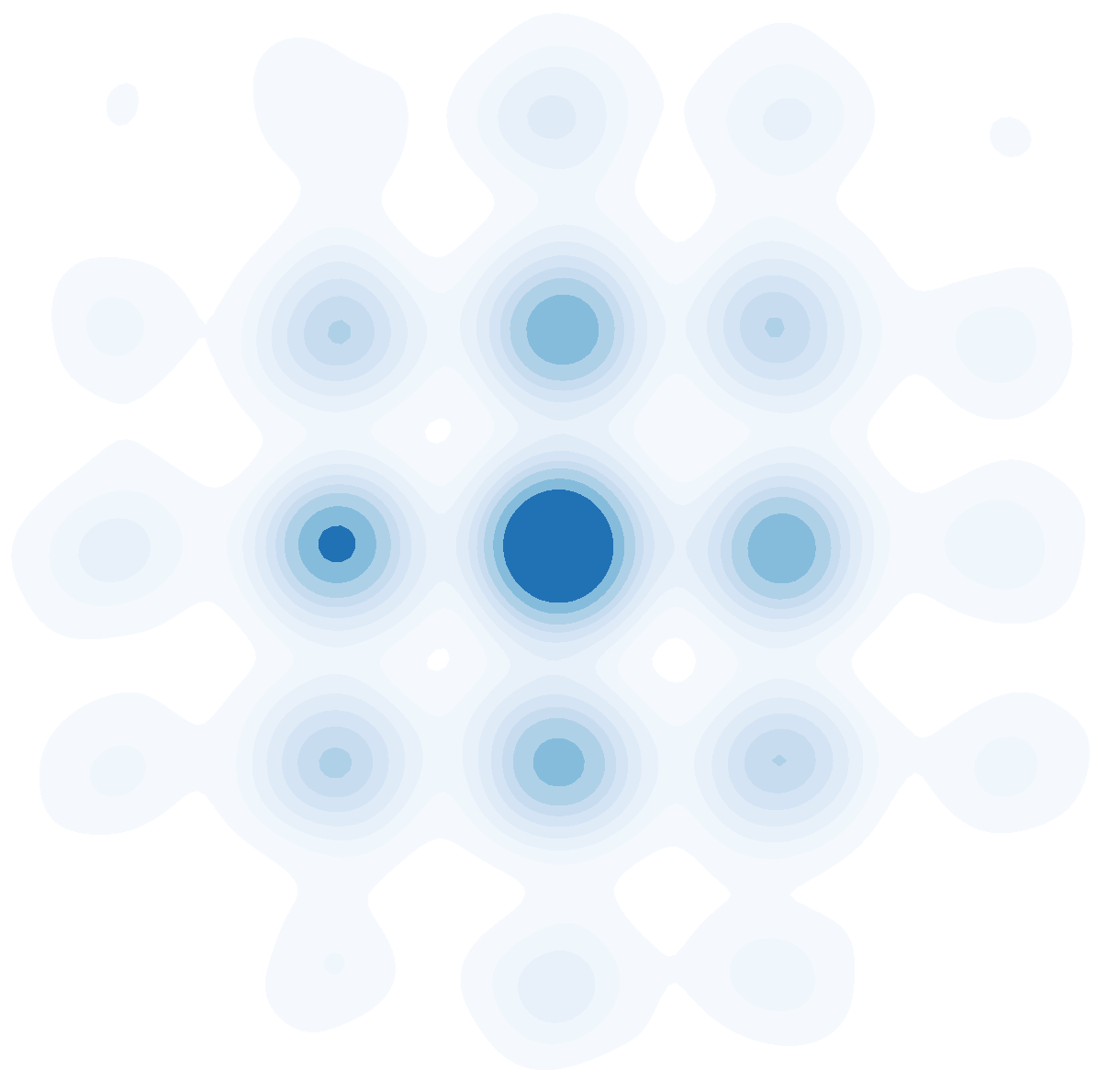}} \ \ 
  \subfigure[$\mathbb{S}=0.4$]{\includegraphics[width=1.7cm, height=1.7cm]{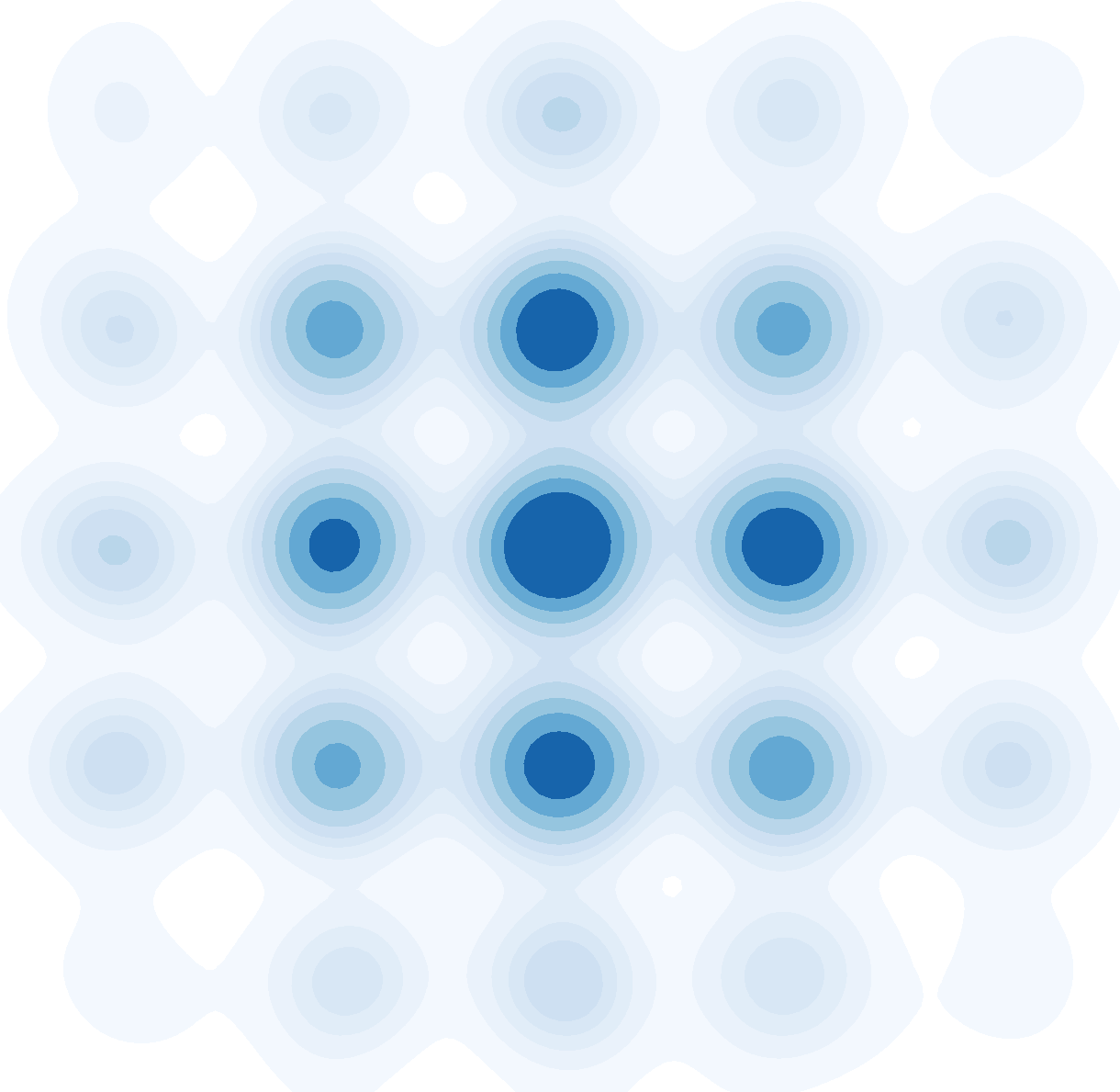}} \ \ 
  \subfigure[$\mathbb{S}=0.6$]{\includegraphics[width=1.7cm, height=1.7cm]{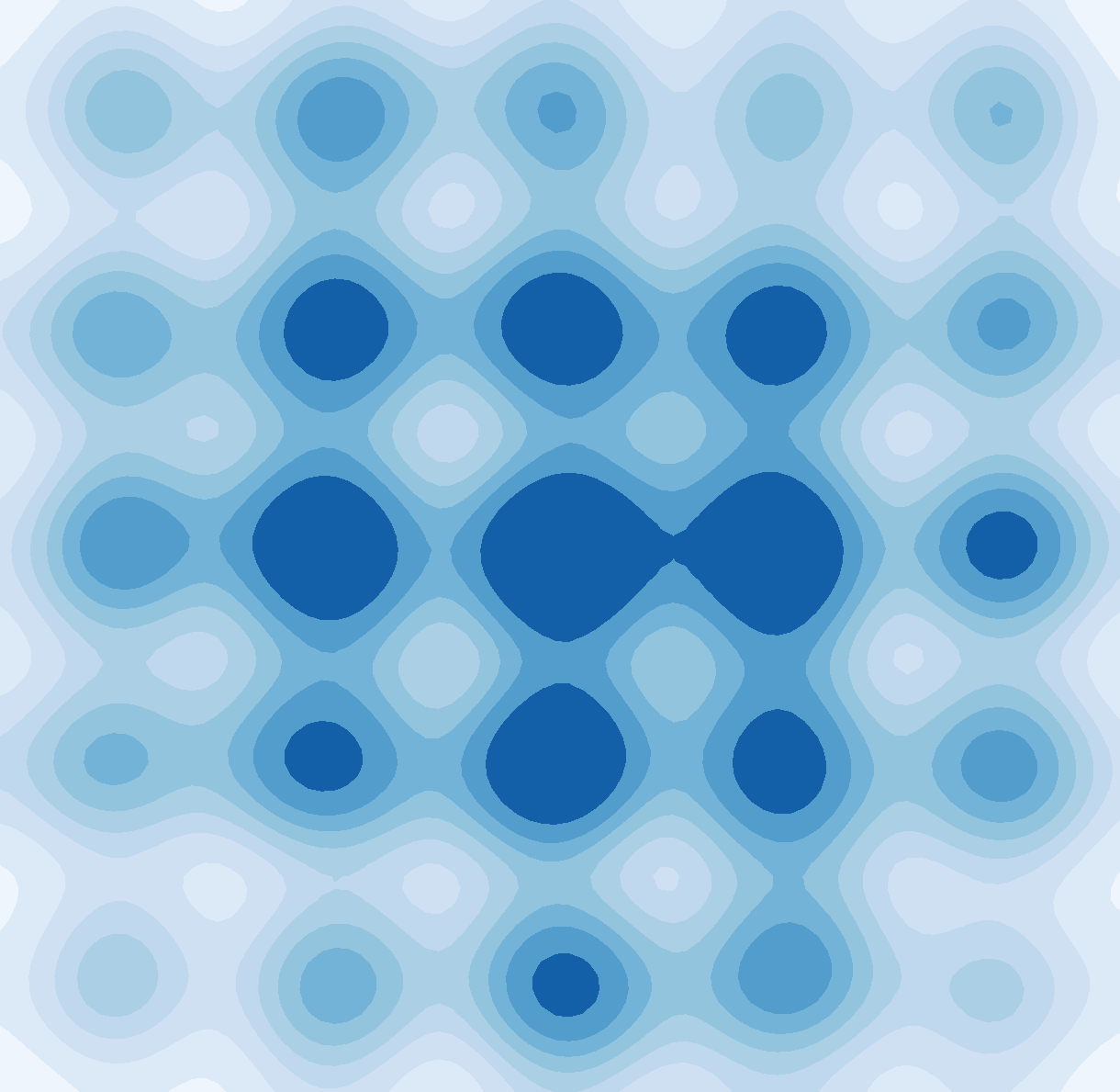}}
    \vskip -0.05in
  \caption{\footnotesize{Study of different target swap rate $\mathbb{S}$ via DEO$_{\star}$-SGD, where SGLD is the exploitation kernel.}}
  \label{test_of_acceptance_rates}
  \vspace{-0.4em}
\end{figure}

We first run DEO$_{\star}$-SGD$\times$P16 based on 16 chains and 20,000 iterations. We fix the lowest learning rate 0.003 and the highest learning 0.6 and propose to tune the target swap rate $\mathbb{S}$ for the acceleration-accuracy trade-off. Figure \ref{test_of_acceptance_rates} shows that fixing $\mathbb{S}=0.2$ 
is too conservative and underestimates the uncertainty on the corners; $\mathbb{S}=0.6$ results in too many radical swaps and eventually leads to crude estimations; by contrast, $\mathbb{S}=0.4$ yields the best posterior approximation among the five choices.

\begin{table*}[ht]
\begin{sc}
\vspace{-0.05in}
\caption[Table caption text]{posterior approximation and optimization on CIFAR100 via $10\times$ budget.} \label{UQ_test}
\vspace{-0.05in}
\begin{center} 
\begin{tabular}{c|cc|cc|cc}
\hline
\multirow{2}{*}{Model} & \multicolumn{2}{c|}{R\upshape{es}N\upshape{et}20} & \multicolumn{2}{c|}{R\upshape{es}N\upshape{et}32} & \multicolumn{2}{c}{R\upshape{es}N\upshape{et}56}  \\
\cline{2-7}
 &  NLL  & ACC (\%) & NLL & ACC (\%) & NLL  & ACC (\%) \\
\hline
\hline
{\upshape{cyc}SGHMC$\times$T10} &  8198$\pm$59 & 76.26$\pm$0.18  &  7401$\pm$28 & 78.54$\pm$0.15 & 6460$\pm$21 & 81.78$\pm$0.08  \\ 
{\upshape{cyc}SWAG$\times$T10} &   8164$\pm$38 & 76.13$\pm$0.21  & 7389$\pm$32 & 78.62$\pm$0.13 & 6486$\pm$29 & 81.60$\pm$0.14 \\ 
\hline
\hline
{\upshape{m}SGD$\times$P10} &  7902$\pm$64 & 76.59$\pm$0.11 & 7204$\pm$29 & 79.02$\pm$0.09  & 6553$\pm$15  & 81.49$\pm$0.09    \\ 
{DEO-\upshape{m}SGD$\times$P10} &  7964$\pm$23  & 76.84$\pm$0.12  & 7152$\pm$41 & 79.34$\pm$0.15 & 6534$\pm$26 & 81.72$\pm$0.12 \\ 
{$\text{DEO}_{\star}$-\upshape{m}SGD$\times$P10} &  \textbf{7741$\pm$67} & \textbf{77.37$\pm$0.16} & \textbf{7019$\pm$35} & \textbf{79.54$\pm$0.12} & \textbf{6439$\pm$32} & \textbf{82.02$\pm$0.15} \\ 
\hline
\end{tabular}
\end{center}
\end{sc}
\vspace{-0.05in}
\end{table*}

\begin{figure}[!ht]
  \centering
  \vskip -0.05in
  \subfigure[Round trips]{\label{round_trip_simulation}\includegraphics[width=2.5cm, height=2.5cm]{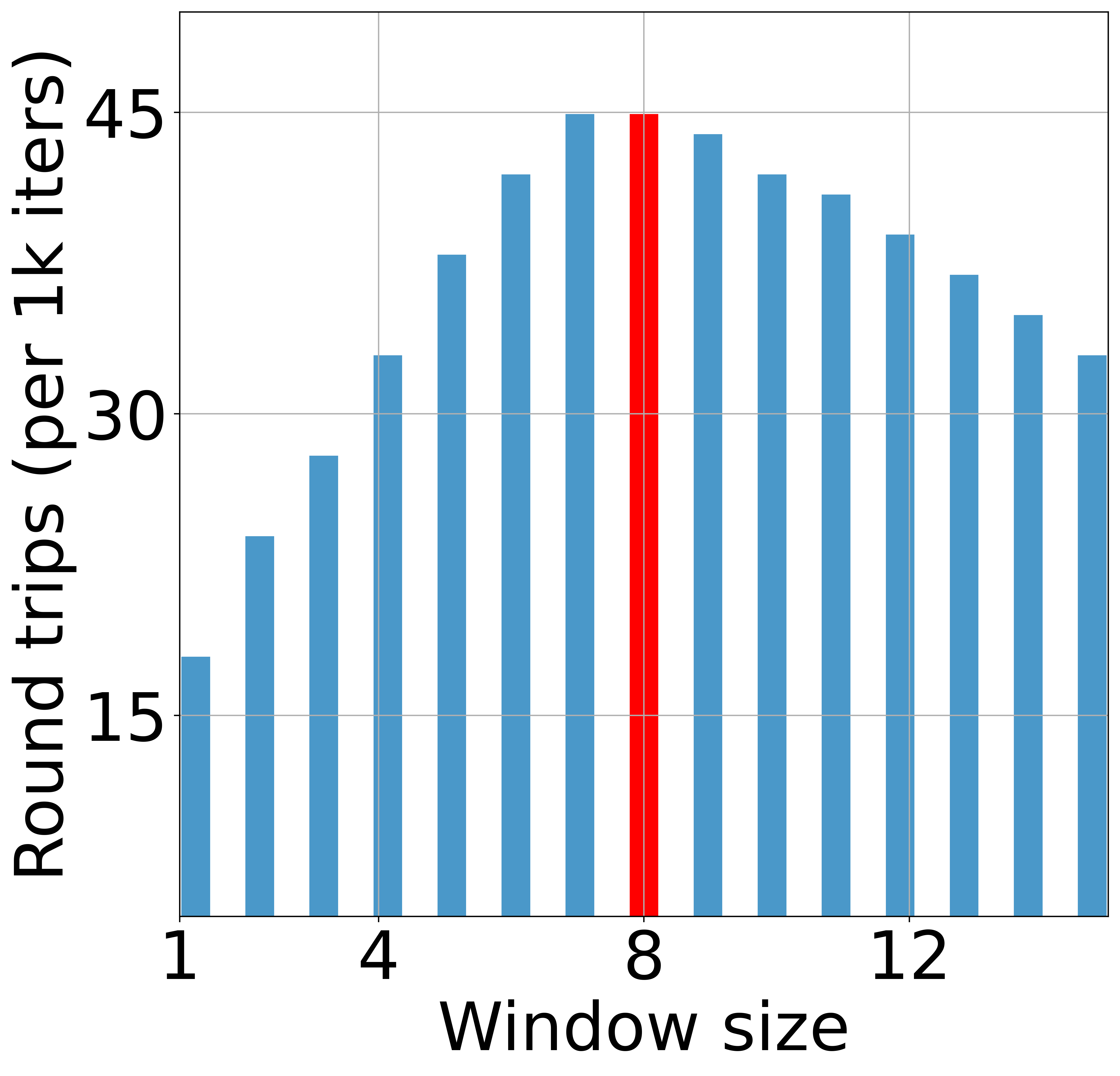}}
  \subfigure[Learning rates]{\includegraphics[width=2.5cm, height=2.5cm]{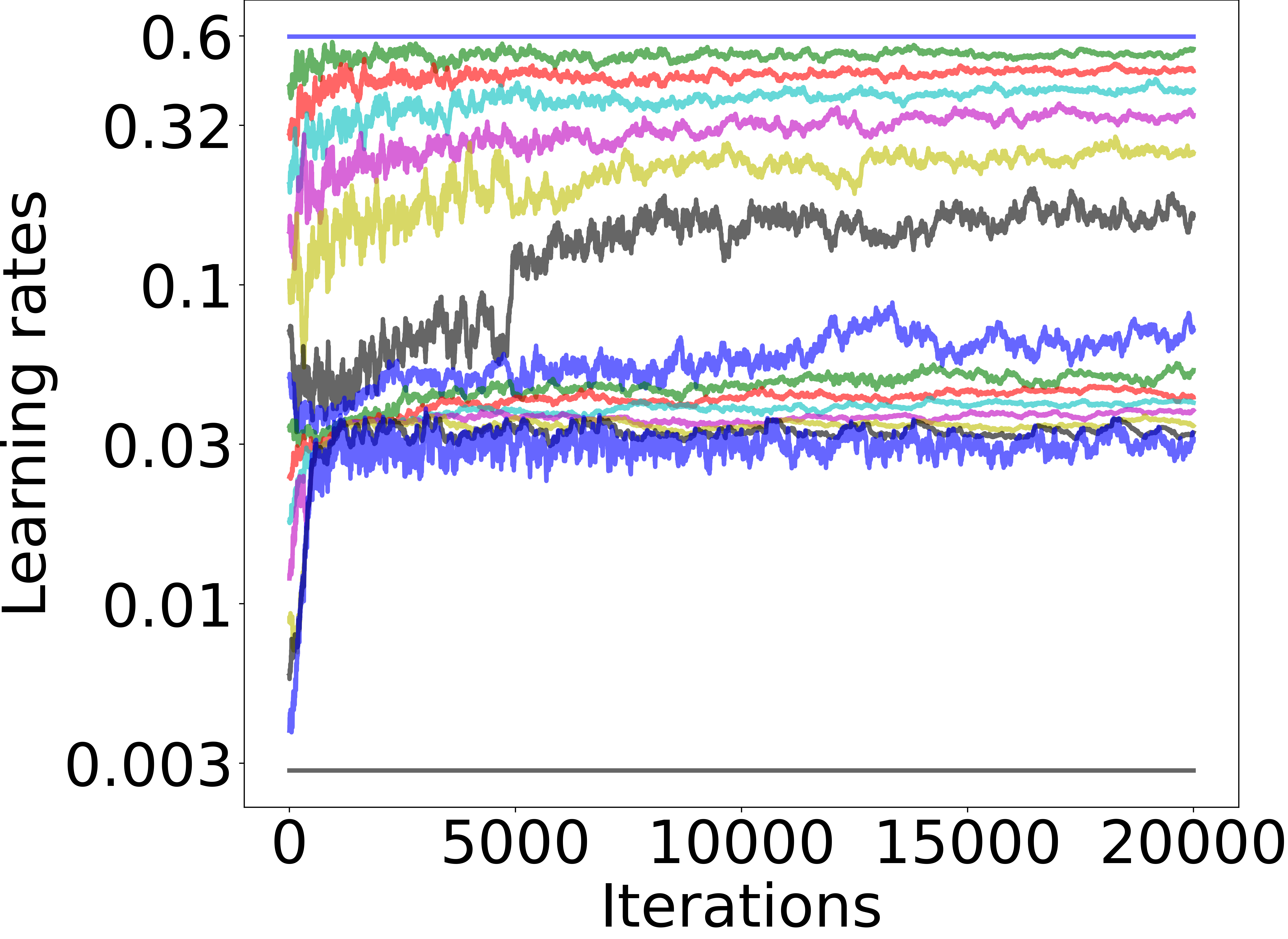}} 
  \subfigure[{Accept rates}]{\includegraphics[width=2.5cm, height=2.5cm]{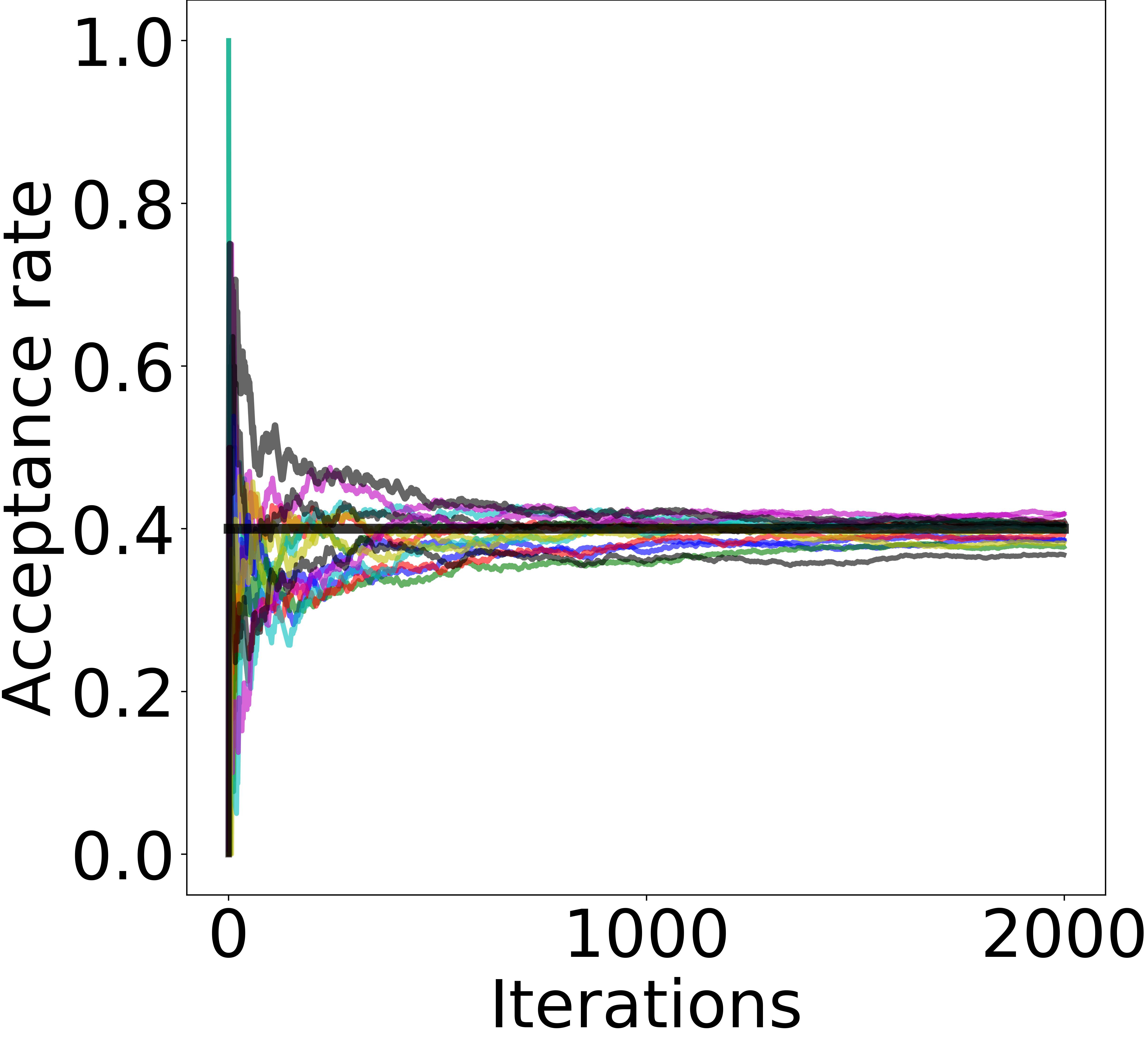}}
    \vskip -0.05in
  \caption{Study of window sizes, learning rates, acceptance rates, and the corrections.}
  \label{learning_rate_acceptance_rate}
  \vspace{-0.5em}
\end{figure}

Next, we select $\mathbb{S}=0.4$ and study the round trips. We observe in Figure \ref{round_trip_simulation} that the vanilla DEO only yields 18 round trips every 1,000 iterations; by contrast, slightly increasing $W$ tends to improve the efficiency significantly and the optimal 45 round trips are achieved at $W=8$, which \emph{matches our theory}. In Figure \ref{learning_rate_acceptance_rate}(b-c), the geometrically initialized learning rates lead to unbalanced acceptance rates in the early phase and some adjacent chains have few swaps and others swap too much, but as the optimization proceeds, the learning rates gradually converge. 

\begin{figure}[!ht]
  \centering
  \subfigure[{Truth} ]{\label{multi_density_truth}\includegraphics[width=1.4cm, height=1.4cm]{figures/ground_truth_NRPT_iters_20000_chains_16_lr_0.01_0.5_sz_100_swap_rate_0.4_scale_2_window_8_seed_8.png}}
  \subfigure[\footnotesize{SGLD}]{\label{sgld_parallel}\includegraphics[width=1.4cm, height=1.4cm]{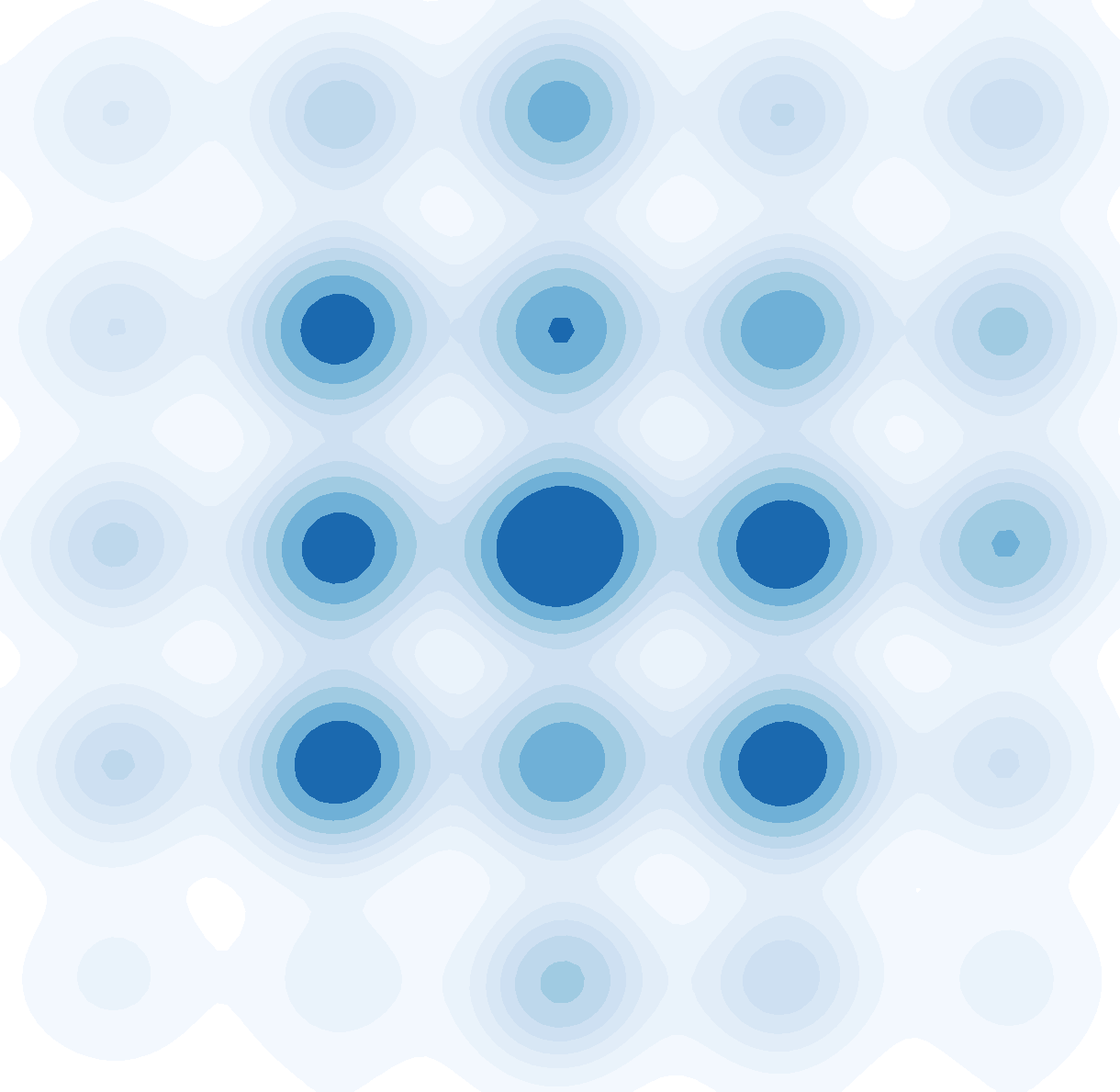}}
  \subfigure[\tiny{cycSGLD}]{\label{cycSGLD_parallel}\includegraphics[width=1.4cm, height=1.4cm]{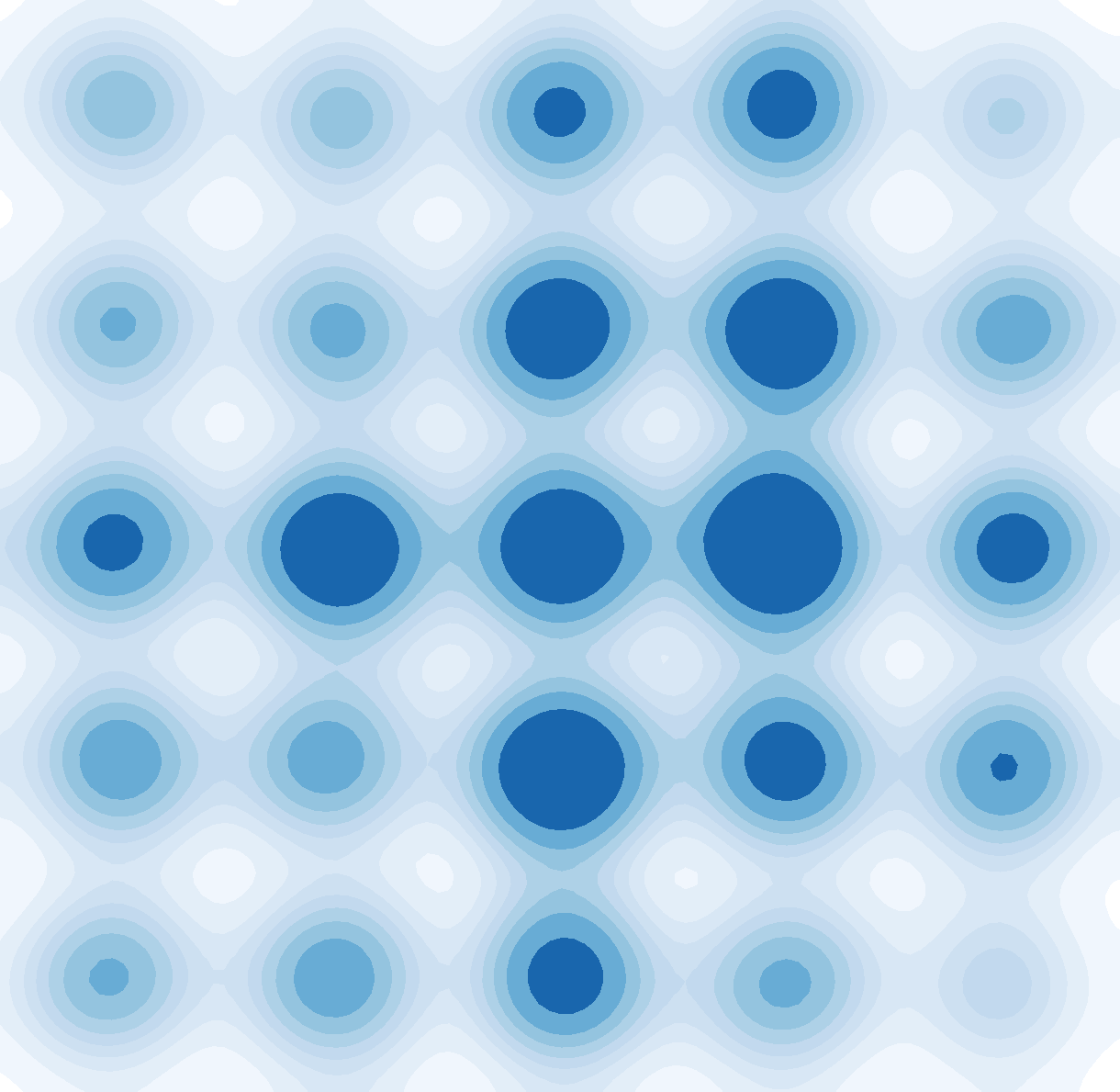}}
  \subfigure[\tiny{$\text{DEO}$-SGD}]{\label{DEO_parallel}\includegraphics[width=1.4cm, height=1.4cm]{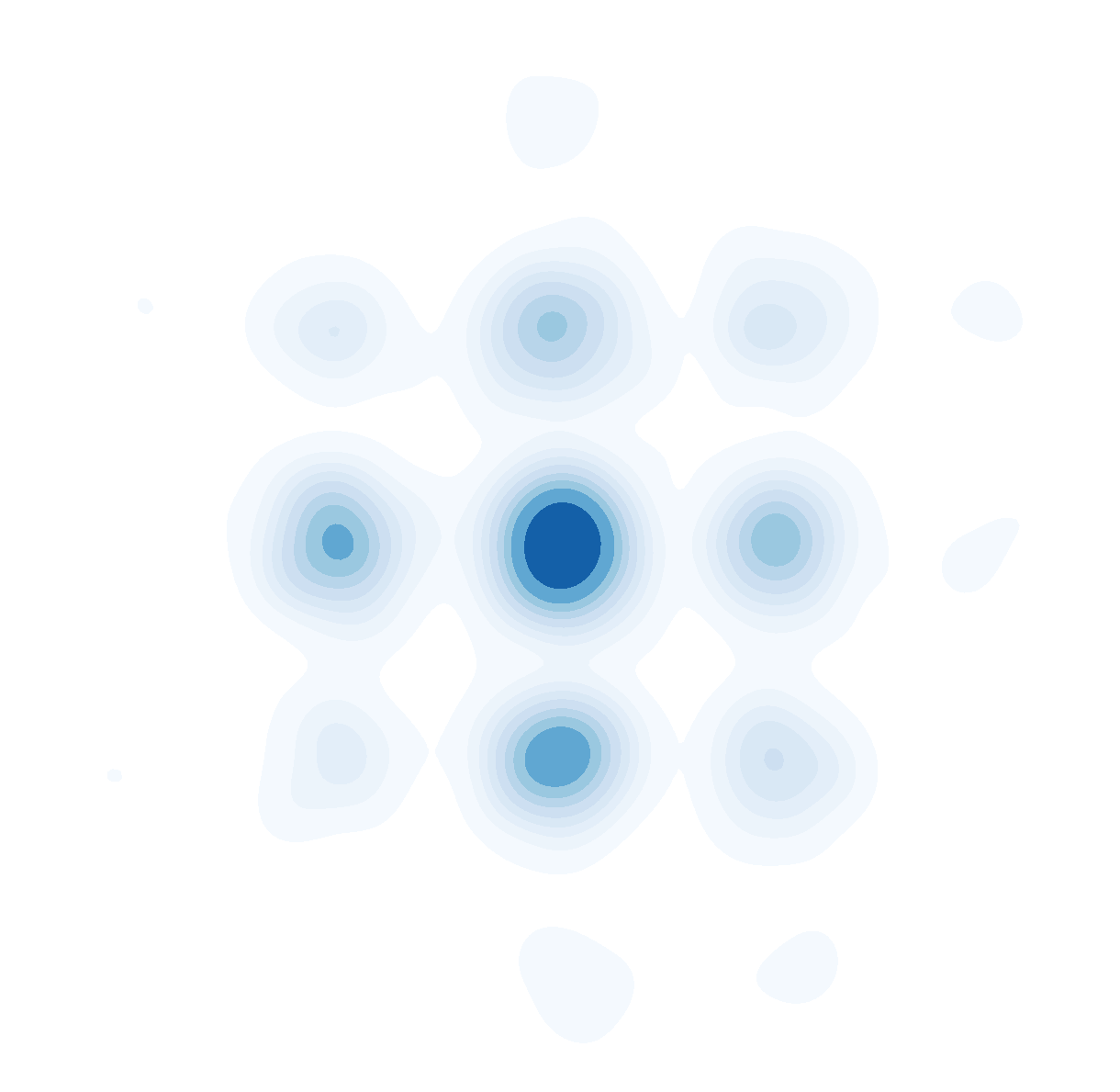}}
  \subfigure[\tiny{$\text{DEO}_{\star}$-SGD}]{\label{deo_star_parallel}\includegraphics[width=1.7cm, height=1.6cm]{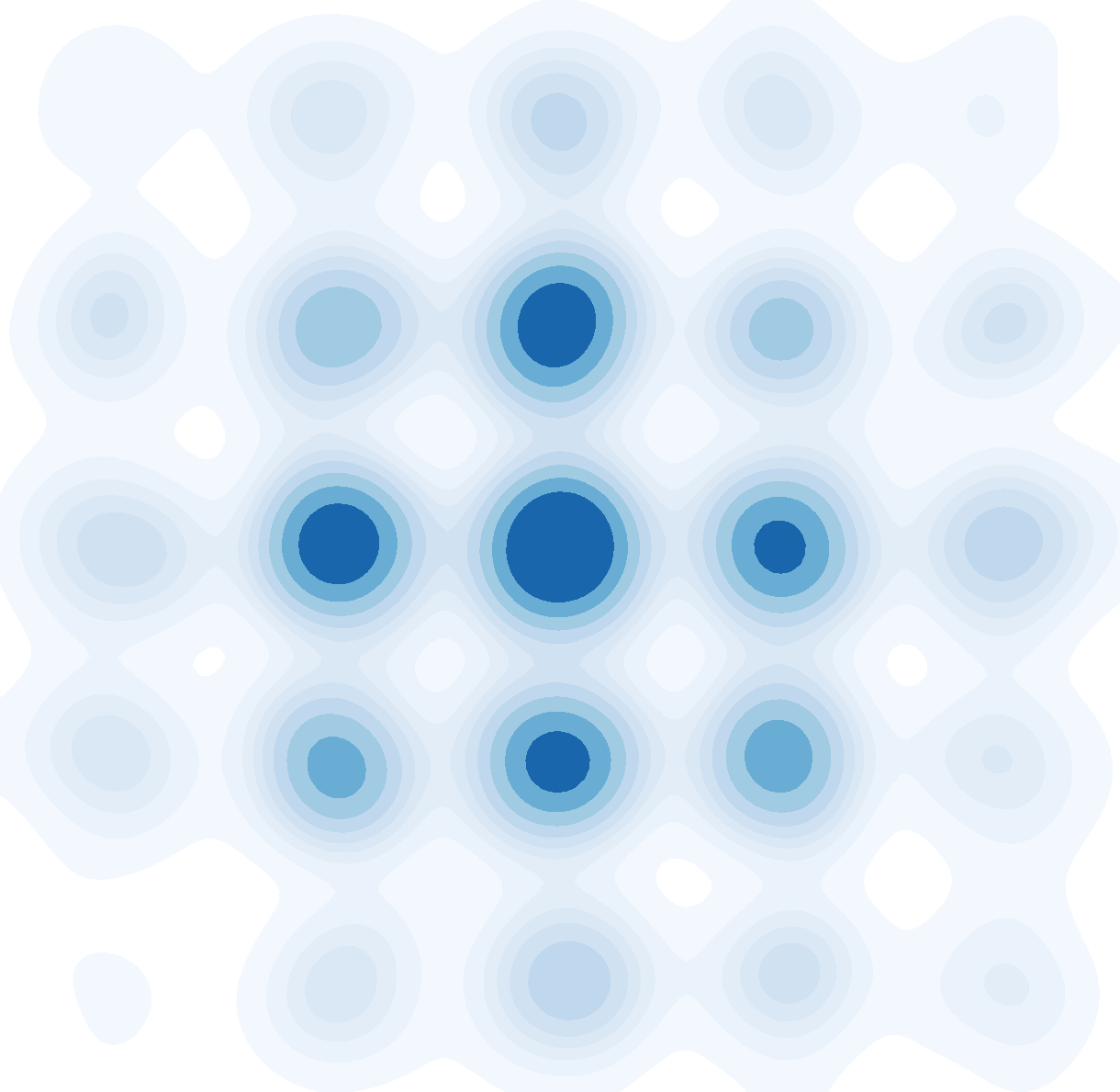}}
  \caption{Simulations of the multi-modal distribution through different sampling algorithms. All the algorithms are run based on 16 chains $\times P16$}
  \label{multi_modal_simulation}
\end{figure}

We compare the proposed algorithm with parallel SGLD based on 20,000 iterations and 16 chains (SGLD$\times$P16); we fix the learning rate 0.003 and a temperature 1. We also run cycSGLD$\times$T16, which denotes a single chain based on 16 times of budget and cosine learning rates \citep{ruqi2020} of 100 cycles. We see in Figure \ref{sgld_parallel} that SGLD$\times$P16 has good explorations but fails to approximate the posterior. Figure \ref{cycSGLD_parallel} shows that cycSGLD$\times$T16 explores most of the modes but overestimates some areas occasionally. Figure \ref{DEO_parallel} demonstrates the DEO-SGD with 16 chains (DEO-SGD$\times$P16) estimates the uncertainty of the centering 9 modes well but fails to deal with the rest of the modes. As to DEO$_{\star}$-SGD$\times$P16, the approximation is rather accurate, as shown in Figure \ref{deo_star_parallel}.

We also present the index process for both schemes in Figure \ref{Swapping_path_chains16}. The vanilla DEO scheme results in volatile paths and a particle takes quite a long time to complete a round trip; by contrast, DEO$_{\star}$ only conducts at most one cheap swap in a window and yields much more deterministic paths.


\begin{figure}[!ht]
  \centering
  \subfigure[$\text{DEO}$-SGD$\times$P16]{\includegraphics[width=8cm, height=1.7cm]{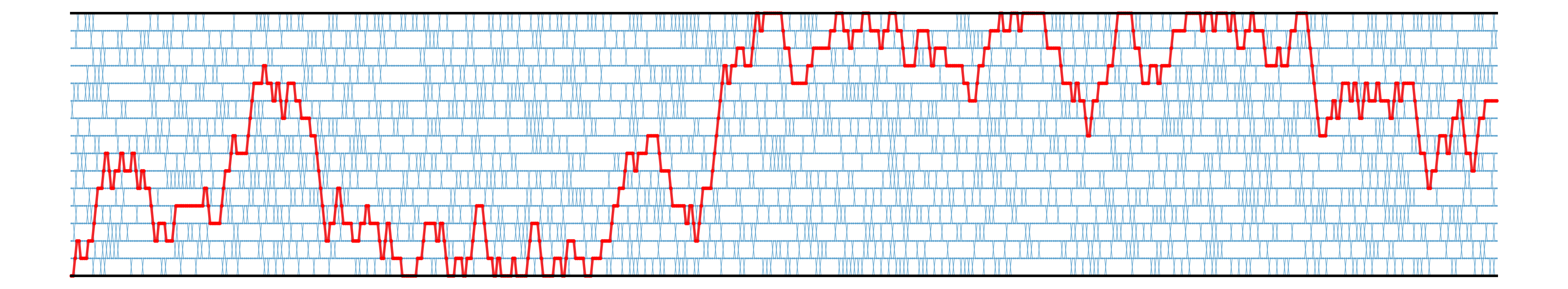}}\hspace*{0.5em}\\
  \subfigure[$\text{DEO}_{\star}$-SGD$\times$P16]{\includegraphics[width=8cm, height=1.7cm]{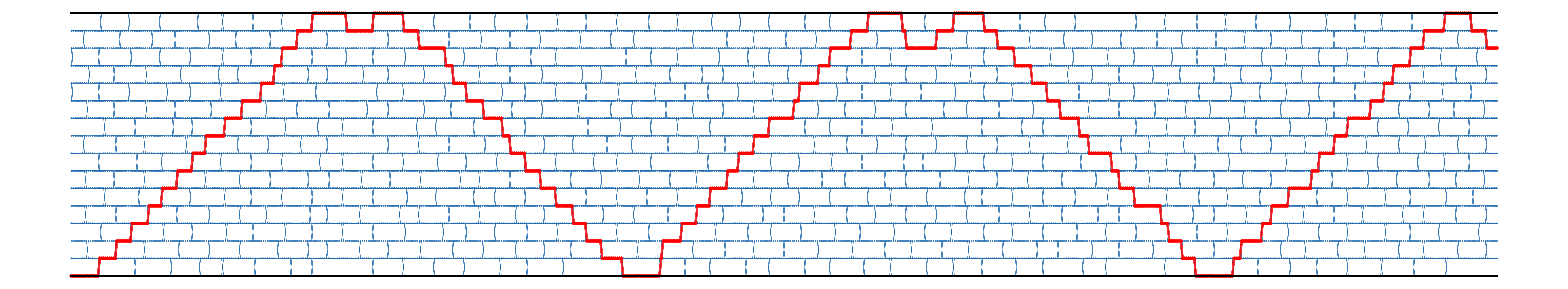}}
  \caption{Dynamics of the index process. The red path denotes the round trip path for a particle. }
  \label{Swapping_path_chains16}
\end{figure}


\subsection{Posterior approximation for image data}
\label{image_data}
Next, we conduct experiments on computer vision tasks. We choose ResNet20, ResNet32, and ResNet56 \citep{kaiming15} and train the models on CIFAR100. We report negative log likelihood (NLL) and test accuracy (ACC). For each model, we first pre-train 10 fixed models via 300 epochs and then run algorithms based on momentum SGD (mSGD) for 500 epochs with 10 parallel chains and denote it by DEO$_{\star}$-mSGD$\times$P10. We fix the lowest and highest learning rates as 0.005 and 0.02, respectively. For a fair comparison, we also include the baseline DEO-mSGD$\times$P10 with the same setup except that the window size is 1; the standard ensemble mSGD$\times$P10 is also included with a learning rate of 0.005. In addition, we include two baselines based on a single long chain, i.e. we run stochastic gradient Hamiltonian Monte Carlo \citep{Chen14} 5000 epochs with cyclical learning rates and 50 cycles \citep{ruqi2020} and refer to it as cycSGHMC$\times$T10; we run SWAG$\times$T10 \citep{swag} under similar setups.

In particular for DEO$_{\star}$-mSGD$\times$P10, we tune the target swap rate $\mathbb{S}$ and find an optimum at $\mathbb{S}=0.005$. We compare our proposed algorithm with the four baselines and observe in Table \ref{UQ_test} that  mSGD$\times$P10 can easily obtain competitive results simply through model ensemble \citep{Balaji17}, which outperforms cycSGHMC$\times$T10 and cycSWAG$\times$T10 on ResNet20 and ResNet32 models and perform the worst among the five methods on ResNet56; DEO-\upshape{m}SGD$\times$P10 itself is already a pretty powerful algorithm, however, $\text{DEO}_{\star}$-\upshape{m}SGD$\times$P10 consistently outperforms the vanilla alternative.

\begin{figure}[!ht]
  \centering
  \subfigure[\scriptsize{Round trips}]{\label{cifar_round_trip}\includegraphics[width=1.8cm, height=1.8cm]{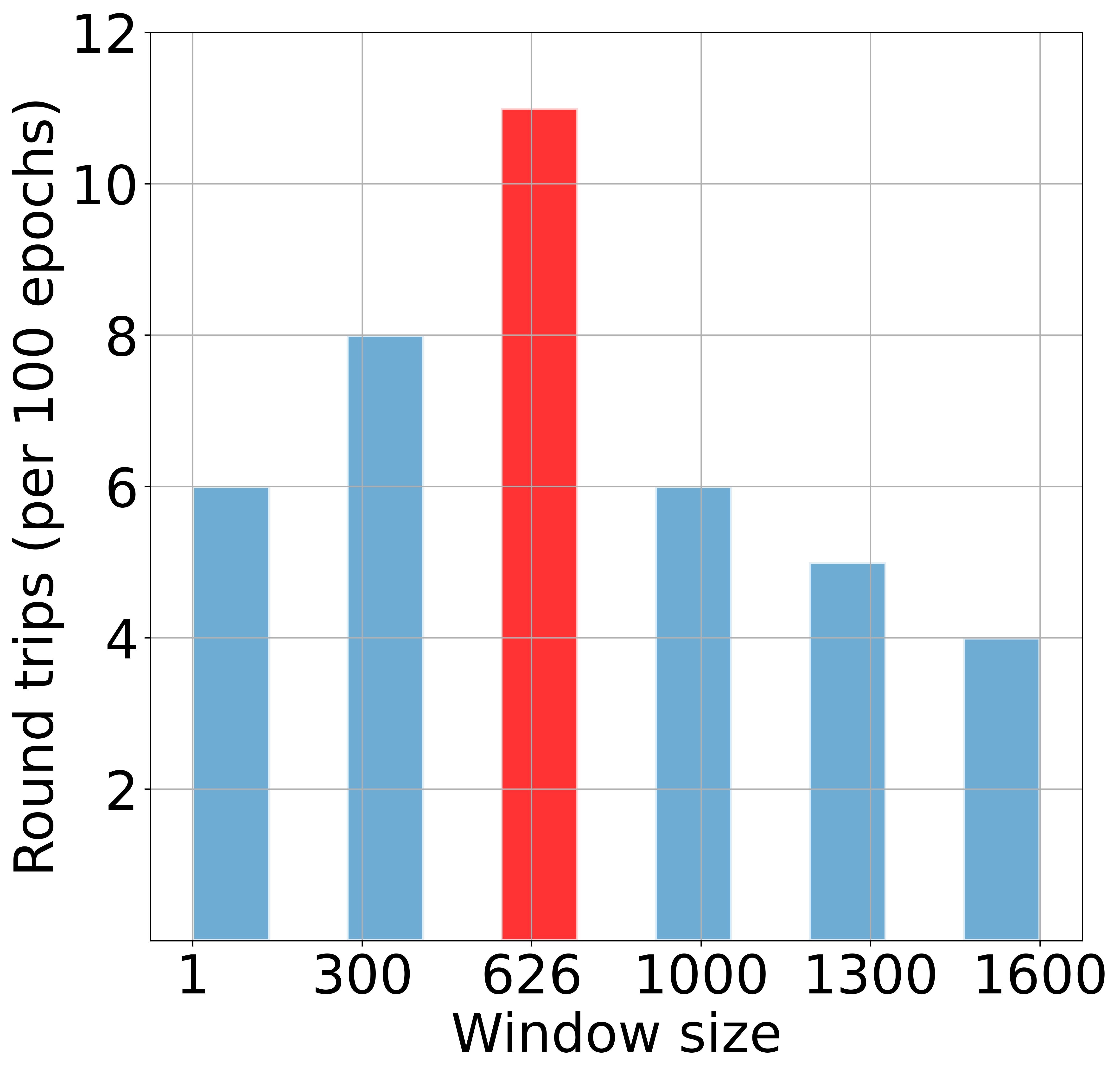}}
  \subfigure[\scriptsize{Accuracies}]{\label{cifaar_acc}\includegraphics[width=1.89cm, 
  height=1.8cm]{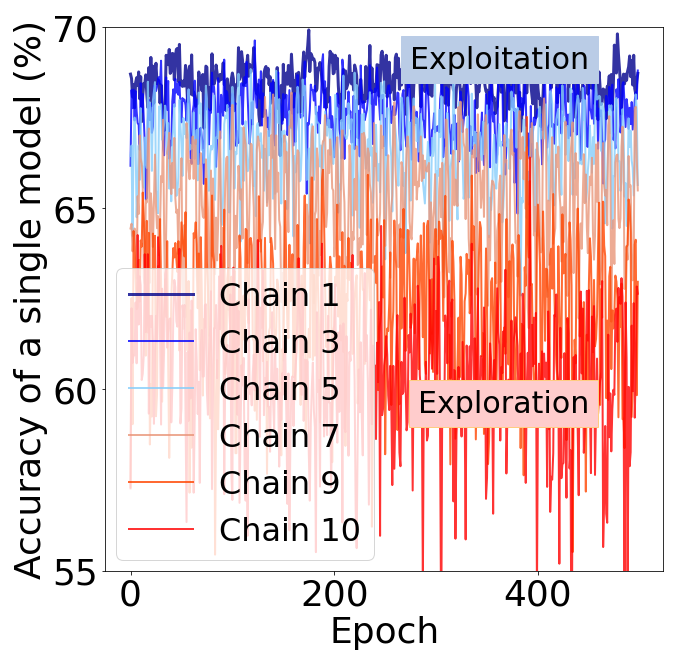}}
  \subfigure[$\eta$]{\label{cifar_learning_rate}\includegraphics[width=1.8cm, height=1.8cm]{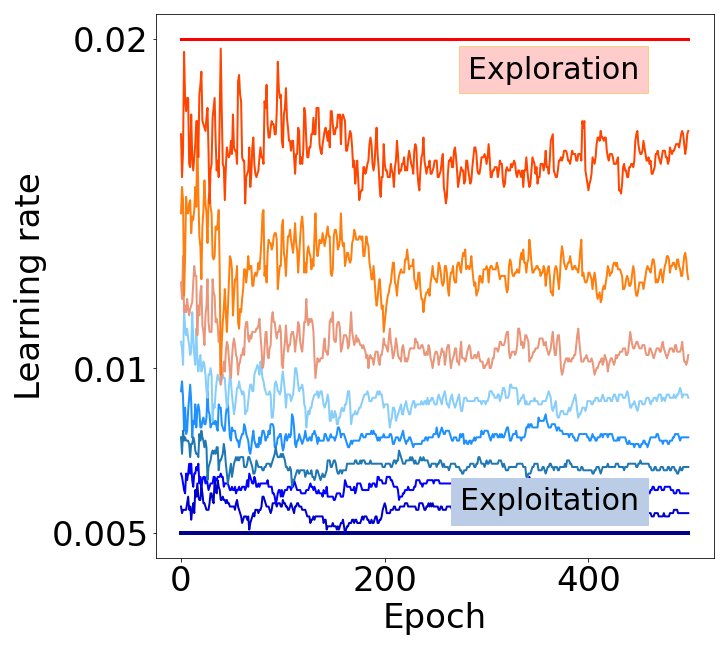}}
  \subfigure[$\mathbb{S}=0.6$]{\label{cifar_acceptance}\includegraphics[width=1.8cm, height=1.8cm]{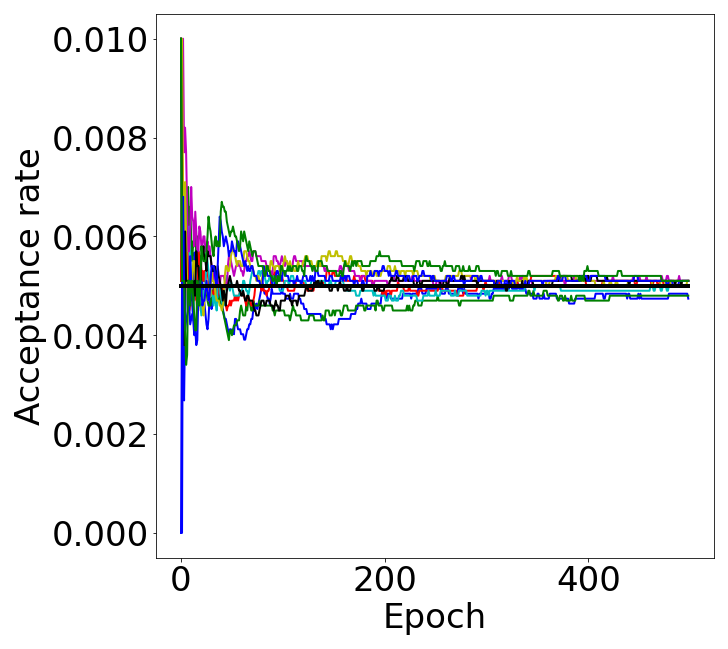}}
  \caption{Study of window sizes, accuracies, learning rates ($\eta$), and acceptance rates ($\mathbb{S}$) on ResNet20.}
  \label{learning_rate_acceptance_rate_v2}
\end{figure}


To analyze why the proposed scheme performs well, we study the round trips in Figure \ref{cifar_round_trip} and find that the theoretical optimal window obtains around 11 round trips every 100 epochs, which is almost 2 times as much as DEO. In Figure \ref{cifaar_acc}, we observe that the smallest learning rate obtains the highest accuracy (blue) for exploitations, while the largest learning rate yields decent explorations (red); 
we see in Figure \ref{cifar_learning_rate} and \ref{cifar_acceptance} that geometrically initialized learning rates converge to fixed points and the acceptance rates converge to the target.


\section{Conclusion}

In this paper, we show how to adapt the multiple-chain parallel tempering algorithm to big data problems. Given a limited budget of parallel chains in big data, we show the standard non-reversible DEO scheme leads to an expensive quadratic communication cost with respect to the number of chains. To tackle that issue, we propose a generalized DEO scheme to achieve larger swap rates (window-wise) with mild costs. By sacrificing a mild accuracy in big data, we prove the existence of an optimal window size to encourage \emph{deterministic paths} and obtain in a significant \emph{acceleration of $O(\frac{P}{\log P})$ times}. For a user-friendly purpose, we also propose a deterministic swap condition to interact with SGD-based exploration kernels. A crude bias analysis is provided to facilitate the understanding of the extensions.

\section{Acknowledgments}

Lin and Deng would like to acknowledge the support from National Science Foundation (DMS-2053746, DMS-1555072, and DMS-1736364), Brookhaven National Laboratory Subcontract (382247), and U.S. Department of Energy Ofﬁce of Science Advanced Scientiﬁc Comput-ing Research (DE-SC0021142). Liang’s research is supported in part by the grant DMS-2015498 from National Science Foundation and the grants R01-GM117597 and R01-GM126089 from National Institutes of Health. 



\bibliography{fast}
\bibstyle{AAAI_format/aaai23}

\appendix
\onecolumn


\section{Background}

\subsection{Replica exchange stochastic gradient Langevin dynamics}
\label{reSGLD_appendix}

To approximate the replica exchange Langevin diffusion Eq.(\ref{sde_2_couple}) based on multiple particles $(\bbeta_{k+1}^{(1)}, \bbeta_{k+1}^{(2)}, \cdots, \bbeta_{k+1}^{(P)})$ in big data scenarios, replica exchange stochastic gradient Langevin dynamics (reSGLD) proposes the following numerical scheme:

\begin{equation}  
\begin{split}
\label{sde_2_couple_numerical}
\text{Exploration: }\left\{  
             \begin{array}{lr}  
             \footnotesize{\small{\bbeta_{k+1}^{(P)}=\bbeta_{k}^{(P)} - \eta^{(P)}\nabla \widetilde U(\bbeta_k^{(P)})}+\sqrt{2\eta^{(P)}\tau^{(P)}} \bxi_k^{(P)}}, \\  
              \\
              \footnotesize{\cdots}  \\
             \footnotesize{\bbeta_{k+1}^{(2)} =\bbeta_{k}^{(2)} \ - \eta^{(2)}\nabla \widetilde U(\bbeta_k^{(2)})}\ +\sqrt{2\eta^{(2)}\tau^{(2)}} \bxi_k^{(2)},\\
             \end{array}
\right. \qquad\qquad\qquad\qquad\  & \\
\text{Exploitation: } \quad\footnotesize{\ \bbeta_{k+1}^{(1)}=\bbeta_{k}^{(1)} \ - \eta^{(1)}\nabla \widetilde U(\bbeta_k^{(1)})\ \ +\sqrt{2\eta^{(1)}\tau^{(1)}} \bxi_k^{(1)},}\ \qquad\qquad\qquad\qquad\qquad\ \ &\\
\end{split}
\end{equation}
where $\eta^{(\cdot)}$ is the learning rate, $\bxi_k^{(\cdot)}$ is a standard $d$-dimensional Gaussian noise and each subprocess follows a stochastic gradient Langevin dynamics  (SGLD) \citep{Welling11}.
Further assuming the energy normality assumption C1 in section \ref{gap_bet_acc}, there exists a $\sigma_p$ such that 
\begin{equation}\label{sigma_p}
    \widetilde U(\bbeta^{(p)})-\widetilde U(\bbeta^{(p+1)})\sim\mathcal{N}(U(\bbeta^{(p)})- U(\bbeta^{(p+1)}), 2\sigma_p^2), \text{ for any } \bbeta^{(p)}\sim \pi^{(p)},
\end{equation}
where $p\in\{1,2,\cdots, P-1\}$, in what follows, \cite{deng2020} proposed the bias-corrected swap function as follows 
\begin{equation}
\label{swap_rate_numerical}
    a\widetilde S(\bbeta_{k+1}^{(p)}, \bbeta_{k+1}^{(p+1)})=a\cdot\left(1\wedge e^{ \big(\frac{1}{\tau^{(p)}}-\frac{1}{\tau^{(p+1)}}\big)\big(\widetilde U(\bbeta_{k+1}^{(p)})-\widetilde U(\bbeta_{k+1}^{(p+1)})-\big(\frac{1}{\tau^{(p)}}-\frac{1}{\tau^{(p+1)}}\big)\sigma_p^2\big)}\right),
\end{equation}
where $p\in\{1,2,\cdots, P-1\}$, $\big(\frac{1}{\tau^{(p)}}-\frac{1}{\tau^{(p+1)}}\big)\sigma_p^2$ is a correction term to avoid the bias and the swap intensity $a$ can be then set to $\frac{1}{\min\{\eta^{(p)}, \eta^{(p+1)}\}}$ for convenience. Namely, given a particle pair at location $(\beta^{(p)}, \beta^{(p+1)})$ in the $k$-th iteration, the conditional probability of the swap follows that
\begin{equation*}
\begin{split}
    \mathbb{P}(\bbeta_{k+1}=(\beta^{(p+1)}, \beta^{(p)})|\bbeta_k=(\beta^{(p)}, \beta^{(p+1)}))&=\widetilde S(\beta^{(p)}, \beta^{(p+1)}),\\
    \mathbb{P}(\bbeta_{k+1}=(\beta^{(p)}, \beta^{(p+1)})|\bbeta_k=(\beta^{(p)}, \beta^{(p+1)}))&=1-\widetilde  S(\beta^{(p)}, \beta^{(p+1)}).\\
\end{split}
\end{equation*}

\subsection{Inhomogenous swap intensity via different learning rates}
\label{diff_lr}
Assume the high-temperature process also applies $\eta^{(p+1)}\geq \eta^{(p)}>0$, where $p\in\{1,2\cdots, P-1\}$. At time $t$, the swap intensity can be interpreted as being $0$ in a time interval $[t, t+\eta^{(p+1)}-\eta^{(p)})$ and being $\frac{1}{\eta^{(p)}}$ in $[t+\eta^{(p+1)}-\eta^{(p)}, t+\eta^{(p+1)})$. Since the coupled Langevin diffusion process converges to the same joint distribution regardless of the swaps and the swap intensity $a$ varies in a bounded domain, the convergence of numerical schemes is not much affected except the numerical error.

\subsection{Accuracy-acceleration trade-off} 
\label{trade-off}
Despite the exponential acceleration potential, the average bias-corrected swap rate is significantly reduced such that $\E[\widetilde S]=O\big(S e^{-\big(\frac{1}{\tau^{(p)}}-\frac{1}{\tau^{(p+1)}}\big)^2\frac{\sigma d_p^2}{8}}\big)$ \citep{deng_VR}, where $S$ is the swap function following Eq.(\ref{swap_function}) in full-batch settings. Including more parallel chains is promising to alleviate this issue, however, it is often inevitable to sacrifice some accuracy to obtain more swaps and accelerations, as discussed in section \ref{main_optimal_chain}. As such, it inspires us to devise the user-friendly SGD-based approximate exploration kernels.

\subsection{Connection to non-convex optimization}
\label{connection_2_non_convex}

Parallel to our work, a similar SGLD$\times$SGD framework was proposed by \citet{jingdong2} for non-convex optimization, where SGLD and SGD work as exploration and exploitation kernels, respectively. By contrast, our algorithm performs \emph{exactly in the opposite} for posterior approximation because SGLD is theoretically more appealing for the exploitations based on small learning rates instead of explorations, while the widely-adopted SGDs are quite attractive in exploration due to its user-friendly nature and ability in exploring wide optima.

If we manipulate the scheme to propose \emph{an exact swap} in each window instead of \emph{at most one} in the current version, the algorithm shows a better potential in non-convex optimization. In particular, a larger window size corresponds to a slower decay of temperatures in simulated annealing (SAA) \citep{Mangoubi18}. Such a mechanism yields a larger hitting probability to move into a sub-level set with lower energies (losses) and a better chance to hit the global optima. Nevertheless, the manipulated algorithm possess the natural of parallelism in cyclical fashions.

\subsection{Others}
\label{others}

\textbf{Swap time} refers to the communication time to conduct a swap in each attempt. For example, chain pair $(p, p+1)$ of ADJ requires to wait for the completion of chain pairs $(1, 2), (2,3), \cdots, (p-1, p)$ to attempt the swap and leads to swap time of $O(P)$; however, SEO, DEO, and DEO$_{\star}$ don't have this issue because in each iteration, only even or odd chain pairs are attempted to swap, hence the swap time is $O(1)$.

\textbf{Round trip time} refers to the time (stochastic variable) used in a round trip. A round trip is completed when a particle in the $p$-th chain, where $p\in[P]:=\{1, 2,\cdots, P\}$, hits the index boundary index at both $1$ and $P$ and returns back to its original index $p$.

\section{Analysis of round trips}
\label{analysis_of_round_trip}

The round trip time is hard to analyze due to the non-Markovian index process. To simplify the analysis, we follow \citet{Syed_jrssb} and treat swap indicators as independent Bernoulli variables. Now we make the following assumptions 
\begin{itemize}
    \item (B1) Stationarity: Each sub-process has achieved the stationary distribution $\bbeta^{(p)}\sim \pi^{(p)}$ for any $p\in [P]$;
    \item (B2) Weak independence: For any $\bar\bbeta^{(p)}$ simulated from the $p$-th chain conditional on $\bbeta^{(j)}$, $U(\bar\bbeta^{(j)})$ and $U(\bbeta^{(j)})$ are independent.
\end{itemize}

\subsection{Analysis of round trip time}
\label{analyze_round_trip}
\begin{proof}[Proof of Lemma \ref{thm:round_trip_time}] For $t\in\mathbb{N}$, define $Z_t\in[P]=\{1,2,\cdots, P\}$ as the index of the chain a particle arrives after $t$ windows. 
Define $\delta_t\in\{1,-1\}$ to indicate the direction of the swap a particle intends to make during the $t$-th window; i.e., the swap is between $Z_t$ and $Z_{t}+1$ if $\delta_t=1$ and is between $Z_t$ and $Z_{t}-1$ if $\delta_t=-1$.

Define 
$U:=\min\{t\ge 0: Z_t=P,\delta_t=-1\}$ and $V:=\min\{t\ge 0: Z_t=1,\delta_t=1\}$. 
Define $r_p:=\hP[\text{ reject the swap between Chain } p \text{ and Chain } p+1\text{ for one time}]$ . 
Define $u_{p,\delta}:=\hE[U|Z_0=p,\delta_0=\delta]$ for $\delta\in\{1,-1\}$ and $v_{p,\delta}:=\hE[V|Z_0=p,\delta_0=\delta]$. Then the expectation of round trip time $T$ is
\begin{equation} \label{eq:ERTT}
\hE[T]=W(u_{1,1}+u_{P,-1}).
\end{equation}
By the Markov property, for $u_{p,\delta}$, we have
\begin{align}
&u_{p,1}=r_p^W (u_{p,-1}+1)+(1-r_p^W) (u_{p+1,1}+1) \label{eq:u1}\\
&u_{p,-1}=r_{p-1}^W  (u_{p,1}+1)+(1-r_{p-1}^W) (u_{p-1,-1}+1), \label{eq:u2}
\end{align}
where $r_{p}^W$ denotes the rejection probability of a particle in a window of size $W$ at the $p$-th chain.

According to Eq.(\ref{eq:u1}) and Eq.(\ref{eq:u2}), we have
\begin{align}
u_{p+1,1}-u_{p,1}=r_p^W(u_{p+1,1}-u_{p,-1})-1 \label{eq:u-1}\\
u_{p,-1}-u_{p-1,-1}=r_{p-1}^W(u_{p,1}-u_{p-1,-1})+1. \label{eq:u-2}
\end{align}
Define $\alpha_{p}=u_{p,1}-u_{p-1,-1}$. Then by definition, Eq.(\ref{eq:u-1}), and Eq.(\ref{eq:u-2}), we have
\begin{align*}
\alpha_{p+1}-\alpha_{p}&=(u_{p+1,1}-u_{p,-1})-(u_{p,1}-u_{p-1,-1})\\&=
(u_{p+1,1}-u_{p,1})-(u_{p,-1}-u_{p-1,-1})\\&=
r_{p}^W\alpha_{p+1}-r_{p-1}^W\alpha_{p}-2,
\end{align*}
which implies that
\begin{equation} \label{eq:a1}
a_{p+1}-a_p=-2,
\end{equation}
for $a_p:=(1-r_{p-1}^W)\alpha_{p}$. Thus, by Eq.(\ref{eq:a1}), we have
\begin{align} \label{eq:a2}
a_{p}=a_2-2(p-2).
\end{align}
By definition, $u_{1,-1}=u_{1,1}+1$.
According to Eq.(\ref{eq:u-1}), for $p=1$, we have
\begin{equation} \label{eq:a4}
\begin{aligned}
a_{2}&=(1-r_1^W)(u_{2,1}-u_{1,-1})\\&=
(1-r_1^W)\left[u_{1,1}-u_{1,-1}+r_1^W(u_{2,1}-u_{1,-1})-1\right]\\&=
-2(1-r_1^W)+r_1^Wa_{2}.
\end{aligned}
\end{equation}
Since $r_{p}\in(0,1)$ for $1\le p\le P$, Eq.(\ref{eq:a4}) implies $a_2=-2$ which together with Eq.(\ref{eq:a2}) implies 
\begin{equation} \label{eq:a5}
    (1-r_{p-1}^W)\alpha_p=a_p=-2(p-1),
\end{equation}
and therefore
\begin{equation} \label{eq:alpha1}
    r_{p-1}^W\alpha_p=-2(p-1)\frac{r_{p-1}^W}{1-r_{p-1}^W}.
\end{equation}
According to Eq.(\ref{eq:u-2}) and Eq.(\ref{eq:alpha1}), we have
\begin{align}
u_{P,1}-u_{1,1}=&\sum_{p=1}^{P-1}r_{p}^W(u_{p+1,1}-u_{p,-1})-(P-1)\\=&
\sum_{p=1}^{P-1}r_p^W\alpha_{p+1}-(P-1)\\=&
-2\sum_{p=1}^{P-1}\frac{r_{p}^W}{1-r_{p}^W}p-(P-1).
\end{align}
Since $u_{P,1}=1$, we have
\begin{equation} \label{eq:u1-1}
\begin{split}
u_{1,1}&=u_{P,1}+2\sum_{p=1}^{P-1}\frac{r_{p}^W}{1-r_{p}^W}p+(P-1)\\&=
P+2\sum_{p=1}^{P-1}\frac{r_{p}^W}{1-r_{p}^W}p.
\end{split}
\end{equation}

Similarly, for $v_{p,\delta}$, we also have
\begin{align*}
&v_{p,1}=r_p^W (v_{p,-1}+1)+(1-r_p^W) (v_{p+1,1}+1)\\
&v_{p,-1}=r_{p-1}^W  (v_{p,1}+1)+(1-r_{p-1}^W) (v_{p-1,-1}+1).
\end{align*}
With the same analysis, we have
\begin{align}
&b_{p+1}-b_p=-2 \label{eq:bp}\\
&v_{p,-1}-v_{p-1,-1}=r_{p-1}^W(v_{p,1}-v_{p-1,-1})+1, \label{eq:v3}
\end{align}
where $b_p:=(1-r_{p-1}^W)\beta_p$ and $\beta_p:=v_{p,1}-v_{p-1,-1}$.
According to Eq.(\ref{eq:v3}), we have
\begin{align} \label{eq:v4}
v_{P,-1}-v_{1,-1}=\sum_{p=1}^{P-1}r_p^W\beta_{p+1}+(P-1).
\end{align}
By definition, $v_{P,1}=v_{P,-1}+1$. 
According to Eq.(\ref{eq:v3}), for $p=P$, we have
\begin{equation} \label{eq:b3}
\begin{aligned}
b_{P}&=(1-r_{P-1}^W)(v_{P,1}-v_{P-1,-1})\\&=
(1-r_{P-1}^W)\left[v_{P,1}-v_{P,-1}+r_{P-1}^W(v_{P,1}-v_{P-1,-1})+1\right]\\&=
r_{P-1}^Wb_{P} + 2(1-r_{P-1}^W).
\end{aligned}
\end{equation}
Since $r_{p}\in(0,1)$ for $1\le p\le P$, Eq.(\ref{eq:b3}) implies $b_{P}=2$. 
Then according to Eq.(\ref{eq:bp}), we have
\begin{equation*}
b_{p}=2(P-p+1),
\end{equation*}
and therefore
\begin{equation} \label{eq:v5}
r_{p-1}^W\beta_{p}=2\frac{r_{p-1}^W}{1-r_{p-1}^W}(P-p+1).
\end{equation}
By Eq.(\ref{eq:v4}) and Eq.(\ref{eq:v5}), we have
\begin{equation*}
\begin{aligned}
v_{P,-1}-v_{1,-1}=2\sum_{p=1}^{P-1}\frac{r_{p}^W}{1-r_{p}^W}(P-p)+(P-1).
\end{aligned}
\end{equation*}
Since $v_{1,-1}=1$, we have
\begin{align} \label{eq:vP-1}
v_{P,-1} = 2\sum_{p=1}^{P-1}\frac{r_{p}^W}{1-r_{p}^W}(P-p)+P.
\end{align}

According to Eq.(\ref{eq:ERTT}), Eq.(\ref{eq:u1-1}) and Eq.(\ref{eq:vP-1}), we have
\begin{align*}
\hE[T]=W(u_{1,1}+u_{P,-1})=2WP+2WP\sum_{p=1}^{P-1}\frac{r_p^W}{1-r_p^W}.
\end{align*}
\end{proof}

The proof is a generalization of Theorem 1 \citep{Syed_jrssb}; when $W=1$, the generalized DEO scheme recovers the DEO scheme. For the self-consistency of our analysis, we present it here anyway.

\subsection{Analysis of optimal window size}
\label{optimal_window_size}
\begin{proof}[Proof of Theorem \ref{col:W_approx}]
By treating $W\geq 1$ as a continuous variable and taking the derivative of $\hE[T]$ with respect to $W$ and we can get
\begin{equation} \label{eq:dWRTT1}
\begin{aligned}
\frac{\partial}{\partial W}\hE[T]&=
2P\left[1+\sum_{p=1}^{P-1}\frac{r_p^W}{(1-r_p^W)}+W\sum_{p=1}^{P-1}\frac{r_p^{W}\log r_p}{(1-r_p^W)^2}\right]\\&=
2P\left\{1+\sum_{p=1}^{P-1}\frac{r_p^W-r_p^{2W}+Wr_p^{W}\log r_p}{(1-r_p^W)^2}\right\}\\&=
2P\left\{1+\sum_{p=1}^{P-1}r_p^W\frac{1-r_p^{W}+W\log r_p}{(1-r_p^W)^2}\right\}.
\end{aligned}
\end{equation}

Assume that $r_p=r\in(0,1)$ for $1\le p\le P$. Then we have
\begin{equation} \label{eq:RTT_eqr}
\hE[T]=2WP+2WP(P-1)\frac{r^W}{1-r^W}.
\end{equation}
\begin{equation} \label{eq:dWRTT_v2}
\frac{\partial}{\partial W}\hE[T]=\frac{2P}{(1-r^W)^2}\left\{(1-r^W)^2+(P-1)r^W(1-r^W+W\log r)\right\}.
\end{equation}

Define $x:=r^W\in(0,1)$. Hence $W=\log_r(x)=\frac{\log x}{\log r}$ and
\begin{align} \label{eq:dWRTT_v3}
\frac{\partial}{\partial W}\hE[T]=f(x):=\frac{2P}{(1-x)^2}\left\{(1-x)^2+(P-1)x(1-x+\log x)\right\}
\end{align}
Thus it suffices to analyze the sign of the function $g(x):=(1-x)^2+(P-1)x(1-x+\log(x))$ for $x\in(0,1)$. For $g(x)$, we have
\begin{equation*}
g'(x)=(4-2P)(x-1) + (P-1) \log(x)
\end{equation*}
and
\begin{equation*}
g''(x)=4-2P + \frac{P-1}{x}.
\end{equation*}
Thus, $\lim_{x\rightarrow0^+}g'(x)=-\infty$, $\lim_{x\rightarrow1}g'(x)=g'(1)=0$ and $g''(x)$ is monotonically decreasing for $x>0$.

\begin{itemize}
\item For $P>2$, we know that $g''(x)>0$ when $0<x<\frac{P-1}{2(P-2)}$ and $g''(x)<0$ when $x<\frac{P-1}{2(P-2)}$. Therefore, $g'(x)$ is maximized at $\frac{P-1}{2(P-2)}$.
\newline
    $\qquad (\text{i})$ If $P=3$, by $\log(1+y)<y$ for $y>0$, we have $g'(x)=2\big(\log(1+(x-1))-(x-1)\big)<0$ for any $x\in(0,1)$. Thus $g(x)>g(1)=0$ for $x\in(0,1)$ and therefore $\frac{\partial}{\partial W}\hE[T]>0$ for $W\in\hN^{+}$. $\hE[T]$ is globally minimized at $W=1$.
    \newline
    $\qquad (\text{ii})$ If $P>3$, 
    we have the following lemma and the proof is postponed in section \ref{proof_tech_lemma}.
    \begin{lemma}[Uniqueness of the solution] \label{lem:g_property}
    For $P>3$, there exists a unique solution $x^*\in(0,1)$ such that $g(x)>0$ for $\forall x\in(0,x^*)$ and $g(x)<0$ for $\forall x \in(x^*,1)$. Moreover, $W^*=\log_{r}(x^*)$ is the global minimizer for the round trip time.
    \end{lemma}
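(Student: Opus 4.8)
The plan is to determine the sign of $g$ on $(0,1)$ through a two-stage monotonicity analysis, first pinning down the shape of $g'$ from $g''$, then the shape of $g$ from $g'$, and finally to translate the resulting sign pattern back to the round trip time via the change of variables $x=r^W$.

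First I would exploit the facts already established just before the lemma: $g''(x)=4-2P+\frac{P-1}{x}$ is strictly decreasing and vanishes exactly at $x_0:=\frac{P-1}{2(P-2)}$, which lies in $(0,1)$ precisely because $P>3$ (the inequality $x_0<1$ is equivalent to $P-1<2P-4$). Hence $g'$ is strictly increasing on $(0,x_0)$ and strictly decreasing on $(x_0,1)$, i.e.\ $g'$ is unimodal with its maximum at $x_0$. Since $g'$ decreases on $(x_0,1)$ and $g'(1)=0$, we get $g'(x)>0$ for all $x\in(x_0,1)$, and in particular $g'(x_0)>0$. Combined with $g'(0^+)=-\infty$, the intermediate value theorem produces a \emph{unique} $x_1\in(0,x_0)$ with $g'(x_1)=0$, so that $g'<0$ on $(0,x_1)$ and $g'>0$ on $(x_1,1)$. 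Consequently $g$ is strictly decreasing on $(0,x_1)$ and strictly increasing on $(x_1,1)$, with a single interior minimum at $x_1$.

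Next I would evaluate the two boundary values. Using $\lim_{x\to0^+}x\log x=0$ one gets $g(0^+)=1>0$, while a direct substitution gives $g(1)=0$. Because $g$ increases on $(x_1,1)$ up to $g(1)=0$, it follows that $g<0$ throughout $(x_1,1)$, and in particular $g(x_1)<0$. On the remaining interval $g$ decreases from $g(0^+)=1>0$ to $g(x_1)<0$, so there is a \emph{unique} $x^*\in(0,x_1)$ with $g(x^*)=0$, with $g>0$ on $(0,x^*)$ and $g<0$ on $(x^*,x_1)$. Merging this with $g<0$ on $(x_1,1)$ yields the claimed dichotomy: $g>0$ on $(0,x^*)$ and $g<0$ on $(x^*,1)$, with $x^*$ the unique zero of $g$ in $(0,1)$.

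Finally, to identify the global minimizer of the round trip time, I would recall from Eq.(\ref{eq:dWRTT_v3}) that $\frac{\partial}{\partial W}\hE[T]=\frac{2P}{(1-x)^2}\,g(x)$ with $x=r^W$, and that $x=r^W$ is a strictly decreasing bijection of $W\in(0,\infty)$ onto $(0,1)$. Since $\frac{2P}{(1-x)^2}>0$, the sign of the derivative matches that of $g(x)$: for $W<W^*$ we have $x=r^W>x^*$ so $g<0$ and $\hE[T]$ is decreasing, while for $W>W^*$ we have $x<x^*$ so $g>0$ and $\hE[T]$ is increasing, where $W^*=\log_r(x^*)$. Thus $\hE[T]$ is unimodal in $W$ with its unique global minimum at $W^*$. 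The main obstacle is the bookkeeping in this nested monotonicity argument—correctly locating $x_0$ inside $(0,1)$, confirming $g'(x_0)>0$ so that the single sign change of $g'$ is forced, and then chaining the sign of $g'$ into the sign of $g$—rather than any single hard estimate; the boundary limit $g(0^+)=1$ via $x\log x\to0$ is the one analytic point that must be handled with care.
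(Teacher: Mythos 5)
Your proof is correct and follows essentially the same route as the paper's: the same two-stage monotonicity analysis (sign of $g''$ determines the unimodal shape of $g'$, which combined with $g'(0^+)=-\infty$ and $g'(1)=0$ forces a single sign change of $g'$, which combined with $g(0^+)=1$ and $g(1)=0$ forces a single zero of $g$), followed by the same translation through $x=r^W$ to the sign of $\frac{\partial}{\partial W}\hE[T]$. If anything, you are slightly more careful than the paper in checking that $\frac{P-1}{2(P-2)}\in(0,1)$ requires $P>3$ and in keeping the critical point of $g'$ and the zero of $g'$ notationally distinct.
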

\item For $P=2$, we know that $g''(x)>0$ for $x\in(0,1)$. Thus $g'(x)<g(1)=0$ for $x\in(0,1)$ and therefore $g(x)>g'(1)=0$ for $x\in(0,1)$. Then according to Eq.(\ref{eq:dWRTT1}), $\frac{\partial}{\partial W}\hE[T]>0$ for $W\in\hN^{+}$. $\hE[T]$ is globally minimized at $W=1$.
\end{itemize}

In what follows, we proceed to prove that $\frac{1}{P\log P}$ is a good approximation to $x^*$. In fact, we have
\begin{equation*}
\begin{aligned}
g\bigg(\frac{1}{P\log P}\bigg)&=\left(1-\frac{1}{P\log P}\right)^2+\frac{P-1}{P\log P}\left(1-\frac{1}{P\log P}-\log P-\log(\log P)\right)\\&=
1+o\left(\frac{1}{P}\right)-1+\frac{1}{\log P}-\frac{\log (\log P)}{\log P}+O\left(\frac{1}{P}\right)\\&=
-\frac{\log(\log P)}{\log P}+O\left(\frac{1}{\log P}\right)
\end{aligned}    
\end{equation*}
Thus, $\lim_{P\rightarrow\infty} g(\frac{1}{P\log P})=0$.

For $x=\frac{1}{P\log P}$, we have $W=\log_r\left(\frac{1}{P\log P}\right)=\frac{\log P+\log \log P}{-\log r}$.
Then according to Eq.(\ref{eq:RTT_eqr}), for $P\ge 4$,
\begin{equation} \label{eq:optimal_ET}
\begin{aligned}
\hE[T]&=2P\frac{\log P+\log \log P}{-\log r}\left[1+(P-1)\frac{\frac{1}{P\log P}}{1-\frac{1}{P\log P}}\right]\\&=
\left(1+\frac{1}{\log P}\frac{1}{1-\frac{1}{P\log P}}\right)\frac{2}{-\log r}\left(P\log P+P\log \log P\right)\\&\le
\left(1+\frac{1}{\log 4}\frac{1}{1-\frac{1}{4\log 4}}\right)\frac{2}{-\log r}\left(P\log P+P\log \log P\right)\\&<
\frac{4}{-\log r}\left(P\log P+P\log \log P\right)
\end{aligned}
\end{equation}

In conclusion, for $P=2,3$, the maximum round trip rate is achieved when the window size $W=1$. For $P\ge 4$, with the window size $W=\frac{\log P+\log \log P}{-\log r}$, the round trip rate is at least $\Omega\left(\frac{-\log r}{\log P}\right)$. 
\end{proof}

\textbf{Remark: } Given finite chains with a large rejection rate, the round trip time is only of order $O(P\log P)$ by setting the optimal window size $W\approx\left\lceil\frac{\log P+\log\log P}{-\log r}\right\rceil$. By contrast, the vanilla DEO scheme with a window of size 1 yields a much longer time of $O(P^2)$, where $\frac{1}{-\log(r)}=O(\frac{r}{1-r})$ based on Taylor expansion and a large $r\gg 0$.

\subsubsection{Technical Lemma}
\label{proof_tech_lemma}
\begin{proof}[Proof of Lemma \ref{lem:g_property}]
To help illustrate the analysis below, we plot the graphs of $g'(x)$ and $g(x)$ for $x\in(0,1)$ and $P=5$ in Figure \ref{fig:g}. For $P>3$, since $g'(x)$ is maximized at $x=\frac{P-1}{2(P-2)}\in(0,1)$ with $g''(x)>0$ when $0<x<\frac{P-1}{2(P-2)}$ and $g''(x)<0$ when $\frac{P-1}{2(P-2)}<x<1$,  $\lim_{x\rightarrow0^+}g'(x)=-\infty$, and $g'(1)=0$, we know that $g'(\frac{P-1}{2(P-2)})> 0$ and there exists $x_0\in(0,\frac{P-1}{2(P-2)})$ such that $g'(x)<0$ (i.e., $g'(x)$ is monotonically decreasing) for any $x\in(0,x_0)$ and $g'(x)>0$ (i.e., $g'(x)$ is monotonically increasing) for any $x\in(x_0,1)$. Then $g(x)$ on $(0,1)$ is globally minimized at $x=x_0$. Moreover, $\lim_{x\rightarrow0^+}g(x)=1$ and $g(1)=0$. Thus, $g(x_0)<g(1)=0$ and there exists $x^*\in(0,x_0)\subsetneq(0,1)$ such that $g(x)>0$ if $x\in(0,x^*)$ and $g(x)<0$ if $x\in(x^*,1)$. 

Meanwhile, by Eq.(\ref{eq:dWRTT1}) and the definition of $x$, we know that the sign of $\frac{\partial}{\partial W}\hE[T]$ is the same with that of $g(x)$ for $W=\log_r(x)$. Thus, $\frac{\partial}{\partial W}\hE[T]<0$ when $W< W^*:=\log_{r}(x^*)$ and $\frac{\partial}{\partial W}\hE[T]>0$ when $W>W^*$, which implies that $\hE[T]$ is globally minimized at $W^*=\log_{r}(x^*)$ with some $x^*\in(0,1)$.

\begin{figure*}[!ht]
  \centering
  \vskip -0.1in
  \subfigure[$g'(x)$]{\includegraphics[width=5.5cm, height=5cm]{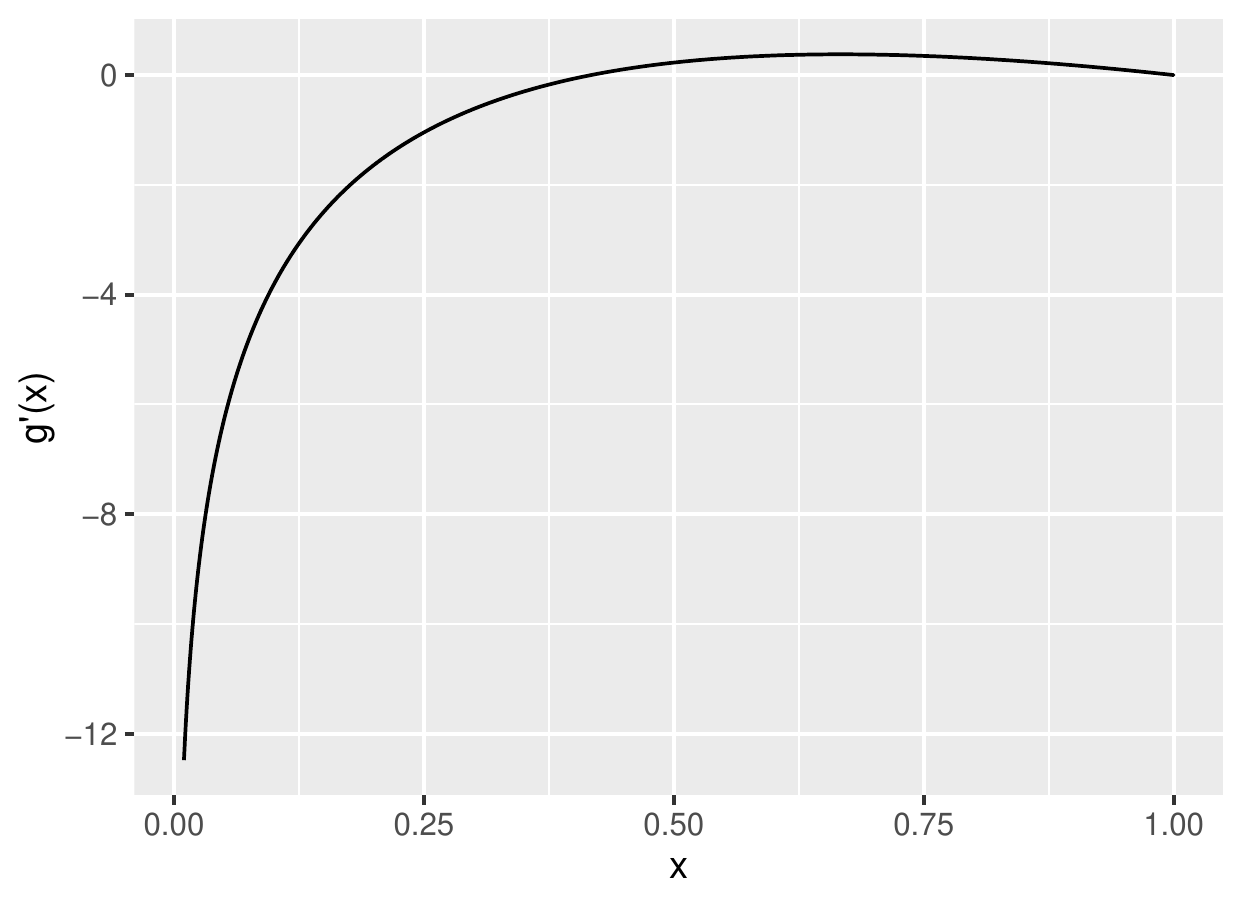}}\hspace*{0.5em} \subfigure[$g(x)$]{\includegraphics[width=5.5cm, height=5cm]{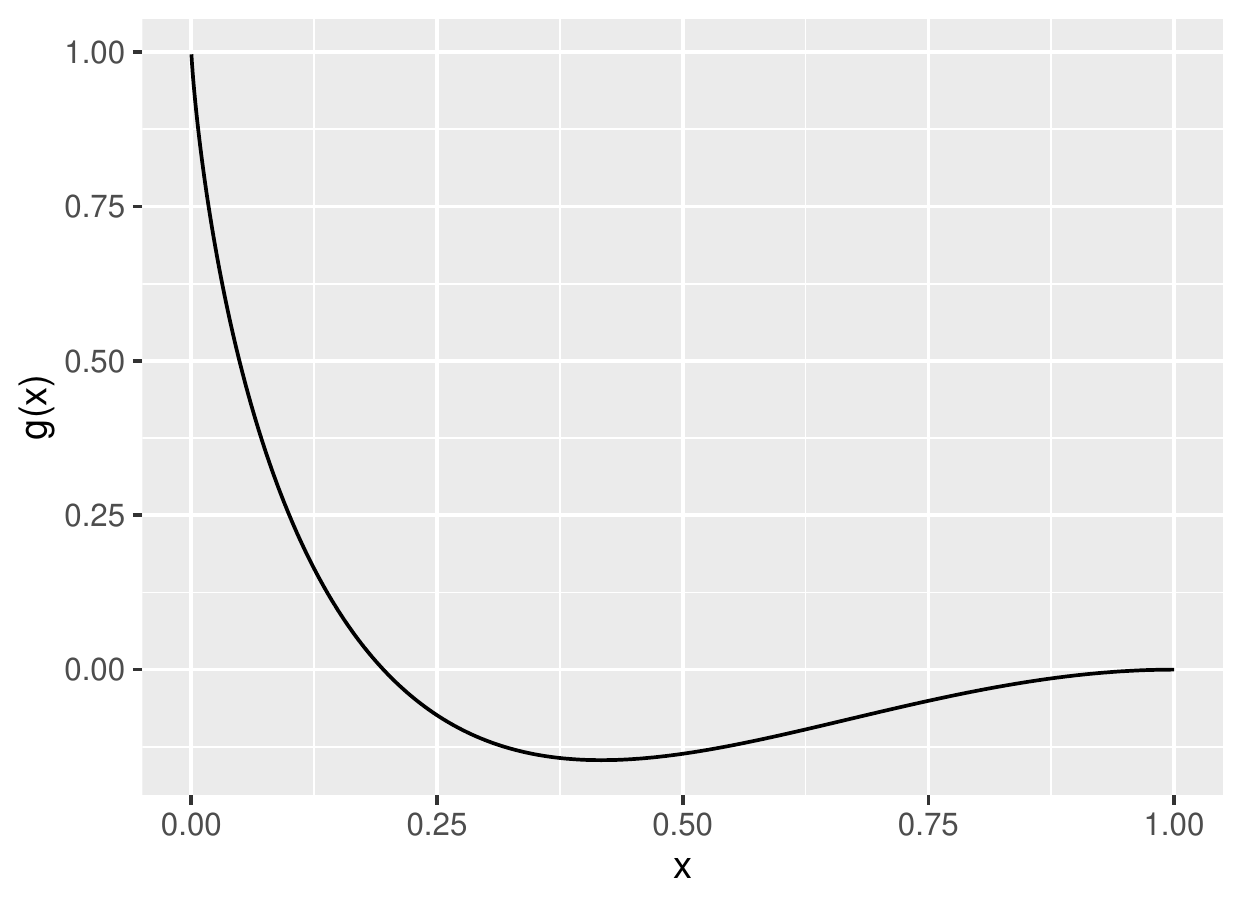}}
    \vskip -0.1in
  \caption{Illustration of functions $g'(x)$ and $g(x)$ for $x\in(0,1)$ and $P=5$}
  \label{fig:g}
\end{figure*}

\end{proof}

\subsection{Discussions on the optimal number of chains in big data}

\label{opt_num_chains}

To shed light on the suggestions of the optimal number of chains, we lay out two additional assumptions:
\begin{itemize}
    \setlength{\itemindent}{-2em}
    \item (B3) Integrability: $U^3$ is integrable with respect to $\pi^{(1)}$ and $\pi^{(P)}$.
    \item (B4) Approximate geometric spacing: Assume that $\frac{\tau^{(p+1)}}{\tau^{(p)}}\equiv \left(\frac{\tau^{(P)}}{\tau^{(1)}} \right)^{\frac{1}{P-1}}+o(\frac{1}{P})$ for $p\in[P-1]$.
\end{itemize}

Assumption B3 is a standard assumption in \citet{Syed_jrssb} to establish the convergence of the summation of the rejection rates.
Assumption B4 allows extra perturbations to the geometric spacing assumption \citep{Kofke02} and is empirically verified in the CIFAR100 example in Figure \ref{learning_rate_acceptance_rate_v2}(c).

\begin{proof}[Proof of corollary \ref{col:P_approx}]
Before we start the proof, we denote by $\tilde{r}$ and $\tilde s$ the average rejection and acceptance rates in full-batch settings, respectively. According to section 5.1 in \citet{Syed_jrssb}, Assumptions B1, B2 and B3 lead to
\begin{equation} \label{eq:r_order0}
    \tilde{r}=\frac{\Lambda}{P}+O\left(\frac{1}{P^3}\right),
\end{equation}
where $\Lambda$ is the barrier, which becomes larger if $\pi^{(1)}$ and $\pi^{(P)}$ have a smaller probability overlap \citep{Predescu04, Syed_jrssb}.
Then acceptance rate in full-batch settings is \begin{equation}\label{tilde_s}
    \tilde{s}=1-\tilde{r}=1-\frac{\Lambda}{P}+O\left(\frac{1}{P^3}\right).
\end{equation}

By Lemma D4 \citep{deng_VR} and Eq.(\ref{sigma_p}), a noisy energy estimator with variance $\frac{(\tau^{(p)}-\tau^{(p+1)})^2}{{\tau^{(p)}}^2{\tau{(p+1)}}^2} \sigma_p^2$ yields an average swap rate of $O\left(\tilde s \exp\left(-\frac{(\tau^{(p)}-\tau^{(p+1)})^2}{8{\tau^{(p)}}^2{\tau^{(p+1)}}^2}\right)\sigma_p^2\right)$. In what follows, the average rejection rate in mini-batch settings becomes
\begin{equation} \label{eq:rejection_rate0}
r=1- O\left(\tilde s \exp\left(-\frac{(\tau^{(p)}-\tau^{(p+1)})^2}{8{\tau^{(p)}}^2{\tau^{(p+1)}}^2}\right)\sigma_p^2\right),
\end{equation}
where $p\in[P]$ and more accurate estimates is studied in proposition 2.4 \citep{Gelman97}.

Define $\Delta_{\tau}=\frac{\tau^{(P)}}{\tau^{(1)}}$. By assumption B4 and Taylor's theorem, we have
\begin{equation} \label{eq:rho}
\bigg(\frac{\tau^{(p+1)}-\tau^{(p)}}{\tau^{(p)}}\bigg)^2:=\bigg(\Delta_{\tau}^{\frac{1}{P-1}}-1+o\bigg(\frac{1}{P}\bigg)\bigg)^2=\bigg(\frac{\log \Delta_{\tau}}{P-1}\bigg)^2+o\left(\frac{1}{P^2}\right).
\end{equation}

Denote $\gamma_p=\frac{\sigma_p}{\tau^{(p+1)}}$. It follows that
\begin{equation}
\begin{split}
    &\quad\exp\left(-\frac{(\tau^{(p)}-\tau^{(p+1)})^2}{8{\tau^{(p)}}^2{\tau^{(p+1)}}^2}\sigma_p^2\right)\\
    &=\exp\left(-\left(\frac{(\log \Delta_{\tau})^2}{8(P-1)^2}+o\left(\frac{1}{P^2}\right)\right)\frac{\sigma_p^2}{{\tau^{(p)}}^2}\right)\\
    &=\exp\left(-\frac{(\gamma_p\log \Delta_{\tau})^2}{8(P-1)^2}+o\bigg(\frac{1}{P^2}\bigg)\right).
\end{split}
\end{equation}

Plugging it to Eq.(\ref{eq:rejection_rate0}), we have
\begin{equation} \label{eq:r}
r=1-O\left(\left(1-\frac{\Lambda}{P}+O\left(\frac{1}{P^3}\right)\right)\exp\left(-\frac{(\gamma_p\log \Delta_{\tau})^2}{8(P-1)^2}+o\left(\frac{1}{P^2}\right)\right)\right).
\end{equation}

Recall in Eq.(\ref{eq:optimal_ET}), the round trip time follows that
\begin{equation}
\begin{aligned}
\label{new_trip}
\hE[T]= O\bigg(\frac{P\log P}{-\log r}\bigg).
\end{aligned}
\end{equation}

By $\log(1-t)\leq -t$ for $t\in [0, 1)$ and $\frac{1}{1-t}=1+t+O(t^2)$ for a small $t$, we have
\begin{equation} \label{eq:r_bound}
\begin{aligned}
-\frac{1}{\log r}&=-\frac{1}{\log\left\{1-O\left(\left(1-\frac{\Lambda}{P}+O\left(\frac{1}{P^3}\right)\right)\exp\left(-\frac{(\gamma_p\log \Delta_{\tau})^2}{8(P-1)^2}+o\left(\frac{1}{P^2}\right)\right)\right)\right\}}\\
&\le
\frac{1}{O\left(\left(1-\frac{\Lambda}{P}+O\left(\frac{1}{P^3}\right)\right)\exp\left(-\frac{(\gamma_p\log \Delta_{\tau})^2}{8(P-1)^2}+o\left(\frac{1}{P^2}\right)\right)\right)}\\&=
O\left(\left(1+\frac{\Lambda}{P}+O\left(\frac{1}{P^3}\right)\right)\exp\left(\frac{(\gamma_p\log \Delta_{\tau})^2}{8(P-1)^2}+o\left(\frac{1}{P^2}\right)\right)\right).
\end{aligned}
\end{equation}

Combining Eq.(\ref{eq:r_bound}) and Eq.(\ref{new_trip}), we have
\begin{equation} \label{eq:RTT_order}
\begin{aligned}
\hE[T]=&O\bigg(P\log P\left(1+\frac{\Lambda}{P}+O\left(\frac{1}{P^3}\right)\right)\exp\left(\frac{\mu_p^2}{(P-1)^2}+O\left(\frac{1}{P^2}\right)\right)\bigg)\\=&
O\left(P\log P\exp\left(\frac{\mu_p^2}{P^2}\right)\right),
\end{aligned}
\end{equation}
where $\mu_p:=\frac{\gamma_p \log \Delta_{\tau}}{2\sqrt{2}}$. The optimal $P$ to minimize $\E[T]=O\left(P\log P\exp\left(\frac{\mu_p^2}{P^2}\right)\right)$ is equivalent to the minimizer of $h_p(x)=\log\left\{x\log x\exp\left(\frac{\mu_p^2}{x^2}\right)\right\}=\log x+\log \log x+\frac{\mu_p^2}{x^2}$ for $x\ge 4$. Then
\begin{equation}
\begin{aligned}
h'_p(x)=\frac{1}{x}+\frac{1}{x\log x}-\frac{2\mu_p^2}{x^3}
=\frac{x^2(1+\frac{1}{\log x})-2\mu_p^2}{x^3},
\end{aligned}
\end{equation}
Define $s(x)=x^2(1+\frac{1}{\log x})$. Then we have
\begin{align}
s'(x)=2x+\frac{2x\log x-x}{(\log x)^2}>0,
\end{align}
for $x\ge 4$. Thus $s(x)$ increases with $x$ for $x\ge 4$. Obviously, $\lim_{x\rightarrow\infty}s(x)=\infty$. Since $\mu_p\gg 1$, we know $s(4)<0$. Thus, there is a single zero point $P_{\star}\in(4,\infty)$ of $s(x)$ such that $h'_p(x)<0$ if $x\in[4,P_{\star})$ and $h'_p(x)>0$ if $x>P_{\star}$. Therefore, $h_p(x)$ is minimized at $P_{\star}$ with
\begin{equation*} \label{eq:P_optimal}
P_{\star}^2\bigg(1+\frac{1}{\log P_{\star}}\bigg)\in \bigg(2 \min_p\mu_p^2, 2\max_p\mu_p^2\bigg).
\end{equation*}

For any $P_{\star}\geq 2$, we have $\sqrt{2/\left(1+1/\log P_{\star}\right)} \in  (1.1, \sqrt{2})$. Thus, we have the optimal number of chains in mini-batch settings following that
\begin{equation}
\label{optimal_P}
P_{\star}\in  (1.1 \min_p\mu_p, \sqrt{2} \max_p \mu_p) \in \left(\min_p \frac{\sigma_p}{3\tau^{(p+1)}}\log \Delta_{\tau}, \max_p\frac{\sigma_p}{2\tau^{(p+1)}}\log \Delta_{\tau}\right).
\end{equation}
\end{proof}

\subsection{Empirical justification of the optimal number of chains}
\label{estimate_p_star}

To obtain an estimate of the optimal number of chains for PT in big data problems, we study the standard deviation of the noisy energy estimators on CIFAR100 via ResNet20 models. We first pre-train a model and then try different learning rates to check the corresponding standard deviation of energy estimators. The reason we abandon the temperature variable is that the widely used data augmentation has drastically affected the estimation of the temperature  \citep{Florian2020}. Motivated by the linear relation between the learning rate and the temperature in Eq.(\ref{sgd_approx}) and the cold posterior affect with the temperature much smaller than 1 \citep{Aitchison2021}, applying Figure \ref{std_lr} and Eq.(\ref{optimal_P}) concludes that $P_{\star}$ is \emph{achieved in an order of thousands in the CIFAR100 example}.

The conclusion is different from \citet{Syed_jrssb} since big data problems require a much smaller swap rate to maintain the unbiasedness of the swaps. On the one hand, insufficient chains may lead to insignificant swap rates to generate effective accelerations, but on the other hand, introducing too many chains may be too costly in terms of limited memory and computational budget. 

\begin{figure}[!ht]
  \centering
  \vskip -0.1in
  \includegraphics[width=7cm, height=3.5cm]{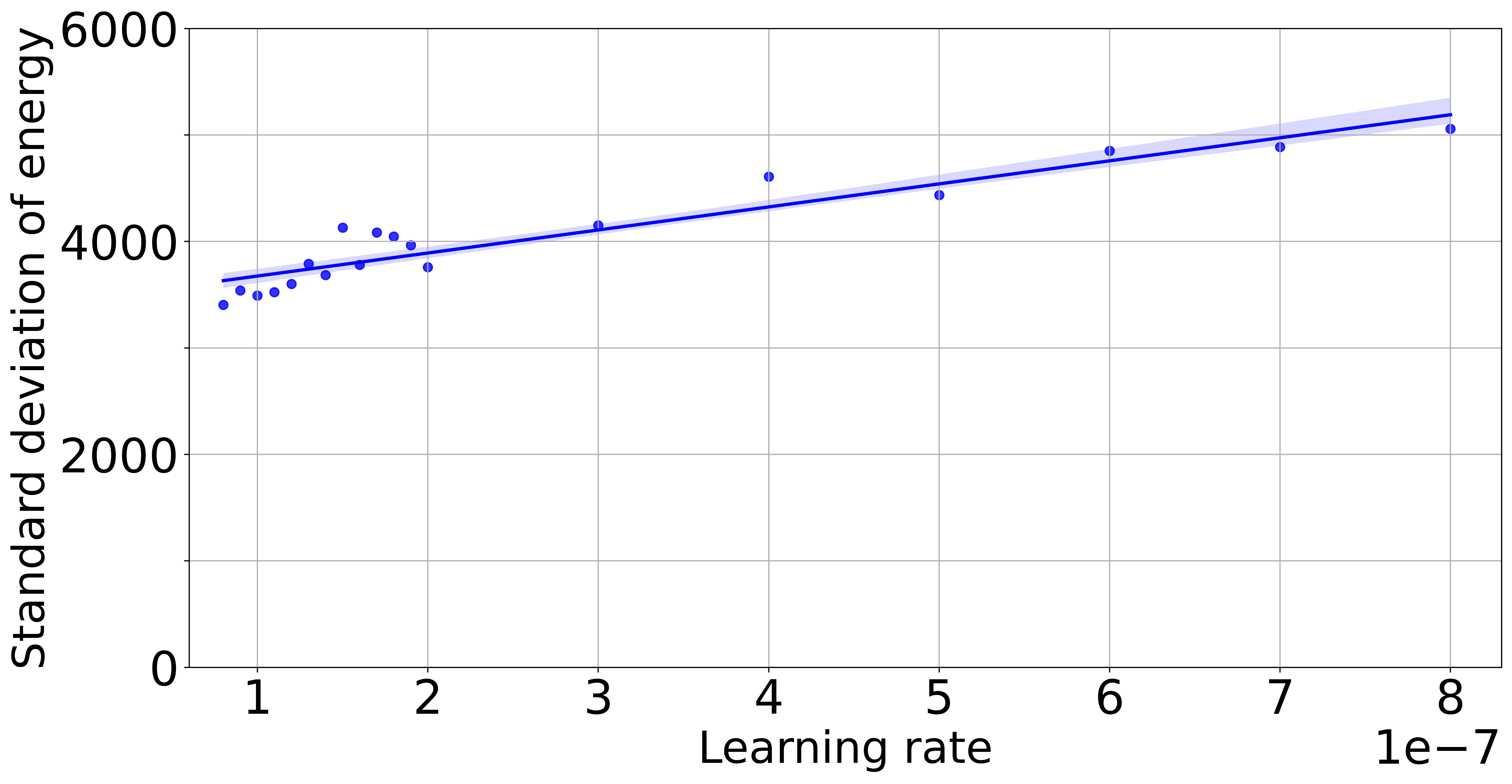}
    \vskip -0.1in
  \caption{Analysis of standard deviation of energy estimators with respect to different learning rates through ResNet20 on CIFAR100 dataset. Note that we have transformed the average likelihood into the sum of likelihood with a total number of 50,000 datapoints. Hence, the learning rate is also reduced by 50,000 times. }
  \label{std_lr}
\end{figure}

\section{Approximation error for SGD-based exploration kernels}

This section proposes to analyze the approximation error when we adopt SGDs as the approximate exploration kernels. Since it is independent of section \ref{analysis_of_round_trip}, different assumptions may be required.

The first subsection shows that for any Gaussian-distributed energy difference estimator $\widetilde U(\bbeta^{(p)})-\widetilde U(\bbeta^{(p+1)})$, there exists an optimal $\mathbb{C}_{\star}$ such that the \emph{deterministic swap condition}
$\widetilde U(\bbeta^{(p+1)})+\mathbb{C}_{\star} <\widetilde U(\bbeta^{(p)})$
perfectly approximates the random event $\widetilde S(\bbeta^{(p)}, \bbeta^{(p+1)})>u$, where $u\sim\text{Unif}[0, 1]$.

\subsection{Gap between acceptance rates}
\label{gap_bet_acc}
To approximate the error when we adopt the \emph{deterministic swap condition} Eq.(\ref{deterministic_swap}), we first assume the following condition
\begin{itemize}
    \setlength{\itemindent}{-2em}
    \item (C1) Energy normality: The stochastic energy estimator for each chain follows a normal distribution with a fixed variance.
\end{itemize}

The above assumption has been widely assumed by \citet{penalty_swap99, talldata17, Daniel17}, which can be easily extended to the asymptotic normality assumption depending on large enough batch sizes \citep{Matias19, deng_VR}.

\label{gap_bet_acc_rate}
\begin{proof}[Proof of Lemma \ref{thres_approx}]

Denote $\partial T_p=\frac{1}{\tau^{(p)}}-\frac{1}{\tau^{(p+1)}}>0$ for any $p\in[P-1]$, the energy difference $\partial U_p=U(\bbeta^{(p)})-U(\bbeta^{(p+1)})$. By the energy normality assumption C1 and Eq.(\ref{sigma_p}), the energy difference estimator follows that $\partial \widetilde U_p=\partial U_p(\cdot)+\sqrt{2}\sigma_p\xi$, where $\xi\sim \mathcal{N}(0, 1)$.

Recall from Eq.(\ref{swap_rate_numerical}) that the bias-corrected acceptance rate follows that
\begin{equation}
\begin{split}
    \widetilde S(\bbeta_t^{(p)}, \bbeta_t^{(p+1)})&=1\wedge e^{ \partial T_p\big(\widetilde U(\bbeta_t^{(p)})-\widetilde U(\bbeta_t^{(p+1)})\big)-\partial T_p^2\sigma_p^2}\\
    &=1\wedge e^{ \partial T_p \partial U_p+\sqrt{2}\partial T_p\sigma_p \xi -\partial T_p^2\sigma_p^2},
\end{split}
\end{equation}

(i) For a uniformly distributed variable $\mu\sim \text{Unif}[0, 1]$, the base swap function satisfies that
\begin{equation}
\begin{split}
\label{corrected_swap}
    \mathbb{P}\bigg(\widetilde S(\bbeta_t^{(p)}, \bbeta_t^{(p+1)})>\mu|\partial U_p\bigg)&=\int_0^1 \mathbb{P}(\widetilde S(\bbeta_t^{(p)}, \bbeta_t^{(p+1)})>u|\partial U_p) d u\\
    &=\int_0^1 \mathbb{P}(e^{ \partial T_p \partial U_p+\sqrt{2}\partial T_p\sigma_p \xi -\partial T_p^2\sigma_p^2}>u|\partial U_p) d u \\
    &=\int_0^1 \mathbb{P}\bigg(\xi>\frac{-\partial U_p+\partial T_p \sigma_p^2+\frac{\log u}{\partial T_p}}{\sqrt{2}\sigma_p}\bigg) d u \\
\end{split}
\end{equation}

(ii) For the deterministic swaps based on Eq.(\ref{deterministic_swap}), the approximate swap function follows that
\begin{equation}
\begin{split}
\small
\label{determinisitc_swap}
    \mathbb{P}\bigg(\widetilde U(\bbeta_k^{(p+1)})+\mathbb{C}<\widetilde U(\bbeta_k^{(p)})|\partial U_p \bigg)&=\mathbb{P}(\mathbb{C}<\partial U_p+\sqrt{2}\sigma_p \xi|\partial U_p)=\mathbb{P}\bigg(\xi>\frac{-\partial U_p+\mathbb{C}}{\sqrt{2}\sigma_p} \bigg).\\
\end{split}
\end{equation}

Denote by $D(u, \mathbb{C})=\bigg|\mathbb{P}\bigg(\xi>\frac{-\partial U_p+\partial T_p \sigma_p^2+\frac{\log u}{\partial T_p}}{\sqrt{2}\sigma_p}\bigg)-\mathbb{P}\bigg(\xi>\frac{-\partial U_p+\mathbb{C}}{\sqrt{2}\sigma_p}\bigg)\bigg|\in[0, 2)$. Combining (\ref{corrected_swap}) and (\ref{determinisitc_swap}), we can easily derive that
\begin{equation*}
\begin{split}
\small
    \Delta(\mathbb{C})&=\mathbb{P}\bigg(\widetilde S(\bbeta_t^{(p)}, \bbeta_t^{(p+1)})>\mu|\partial U_p\bigg)-\mathbb{P}\bigg(\widetilde U(\bbeta_k^{(p+1)})+\mathbb{C}<\widetilde U(\bbeta_k^{(p)})|\partial U_p \bigg)\\
    &=\int_0^{e^{-\partial T_p^2 \sigma_p^2+ \partial T_p \mathbb{C}}}  \underbrace{\mathbb{P}\bigg(\xi>\frac{-\partial U_p+\partial T_p \sigma_p^2+\frac{\log u}{\partial T_p}}{\sqrt{2}\sigma_p}\bigg) - \mathbb{P}\bigg(\xi>\frac{-\partial U_p+\mathbb{C}}{\sqrt{2}\sigma_p} \bigg)}_{D(u, \mathbb{C}) \text{\ for small enough } u} d u\\
    &\qquad+\int_{e^{-\partial T_p^2 \sigma_p^2+ \partial T_p \mathbb{C}}}^1  \underbrace{\mathbb{P}\bigg(\xi>\frac{-\partial U_p+\partial T_p \sigma_p^2+\frac{\log u}{\partial T_p}}{\sqrt{2}\sigma_p}\bigg) - \mathbb{P}\bigg(\xi>\frac{-\partial U_p+\mathbb{C}}{\sqrt{2}\sigma_p} \bigg)}_{-D(u, \mathbb{C}) \text{\ for large enough } u} d u\\
    &=\int_0^{e^{-\partial T_p^2 \sigma_p^2+ \partial T_p \mathbb{C}}} D(u, \mathbb{C}) d u  - \int_{e^{-\partial T_p^2 \sigma_p^2+\partial T_p \mathbb{C}}}^1 D(u, \mathbb{C}) d u
\end{split}
\end{equation*}

Clearly, $\Delta(\mathbb{C})$ is continuous in $\mathbb{C}$ and it is straightforward to verify that 
\begin{equation*}
\begin{split}
    \Delta(\partial T_p \sigma_p^2)&= \int_0^{e^{-\partial T_p^2 \sigma_p^2+ \partial T_p^2 \sigma_p^2}} D(u, \partial T_p \sigma_p^2) d u-\int_{e^{-\partial T_p^2 \sigma_p^2+ \partial T_p^2 \sigma_p^2}}^1 D(u, \partial T_p \sigma_p^2) d u\\
    &=\int_0^{1} D(u, \partial T_p \sigma_p^2) d u\geq 0 \\
\end{split}
\end{equation*}
Similarly, $\Delta(-\infty)=-\int_0^{1} D(u, \mathbb{C}) d u\leq 0$. Moreover, the physical construction suggests that the threshold $\mathbb{C}$ should be strictly positive to avoid radical swap attempts, which implies that there is an optimal solution $\mathbb{C}_{\star}\in (0, \partial T_p \sigma_p^2]$ that solves $\Delta(\mathbb{C}_{\star})=0$.

\end{proof}

\begin{remark}
Note that the optimal $\mathbb{C}_{\star}$ may lead to few swaps given a finite number of iterations and sometimes it is suggested to trade in some accuracy to obtain more accelerations \citep{deng2020}. As such, by setting a desired swap rate via the iterate (\ref{unknown_threhold}), we can estimate the unknown threshold $\mathbb{C}$ by stochastic approximation \citep{Robbins51}. Furthermore, as discussed in Lemma B2 \citep{deng_VR}, the average swap rate follows that $\E[\widetilde S(\bbeta_t^{(p)}, \bbeta_t^{(p+1)})]=O(e^{-\frac{\partial T_p^2 \sigma_p^2}{8}})$. This suggests that for a well-approximated threshold $ \mathbb{C}$, the error is at most $\E[\Delta(\mathbb{C})]= O(e^{-\frac{\partial T_p^2 \sigma_p^2}{8}})$.
\end{remark}

\subsection{Approximate upper bound}
\label{approx_upper_bound}
The second subsection completes the proof regarding the numerical approximation of SGD-based exploration kernels. Nevertheless, we still require some standard assumptions.

\begin{itemize}
    \setlength{\itemindent}{-2em}
    \item (C2) Smoothness: The function $U(\cdot)$ is $C$-smooth if there exists a positive Lipschitz constant $C$ such that $\|\nabla U(x)-\nabla U(y)\|_2\le C\|x-y\|_2$ for every $x,y\in\hR^d$.
    \item (C3) Dissipativity: The function $U(\cdot)$ is $(a,b)$-dissipative if there exist positive constants $a$ and $b$ such that $\langle x,\nabla U(x)\rangle \ge a\|x\|^2-b$.
    \item (C4) Gradient normality: The stochastic gradient noise $\varepsilon(\cdot)$ in Eq.(\ref{sgd_approx}) follows a Normal distribution $\mathcal{N}(0, \mathcal{M})$ with a fixed positive definite matrix $\mathcal{M}$.
\end{itemize}

The assumptions C2 and C3 are standard to show the geometric ergodicity of Langevin diffusion and the diffusion approximations for non-convex functions \citep{mattingly02, Maxim17, Xu18}. The assumption C4 is directly motivated by \citet{Mandt} to track the Ornstein Uhlenbeck (OU) process with a Gaussian invariant measure.

\begin{proof}[Proof of Theorem \ref{bound_approx}] 
The transition kernel $\mathcal{T}$ of the continuous-time replica exchange Langevin diffusion (reLD) follows that $\mathcal{T}=\prod_p \mathcal{T}^{(p)}$, where each sub-kernel $\mathcal{T}^{(p)}$ draws a candidate $\btheta\in\mathbb{R}^d$ with the following probability
\begin{equation*}
\small
\begin{split}
    \mathcal{T}^{(p)}(\btheta|\bbeta^{(p)}, \bbeta^{(q)})=(1- S(\bbeta_t^{(p)}, \bbeta_t^{(q)}))Q_{p}(\btheta|\bbeta^{(p)}) + S(\bbeta_t^{(p)}, \bbeta_t^{(q)}) Q_q(\btheta|\bbeta^{(q)}),
\end{split}
\end{equation*}
where $p,q\in\{1,2,\cdots, P\}$, $|p-q|=1$ and $q$ is selected based on the DEO$_{\star}$ scheme, $Q_p(\btheta|\bbeta)$ is the proposal distribution in the $p$-th chain that models the probability to draw the parameter $\btheta$ via Langevin diffusion conditioned on $\bbeta$, and $S(\bbeta_t^{(p)}, \bbeta_t^{(q)})$ follows the swap function in Eq.(\ref{swap_function}). 

Now we consider the following approximations:
\begin{equation*}
    \text{reLD} \xRightarrow{\text{I}} \text{reSGLD} 
    \xRightarrow{\text{II}} \widehat{\text{DEO}_{\star}}\text{-SGLD}
    \xRightarrow{\text{III}} \text{DEO}_{\star}\text{-SGLD}
    \xRightarrow{\text{IV}} \text{DEO}_{\star}\text{-SGD},
\end{equation*}
where $\widehat{\text{DEO}_{\star}}\text{-SGLD}$ resembles the $\text{DEO}_{\star}$ scheme except that there are \emph{no restrictions on conducting at most one swap}.

\paragraph{I: Numerical approximation via SGLD} By Lemma 1 of \citep{deng2020}, we know that approximating replica exchange Langevin diffusion via reSGLD yields an error depends on the learning rate and the noise in gradient and energy functions; better results can be obtained by following Theorem 6 of \citep{Sato2014ApproximationAO} and the details are omitted.

\paragraph{II: Restrictions on at most one swap} Adopting at most one swap in the $\text{DEO}_{\star}$ scheme includes a stopping time that follows a geometric distribution, which may affect the underlying distribution. Fortunately, the bias can be controlled and becomes smaller in big data problems, which is detailed as follows. 

Note that $\widehat{\text{DEO}_{\star}}\text{-SGLD}$ differs from $\text{DEO}_{\star}$ in that $\text{DEO}_{\star}\text{-SGLD}$ freezes at most $W-1$ attempts of swaps in each neighboring chains in each window, while $\widehat{\text{DEO}_{\star}}\text{-SGLD}$ keeps all of these positions open. Recall that the bias-corrected swap rate follows that $\widetilde S=O\big(S e^{-O(\sigma^2)}\big)$ by invoking Eq.(\ref{swap_rate_numerical}).  Following a similar technique in analyzing the bias through the difference of acceptance rates in Lemma \ref{thres_approx}, we can show that the bias is at most $\widetilde S -\underbrace{\mathbb{P}(\text{Acceptance rate given frozen swaps})}_{=0}=O(e^{-O(\sigma^2)})$. In other words, the \textbf{restrictions on at most one swap only lead to a mild bias in big data problems} and is less of a concern compared to the local trap problems. Empirically, we have verified the relation between the bias and the variance of noisy energy estimator in a simulated example in section \ref{bias_analysis_wrt_std}.

\paragraph{III: Deterministic swap condition} By Lemma \ref{thres_approx} and the mean-value theorem, there exists an optimal correction buffer $\mathbb{C}_{\star}$ to approximate the random event $\widetilde S(\bbeta_t^{(p)}, \bbeta_t^{(p+1)})>\mu$ through $\widetilde U(\bbeta_k^{(p)})+\mathbb{C}_{\star} <\widetilde U(\bbeta_k^{(p+1)})$ in the average sense for any $\bbeta_t^{(p)}$ and $\bbeta_t^{(p+1)}$.
\paragraph{IV: Laplace approximation via SGD} Assumption C4 guarantees that there exists an optimal learning rate $\eta$ to conduct Laplace approximation for the target posterior \citep{Mandt}. Further, by the Bernstein-von Mises Theorem, the approximation error goes to 0 as the number of total data points goes to infinity.

Combining the above approximations, there exists a finite constant $\Delta_{\max}\geq 0$ depending on the learning rates $\eta$, the variance of noisy energy estimators, and the choice of correction terms such that the total approximation error of the SGD exploration kernels with \emph{deterministic swap condition} is upper bounded by $\Delta_{\max}$. 
For any joint probability density $\mu$, the distance between the distributions generated by one step of the exact transition kernel $\mathcal{T}$ and the approximate transition kernel $\mathcal{T}_{\eta}$ in Eq.(\ref{pt_sgd}) is upper bounded by 
\begin{equation*}
\begin{split}
    &\quad\int_{\bbeta} d \Omega(\bbeta) \bigg|\mu\mathcal{T}(\bbeta)-\mathbb{\mu}\mathcal{T_{\eta}}(\bbeta)\bigg|\\
    &\leq \int_{\bbeta} d \Omega(\bbeta) \bigg|\int_{\bm{\phi}, \bm{\psi}} d \mu(\bm{\phi}, \bm{\psi}) \Delta_{\max} \bigg(Q(\bbeta|\bm{\phi}) - Q(\bbeta|\bm{\psi})\bigg) \bigg|\\
    &\leq \Delta_{\max} \int d \Omega(\bbeta)\int_{\bm{\phi}, \bm{\psi}} d \mu(\bm{\phi}, \bm{\psi}) \bigg(Q(\bbeta|\bm{\phi}) + Q(\bbeta|\bm{\psi})\bigg)\\
    &\leq 2\Delta_{\max},
\end{split}
\end{equation*}
where $Q(\cdot | \cdot)=\prod_p Q_p(\cdot | \cdot)$. Thus, the one-step total variation distance between the exact kernel and the proposed approximate kernel is upper bounded by
\begin{equation}
    \|\mu\mathcal{T}(\bbeta)-\mathbb{\mu}\mathcal{T_{\eta}}(\bbeta)\|_{\text{TV}}=\frac{1}{2} \int_{\bbeta} d \Omega(\bbeta) \bigg|\mu\mathcal{T}(\bbeta)-\mathbb{\mu}\mathcal{T_{\eta}}(\bbeta)\bigg|=\Delta_{\max}.
\end{equation}

The following is a restatement of Lemma 3 in the supplementary file of \citet{Anoop14}.
\begin{lemma}\label{approx_bias}
Consider two transition kernels $\mathcal{T}$ and $\mathcal{T}_{\eta}$, which yield stationary distributions $\pi$ and $\pi_{\eta}$, respectively. If $\mathcal{T}$ follows a contraction such that there is a constant $\rho\in [0, 1)$ for all probability distributions $\mu$:
\begin{equation*}
    \|\mu\mathcal{T}-\pi\|_{\text{TV}}\leq \rho \|\mu-\pi\|_{\text{TV}},
\end{equation*}

and the uniform upper bound of the one step error between $\mathcal{T}$ and $\mathcal{T}_{\eta}$ follows that 
\begin{equation*}
    \|\mu\mathcal{T}-\mu\mathcal{T}_{\eta}\|_{\text{TV}}\leq \Delta, \forall \mu,
\end{equation*}
where $\Delta\geq 0$ is a constant. Then the total variation distance between $\pi$ and $\pi_{\eta}$ is bounded by
\begin{equation*}
    \|\pi-\pi_{\eta}\|_{\text{TV}}\leq \frac{\Delta}{1-\rho}.
\end{equation*}

\end{lemma}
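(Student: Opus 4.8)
The plan is to prove the theorem in three movements: first secure the contraction constant $\rho$, then establish the uniform one-step error $\Delta_{\max}$, and finally combine the two through a standard perturbation (geometric-series) argument. The contraction is the easy ingredient. Under the smoothness assumption (C2) and the dissipativity assumption (C3), the replica-exchange Langevin diffusion admits a spectral gap and contracts in total variation; this is exactly the content of the cited results \citep{mattingly02, Maxim17, Xu18}, so I would take $\rho\in[0,1)$ as given and not re-derive it. The substantive work lies in controlling the one-step discrepancy between the exact kernel $\mathcal{T}$ and the proposed kernel $\mathcal{T}_{\eta}$.

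To bound $\|\mu\mathcal{T}-\mu\mathcal{T}_{\eta}\|_{\text{TV}}$ \emph{uniformly in} $\mu$, I would decompose the passage from the exact kernel to the proposed kernel into the four stages $\text{reLD}\Rightarrow\text{reSGLD}\Rightarrow\widehat{\text{DEO}_{\star}}\text{-SGLD}\Rightarrow\text{DEO}_{\star}\text{-SGLD}\Rightarrow\text{DEO}_{\star}\text{-SGD}$ and control each separately. Stage~I (numerical discretization) is handled by Lemma~1 of \citet{deng2020}. Stage~II (restricting to at most one swap) introduces a geometric stopping time, and I would show its bias is at most $\widetilde S-0=O(e^{-O(\sigma^2)})$ by reusing the acceptance-rate-difference technique already developed in Lemma~\ref{thres_approx}. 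Stage~III (the deterministic swap condition) is precisely the content of Lemma~\ref{thres_approx} together with the mean-value theorem, which produces the optimal buffer $\mathbb{C}_{\star}$. Stage~IV (SGD as a Laplace approximation) is controlled by assumption (C4) and the Bernstein--von Mises theorem. Summing these four contributions yields a single finite constant $\Delta_{\max}$.

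I would then make the one-step bound explicit by writing both kernels through their proposals $Q(\cdot\,|\,\cdot)=\prod_p Q_p(\cdot\,|\,\cdot)$: bounding $|\mu\mathcal{T}-\mu\mathcal{T}_{\eta}|$ pointwise by $\Delta_{\max}$ times a sum of two proposal densities, each integrating to one, gives $\int|\mu\mathcal{T}-\mu\mathcal{T}_{\eta}|\le 2\Delta_{\max}$, hence $\|\mu\mathcal{T}-\mu\mathcal{T}_{\eta}\|_{\text{TV}}\le\Delta_{\max}$ after the factor $\tfrac12$ in the definition of total variation. The proof then concludes with the perturbation lemma (the restated Lemma~3 of \citet{Anoop14}). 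Using stationarity $\pi\mathcal{T}=\pi$ and $\pi_{\eta}\mathcal{T}_{\eta}=\pi_{\eta}$, inserting $\pi_{\eta}\mathcal{T}$, and applying the triangle inequality with the contraction gives
\begin{align*}
\|\pi-\pi_{\eta}\|_{\text{TV}}&=\|\pi_{\eta}\mathcal{T}_{\eta}-\pi\mathcal{T}\|_{\text{TV}}\\
&\le\|\pi_{\eta}\mathcal{T}_{\eta}-\pi_{\eta}\mathcal{T}\|_{\text{TV}}+\|\pi_{\eta}\mathcal{T}-\pi\mathcal{T}\|_{\text{TV}}\\
&\le\Delta_{\max}+\rho\|\pi_{\eta}-\pi\|_{\text{TV}},
\end{align*}
and rearranging yields $\|\pi-\pi_{\eta}\|_{\text{TV}}\le\frac{\Delta_{\max}}{1-\rho}$.

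The hard part will be Stage~II: the at-most-one-swap restriction genuinely alters the invariant measure through a geometric stopping time, so the claim that its bias is \emph{uniformly} $O(e^{-O(\sigma^2)})$ --- rather than an error that compounds across the window --- needs care. I expect the cleanest route is to bound the frozen-position bias by the probability that a swap is accepted at all, which is itself exponentially small in $\sigma^2$, so that the refused attempts contribute negligibly in the big-data regime. A secondary subtlety is ensuring that all four per-stage errors are uniform in the input distribution $\mu$, since only then do they aggregate into a single $\mu$-independent constant $\Delta_{\max}$, which is exactly what the final geometric-series step requires.
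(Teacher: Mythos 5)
Your concluding computation is a correct and complete proof of the lemma as stated, and it is worth noting that the paper itself offers no proof at all here --- Lemma~\ref{approx_bias} is presented verbatim as a restatement of Lemma~3 in the supplementary file of \citet{Anoop14}. Your argument is the standard one: insert $\pi_{\eta}\mathcal{T}$ via the triangle inequality, use stationarity ($\pi\mathcal{T}=\pi$, $\pi_{\eta}\mathcal{T}_{\eta}=\pi_{\eta}$), apply the one-step bound and the contraction hypothesis both at $\mu=\pi_{\eta}$ (legitimate, since both hypotheses are quantified over all $\mu$), and rearrange $\|\pi-\pi_{\eta}\|_{\text{TV}}\le\Delta+\rho\|\pi-\pi_{\eta}\|_{\text{TV}}$. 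The rearrangement is valid because total variation distance is bounded by $1$, so the quantity being solved for is finite. This fixed-point form is in fact slightly cleaner than the geometric-series telescoping you advertise in your opening plan, though the two are equivalent.

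The one criticism is scope: roughly three quarters of your proposal --- the four-stage decomposition $\text{reLD}\Rightarrow\text{reSGLD}\Rightarrow\widehat{\text{DEO}_{\star}}\text{-SGLD}\Rightarrow\text{DEO}_{\star}\text{-SGLD}\Rightarrow\text{DEO}_{\star}\text{-SGD}$, the derivation of $\rho$ from assumptions (C2)--(C3), and the construction of $\Delta_{\max}$ --- is devoted to \emph{establishing the hypotheses} of this lemma rather than proving it. That material belongs to the proof of Theorem~\ref{bound_approx}, where the paper indeed carries out exactly that program before invoking Lemma~\ref{approx_bias} as the final step. The lemma itself takes the contraction constant and the uniform one-step error as given, so for the statement actually posed, only your last paragraph is needed, and it is right.
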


Under the smoothness assumption C2 and the dissipative assumption C3, the geometric ergodicity of the continuous-time replica exchange Langevin diffusion to the invariant measure $\pi$ has been established \citep{chen2018accelerating}. Combining the exponential convergence of the KL divergence \citep{deng2020} and the Pinsker's inequality \citep{Pinsker}, we have that for any probability density $\mu$, there exists a contraction parameter $\rho\in (0, 1)$ such that
\begin{equation*}
   \|\mu \mathcal{T}-\pi\|_{\text{TV}}\leq \rho \|\mu -\pi\|_{\text{TV}},
\end{equation*}
where $\rho$ depends on the spectral gap established in \citet{Maxim17} and the swap rate.

Applying Lemma \ref{approx_bias}, the total variation distance can be upper bounded by
\begin{equation*}
    \|\pi-\pi_{\eta}\|_{\text{TV}}\leq \frac{\Delta_{\max}}{1-\rho},
\end{equation*}
which concludes the proof of Theorem \ref{bound_approx}.
\end{proof}

\subsection{Empirical bias analysis caused by the geometric stopping time}
\label{bias_analysis_wrt_std}
We study the impact of geometric stopping time on the simulations. We try to sample from a Gaussian mixture distribution $e^{-U(x)}\sim 0.4 \mathcal{N}(-4, 0.7^2)+0.6\mathcal{N}(3, 0.5^2)$, where $U(\cdot)$ is the energy function but is not accessible. We can only obtain an energy estimator $\widetilde U(\cdot)=U(\cdot) + \mathcal{N}(0, \sigma^2)$ in each iteration. We run the two-chain parallel tempering (PT) algorithm based on generalized windows of different sizes. We fix the learning rate 0.01 and set the temperatures as 1 and 10, respectively. The swap condition in the naive case still follows from Eq. (\ref{swap_rate_numerical}). 

We first try different window sizes ($\text{W}=1, 3, 10, 30, 100, 300, 1000, 3000$) and standard deviations ($\sigma=3, 4, 5$) for the noisy energy estimators and run the algorithm 5,000,000 iterations. We observe in Figure \ref{pt_sd_3}, Figure \ref{pt_sd_4}, and Figure \ref{pt_sd_5} that simply running the naive PT algorithm given geometrically-stopped swaps leads to a bias and such a bias becomes larger as we increase the window size. Those swaps tend to overly accept high-temperature samples, which inspires us to include an additional correction term (window-wise) to alleviate the bias \citep{talldata17, deng2020} by adopting the window-wise corrected swap rate function in Eq.\ref{swap_rate_numerical_further_correction}. We tune the window-wise correction term $\lambda_W$ and observe in Figure \ref{pt_sd_3_corr}, Figure \ref{pt_sd_4_corr}, and Figure \ref{pt_sd_5_corr} that an optimal $\lambda_W$ exists to significantly reduce the resulting bias caused by the geometric stopping time.

\begin{equation}
\label{swap_rate_numerical_further_correction}
    a\widetilde S_W(\bbeta_{k+1}^{(p)}, \bbeta_{k+1}^{(p+1)})=a\cdot\left(1\wedge e^{ \big(\frac{1}{\tau^{(p)}}-\frac{1}{\tau^{(p+1)}}\big)\big(\widetilde U(\bbeta_{k+1}^{(p)})-\widetilde U(\bbeta_{k+1}^{(p+1)})-\big(\frac{1}{\tau^{(p)}}-\frac{1}{\tau^{(p+1)}}\big)\sigma_p^2\big) - \lambda_W}\right).
\end{equation}

\begin{figure*}[!ht]
  \centering
  \subfigure[$\sigma=3$]{\label{pt_sd_3}\includegraphics[width=4.4cm, height=4.2cm]{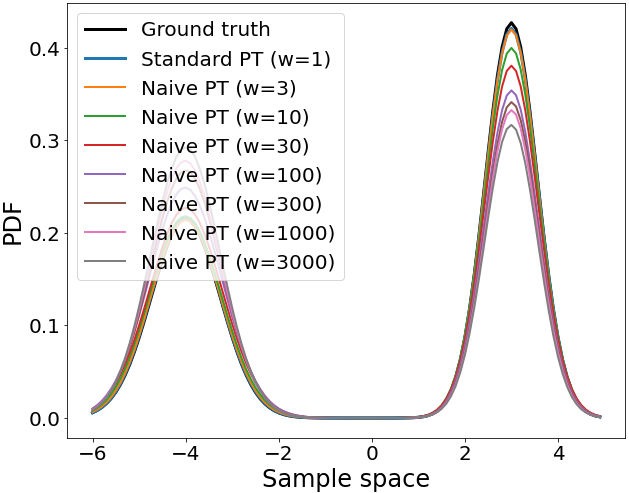}}
  \subfigure[$\sigma=4$]{\label{pt_sd_4}\includegraphics[width=4.4cm, height=4.2cm]{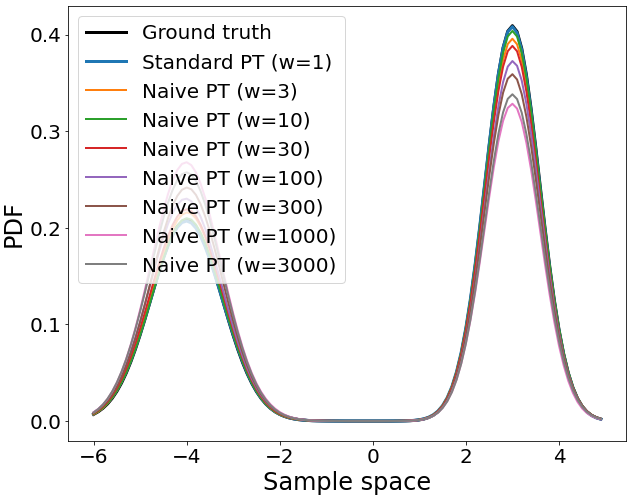}}
  \subfigure[$\sigma=5$]{\label{pt_sd_5}\includegraphics[width=4.4cm, height=4.2cm]{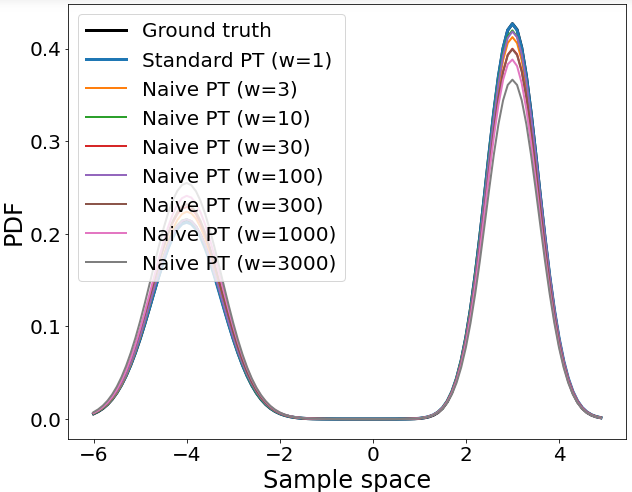}} \\

\subfigure[$\sigma=3$]{\label{pt_sd_3_corr}\includegraphics[width=4.4cm, height=4.2cm]{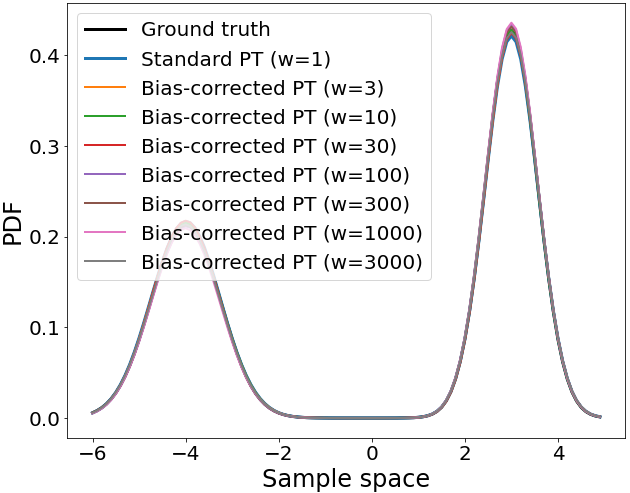}}
  \subfigure[$\sigma=4$]{\label{pt_sd_4_corr}\includegraphics[width=4.4cm, height=4.2cm]{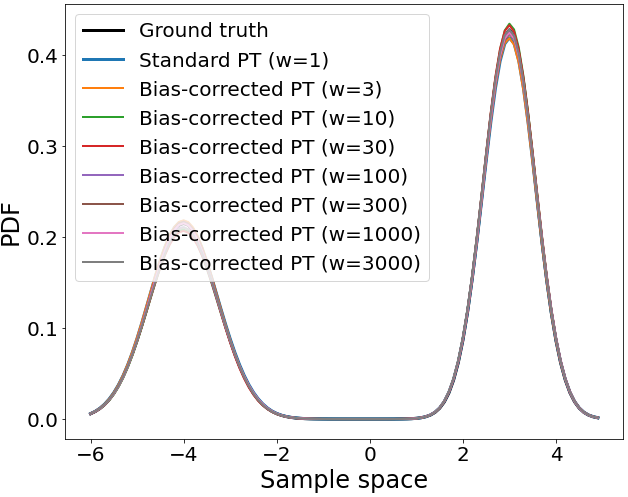}}
  \subfigure[$\sigma=5$]{\label{pt_sd_5_corr}\includegraphics[width=4.4cm, height=4.2cm]{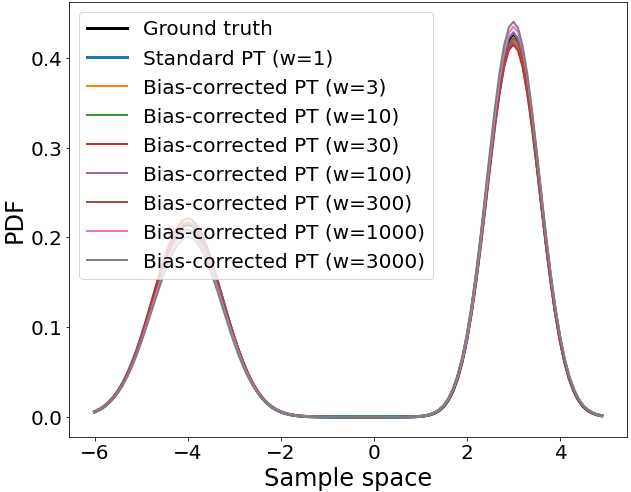}}
    \vskip -0.05in
  \caption{Simulations of a Gaussian mixture distribution based on different window size $W$ and energy variance $\sigma^2$. }
  \label{bias_with_without_correction}
  \vspace{-0.5em}
\end{figure*}

For energy estimators of a larger variance ($\sigma=6$), we run the naive algorithm more iterations (20,000,000) since the average swap rate is only 0.008\% when $W=1$. However, such a small swap rate also yields a less biased geometrically-stopped swaps as implied in Figure \ref{pt_sd_6}. We also study the optimal window-wise correction $\lambda_W$ with respect to the energy variance $\sigma^2$ and window size $W$. We find in Figure \ref{with_additional_correction} that $\lambda_W$ decreases uniformly as we apply a larger $\sigma$. This shows a potential to ignore the window-wise correction term $\lambda_W$ in big data without causing much bias because the variance $\sigma^2$ is known to be excessively large \citep{deng2020}. 

\begin{figure*}[!ht]
  \centering
\subfigure[$\sigma=6$]{\label{pt_sd_6}\includegraphics[width=4.2cm, height=4.2cm]{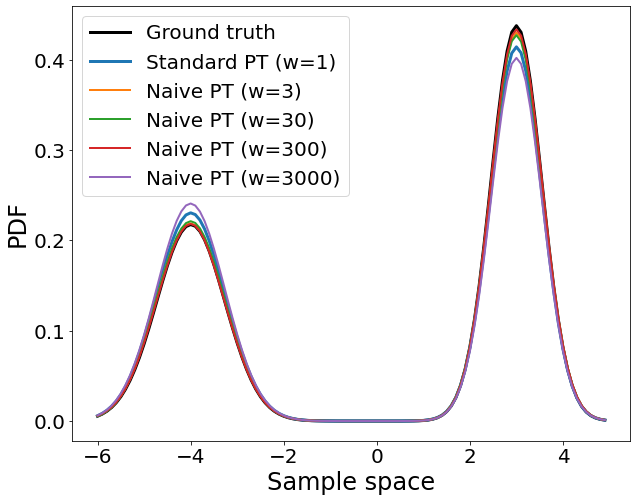}}
  \subfigure[Correction (window-wise)]{\label{with_additional_correction}\includegraphics[width=4.8cm, height=4.2cm]{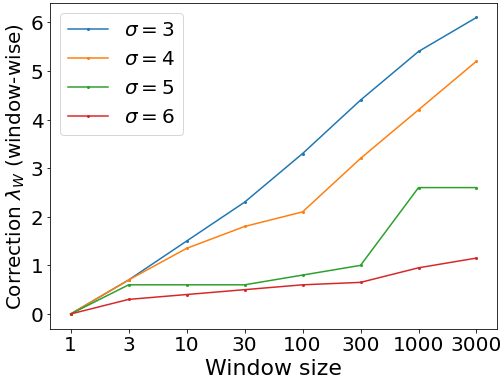}}
  \subfigure[Swap rate (window-wise)]{\label{with_additional_swap_rate}\includegraphics[width=4.8cm, height=4.2cm]{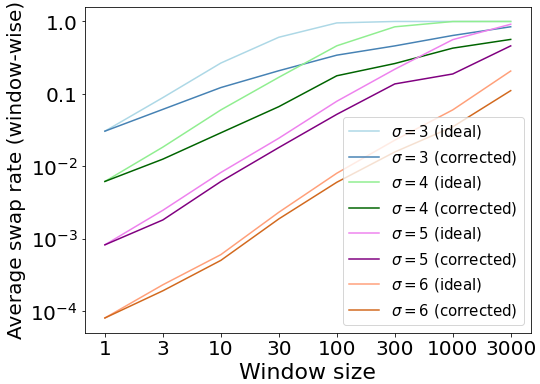}}
    
    \vskip -0.05in
  \caption{Relations between key criteria. In subfigure (b), the ideal swap rate means no correction is applied such that the rate goes to 1 geometrically fast as the window size increases.}
  \label{bias_with_without_correction_additional}
  \vspace{-0.5em}
\end{figure*}

We observe in Figure \ref{with_additional_swap_rate} that the average swap rate in each window can be significantly improved as we increase the window size. This is a promising alternative to improve the swap rate via generalized windows compared to including sufficiently many chains \citep{Syed_jrssb}. For example, given $\sigma=6$, the {corrected swap rate blows up from 0.008\% given $W=1$ to 11.1\% when $W=3000$, which is a \emph{1,400} times of increase of swap rates}, although the number of  windows is also reduced by 3,000 times. Recall that the {round trip time has a quadratic dependence on the swap rate $r$ when $r$ is not close to 1} (see Theorem 1 in \citep{Syed_jrssb}). \textbf{The empirical study shows a potential to approximately reduce the round trip time by a thousand ($1400^2/3000$) times!}. The generalized windows seem to have an averaging ability to reduce the variance of the noisy energy estimators and the improvement is more significant as we apply a larger variance $\sigma^2$.

For future works, we are interested in studying how to theoretically derive the window-wise correction $\lambda_W$ and understand more on the dependence of $\lambda_W$ with respect to $\sigma$ and $W$. We are also interested in obtaining a good estimate of $\lambda_W$ efficiently via stochastic approximation.

\end{document}